\documentclass[journal]{IEEEtran}
\PassOptionsToPackage{usenames,dvipsnames,table}{xcolor}
\usepackage{enumerate}
\usepackage[T1]{fontenc}
\usepackage{lmodern}
\usepackage{mathptmx}
\usepackage{smile}
\usepackage[protrusion=true, expansion=true, nopatch=footnote]{microtype}
\usepackage{float,adjustbox,cancel,multirow,framed}
\usepackage[table]{xcolor}
\usepackage{amsfonts,amsmath,amssymb,amsthm,url,xspace,mathtools,mathrsfs}
\usepackage{algorithmic,algorithm}
\usepackage{array}
\usepackage[caption=false,font=footnotesize]{subfig}
\usepackage{textcomp}
\usepackage{stfloats}
\usepackage{url}
\usepackage{verbatim}
\usepackage{graphicx}
\usepackage[nospace]{cite}
\usepackage{xspace}
\usepackage{booktabs}
\usepackage{bibunits}
\usepackage{refcount}
\usepackage[colorlinks=true,citecolor=blue,linkcolor=red,urlcolor=black]{hyperref}
\makeatletter
\renewcommand{\tablename}{Table}
\let\REALMmakecaption\@makecaption
\long\def\@makecaption#1#2{%
  \ifx\@captype\@IEEEtablestring
    \begingroup
      \normalfont\footnotesize
      \par\raggedright
      \@IEEEtabletopskipstrut
      #1.\nobreakspace\nobreakspace #2\par
      \addvspace{0.5\baselineskip}%
    \endgroup
    \@IEEEtablecaptionsepspace
  \else
    \REALMmakecaption{#1}{#2}%
  \fi}
\def\@extra@b@citeb{\@bibunitname}
\AtBeginDocument{%
  \renewcommand{\bibcite}[2]{%
    \global\@namedef{b@#1\@bibunitname}{%
      \hyper@@link[cite]{}{cite.#1\@bibunitname}{#2}}}%
}

\let\REALMmaketitle\maketitle
\let\REALMtitleblock\@maketitle
\newcommand{\makesupplementtitle}{%
  \let\@maketitle\REALMtitleblock
  \REALMmaketitle}
\makeatother
\usepackage{tikz}
\usepackage{verbatim,comment}
\usetikzlibrary{arrows,shapes}
\usepackage[bottom]{footmisc}
\usepackage{lscape,diagbox}

\ifx\counterwithout\undefined\usepackage{chngcntr}\fi
\counterwithout{equation}{section}
\definecolor{shadecolor}{RGB}{240,240,240}
\usepackage{math}
\allowdisplaybreaks[1]
\allowdisplaybreaks

\usepackage{alphalph}

\newcommand{\method}{REALM\xspace}

\begin{document}
\begin{bibunit}[IEEEtran]

\title{\method: Regime-Switching, Explainable, and Activation-Induced \\Linear Models}

\author{Xiaoran Cheng, Sen Na, and Jia Li,~\IEEEmembership{Fellow,~IEEE}
\thanks{X. Cheng and J. Li are with the Department of Statistics, The Pennsylvania State University, University Park, PA 16802 USA (e-mail: xjc5161@psu.edu; jol2@psu.edu).}
\thanks{S. Na is with the H. Milton Stewart School of Industrial and Systems Engineering, Georgia Institute of Technology, Atlanta, GA 30332 USA (e-mail: senna@gatech.edu).}
}

\markboth{}
{Cheng \MakeLowercase{\textit{et al.}}: \method: Regime-Switching, Explainable, and Activation-Induced Linear Models}

\maketitle

\begin{abstract}
\begingroup
\spaceskip=\fontdimen2\font plus \fontdimen3\font minus \fontdimen4\font\relax
Deep ReLU networks are piecewise-affine mappings that partition the input space into cells, each characterized by a distinct activation pattern. This structure motivates fitting a local linear model within each cell to preserve predictive accuracy while improving interpretability. The challenge is to identify regimes that are stable, data-adaptive, and easy to explain. We propose \method, a mixture of linear models whose regimes are induced by neural activation patterns. Because the number of activation cells in a deep neural network (DNN) can grow rapidly with depth, we first distill a deep teacher into a wide, shallow student network (WSSN), then binarize and cluster its hidden-layer activations to define the regimes and fit a linear model within each regime. Since the regimes are discovered from internal structure, the router does not carry the predictive burden. To make regime assignment interpretable, we train a multiclass logistic regression, the explanatory gate, to reproduce the regime assignments. The two-level structure is interpretable at both stages in terms of raw tabular or learned convolutional features: the gate identifies features that determine regime assignments, while the linear models identify features that drive predictions within each regime. We analyze an idealized setting that illustrates a trade-off between partition complexity and stability: as the number of regimes grows, finer partitions can improve approximation but may reduce regime-assignment stability. Experiments on tabular and image datasets show that \method achieves competitive predictive performance relative to other DNN-guided mixture surrogates and inherently interpretable models while producing stable regime-level explanations.
\par
\endgroup
\end{abstract}

\begin{IEEEkeywords}
Adaptive modeling, interpretable models, knowledge distillation, mixture of experts, ReLU networks.
\end{IEEEkeywords}

\section{Introduction}\label{sec:introduction}
DNNs are now standard tools in vision, language, and scientific computing, but their black-box nature remains a major barrier in settings that demand auditing, debugging, or accountability~\cite{Rudin2019Stop, Mumuni2025Explainable, Caruana2015Intelligible, Tabassi2023Artificial}. A common strategy is to replace a trained DNN with an interpretable surrogate, for example by distilling the DNN into a soft decision tree~\cite{Frosst2017Distilling}. However, a single global surrogate often struggles to capture the complex decision logic of a highly nonlinear model without sacrificing predictive accuracy. A more expressive alternative is to use a \emph{collection} of simple models, each responsible for a distinct regime of the input space.

This divide-and-conquer view motivates \emph{adaptive surrogate modeling}: a complex model can be approximated by partitioning the input space into regimes and fitting a simple local model within each regime. The central design question then becomes how to construct the partition: \emph{what criterion should define the regimes so that the surrogate remains accurate while both the~regime partitioning and the local models remain interpretable?}

Mixture-of-experts (MoE) models~\cite{Jacobs1991Adaptive, Jordan1993Hierarchical} provide a natural framework for \emph{adaptive surrogate modeling}. An MoE uses a router to assign each input to one or a few simple local experts. However, in classical MoE models, the router and experts are jointly optimized for predictive accuracy, so the resulting partition is not explicitly designed for interpretability~\cite{Jacobs1991Adaptive, Jordan1993Hierarchical}. Subsequent variants have introduced explicit input partitioning~\cite{Tang2002Input}, Bayesian non-parametric mixtures for explanation~\cite{Guo2018Explaining}, boosted mixtures of experts~\cite{Yuksel2012Twenty}, and sparsely gated layers for conditional computation~\cite{Shazeer2017Outrageously}. These methods expand the flexibility and scalability of MoE models, but they still do not provide a fully interpretable prediction path for a trained DNN.

The central challenge is that the router can absorb substantial predictive logic. Because the router and experts are trained jointly, the resulting decision logic can become complex~\cite{Badjie2026Decoupling, Schwab2025Interpretable}, making it difficult to disentangle whether a prediction is driven by the router, the experts, or their interaction. Large sparse MoE models can also suffer from router-induced training instability~\cite{Zoph2022STMoE}. Constraining either component to be inherently interpretable~\cite{Vasic2022MoET, Ismail2022Interpretable, Ghosh2023Dividing} often comes at the cost of predictive accuracy. Interpretability may further deteriorate under diffuse routing, where a single input activates many experts with similar weights, obscuring feature attribution. Moreover, when the partition is learned directly from the input, it may define a new predictive structure of its own rather than reflect the internal regime structure of the trained DNN~\cite{Jacobs1991Adaptive, Jordan1993Hierarchical, Ismail2022Interpretable, Shazeer2017Outrageously, Vasic2022MoET}. In this case, the router itself carries part of the predictive burden.

These limitations suggest that an interpretable DNN surrogate should not learn the partition solely through the inputs by optimizing prediction accuracy jointly with the experts. Instead, the regimes should align with the DNN's own internal geometric structure. ReLU networks provide a natural signal for this purpose. A trained ReLU network partitions the input space into polyhedral cells, within each of which the network reduces to an affine map~\cite{Hwang2020Un, Picchiotti2022Clustering, He2020ReLU, Sudjianto2020Unwrapping, Montufar2014Number, Raghu2017Expressive}. Thus, its activation patterns provide a natural basis for defining regimes that support locally linear modeling. Since ReLU is widely used in modern architectures~\cite{Dubey2022Activation}, this structure can be exploited to construct interpretable surrogates.

However, directly using the exact partition induced by activation patterns is impractical. The number of activation cells grows rapidly with network depth~\cite{Montufar2014Number, Raghu2017Expressive}, and a trained network may generate nearly instance-specific activation patterns~\cite{Picchiotti2022Clustering, Barua2026Mechanistic}. This limits interpretability because the resulting explanations become highly localized and difficult to compare across instances.

Several families of interpretable surrogates address parts of this challenge. IME~\cite{Ismail2022Interpretable} constructs a mixture in which both the experts and the assignment module are interpretable, while the route--interpret--repeat framework~\cite{Ghosh2023Dividing} iteratively decomposes a black-box model into interpretable models, each covering the samples it can explain. However, in both approaches, the partition is still learned through routing on the inputs, so the resulting regimes are determined by the routing objective rather than derived from the internal structure of the trained network. More closely tied to ReLU network geometry, ReLU Region Reasoning (Re3)~\cite{Barua2026Mechanistic} computes the exact per-sample affine map associated with each activation pattern, while MASALA~\cite{Anwar2024MASALA} fits a surrogate over an adaptively chosen neighborhood around each instance. Although both methods exploit local structure, neither coarsens the exact activation cells into a shared set of regimes. This suggests a natural direction for improving the interpretability of geometry-based surrogates: aggregate the exact activation cells into a small number of shared, interpretable regimes.

The methods most closely related to ours are DNN-guided mixture surrogates, particularly MLM (Mixture of Linear Models Co-supervised by DNNs)~\cite{Seo2022Mixture} and SEE-Net (Synced Explanation-Enhanced NN)~\cite{Seo2024Explainable}. Both methods cluster continuous hidden-layer representations of the DNN layer by layer, combine the resulting per-layer clusters through Cartesian products, and fit local linear models within the resulting regimes. Because these methods cluster hidden-layer values rather than activation patterns, inputs that share the same activation pattern may still be assigned to different regimes. SEE-Net further transforms the DNN into a pre-interpretable architecture, which can incur additional training cost. In contrast, our proposed \method uses the trained DNN only as a teacher for distillation and data augmentation, and directly clusters the binarized activation patterns of a WSSN. Consequently, its regimes are formed as unions of the WSSN's activation cells.

\subsection*{Overview of \method}
We propose \emph{Regime-Switching, Explainable, and Activation-Induced Linear Models (\method)}. Given a trained DNN teacher $f_{\mathrm{T}}$, \method first distills $f_{\mathrm{T}}$ into a WSSN $f_{\mathrm{S}}$ to match the teacher's soft predictions. This step is motivated by the fact that wide and shallow networks can faithfully mimic deep networks under knowledge distillation~\cite{Hinton2015Distilling, Bucilua2006Model, Ba2014Do}. As a ReLU network, the WSSN also partitions its input space into polyhedral activation cells within which it is affine. Its single hidden layer induces far fewer cells than the deep teacher~\cite{Montufar2014Number, Raghu2017Expressive, Serra2018Bounding}, and its activation patterns are lower-dimensional than the teacher's while remaining aligned with the teacher network. The reduced dimension stabilizes regime discovery since distance-based clustering becomes unreliable as the dimension grows~\cite{Aggarwal2001Surprising, Verleysen2005Curse}.

\method then clusters the WSSN's binarized hidden activation patterns into regimes and fits a regularized local linear expert within each regime. The final prediction is a weighted average of the expert outputs, with weights given by the posterior from the clustering step. To make interpretation more faithful, we employ an inverse temperature that sharpens the posterior toward hard routing, so that each prediction relies on only a few experts. To make regime assignment easy to interpret, we additionally train a regularized multiclass logistic regression, called the \emph{explanatory gate}, that predicts the regime assignment label directly from the input representation. In the basic \method formulation used for the main results in Section~\ref{sec:experiments}, the explanatory gate serves only to explain regime assignments rather than to determine routing. In Section~\ref{sec:experiments:ablation:gate_routing}, we also evaluate a variant, \method-EG, that uses the explanatory gate itself for routing and achieves performance comparable to that obtained by directly clustering the WSSN's binarized activation patterns.

\method achieves competitive predictive performance relative to SEE-Net across six datasets and to MLM and common inherently interpretable models on tabular data, while producing stable regime-specific explanations through its linear experts and explanatory gate.

\subsection*{Main Contributions}
\begin{itemize}
\item \textbf{Interpretable partitioning from ReLU networks' internal structure.}
We propose \method, an interpretable MoE framework that distills a trained DNN into a WSSN and clusters its low-dimensional, teacher-aligned ReLU activation patterns into a small set of regimes, yielding dual interpretability with respect to raw tabular or learned convolutional features: 1) local feature attribution within each regime; 2) an explanatory gate that attributes each regime assignment to these features.

\item \textbf{An idealized analysis of the complexity--stability trade-off.}
In an idealized setting, we analyze when regime assignments stay stable under input perturbations, and, assuming sparse within-regime coefficients, when LASSO recovers the correct support. This analysis reveals a trade-off between partition complexity and stability: finer partitions can improve approximation but may reduce regime-assignment stability.

\item \textbf{Practical pipeline with cross-modality validation.}
We benchmark \method against SEE-Net on both tabular and image tasks, and against MLM and other interpretable models on tabular tasks, demonstrating competitive accuracy with stable, regime-specific interpretations. Ablations show that 1) the router does not carry the predictive burden; 2) the explanatory gate can serve as the routing signal with little performance loss; 3) the results for different \(K\) values are mostly consistent with the complexity--stability trade-off.
\end{itemize}

The remainder of this paper is organized as follows. Section~\ref{sec:preliminary} reviews the piecewise-affine structure of ReLU networks. Section~\ref{sec:method} introduces \method and its learning algorithm. Section~\ref{sec:theory} illustrates a trade-off between partition complexity and stability by analyzing an idealized setting. Section~\ref{sec:experiments} reports experiments and ablation studies, and Section~\ref{sec:conclusion} concludes the paper.

\section{Preliminaries: Piecewise-Affine Structure of ReLU DNNs}\label{sec:preliminary}
Although a ReLU network is globally nonlinear, once the active or inactive status of every hidden unit is fixed, each activation pattern indexes a cell of the input space on which the network reduces to a single affine map~\cite{Hwang2020Un,Sudjianto2020Unwrapping,Picchiotti2022Clustering,Barua2026Mechanistic}. Thus, instead of learning an arbitrary partition through joint routing--prediction optimization, \method derives interpretable partitioning using activation patterns to discover the latent regimes and fits simple local models on the resulting regimes.

Consider a ReLU network with input $x\in\mathcal{X}=\mathbb{R}^p$, hidden layers $\ell=1,\dots,L$, and output dimension $O$:
\begin{equation*}
\begin{aligned}
    h^{(0)}(x) &= x, \\
    a^{(\ell)}(x) &= W^{(\ell)}h^{(\ell-1)}(x)+b^{(\ell)}, \\
    h^{(\ell)}(x) &= \rho\bigl(a^{(\ell)}(x)\bigr),
\end{aligned}
\end{equation*}
where $\rho(t)=\max\{0,t\}$. The output layer is
\begin{equation*}
    f(x)=W^{(L+1)}h^{(L)}(x)+b^{(L+1)}.
\end{equation*}
Here $n_0=p$ denotes the input dimension, $n_\ell$ is the width of hidden layer $\ell$, and $n_{L+1}=O$ is the output dimension. Thus, $h^{(\ell-1)}(x)\in\mathbb{R}^{n_{\ell-1}}$, $a^{(\ell)}(x),h^{(\ell)}(x),b^{(\ell)}\in\mathbb{R}^{n_\ell}$, $W^{(\ell)}\in\mathbb{R}^{n_\ell\times n_{\ell-1}}$, and $f(x),b^{(L+1)}\in\mathbb{R}^{O}$, $W^{(L+1)}\in\mathbb{R}^{O\times n_L}$.
Each hidden unit is either \emph{on} ($a^{(\ell)}_i(x) > 0$) or \emph{off} ($a^{(\ell)}_i(x) \le 0$). To track these states across the network, we use the \emph{activation pattern}~\cite{Raghu2017Expressive,Sudjianto2020Unwrapping}.

\begin{definition}[Activation Pattern {\cite{Sudjianto2020Unwrapping}}]\label{def:activation_pattern}
For each hidden layer, define the activation mask
\begin{equation}\label{eq:1}
\begin{aligned}
    s^{(\ell)}(x) = \mathbf{1}\bigl\{a^{(\ell)}(x)>0\bigr\}, \;
    D^{(\ell)}(x) = \operatorname{diag}\bigl(s^{(\ell)}(x)\bigr),
\end{aligned}
\end{equation}
where $s^{(\ell)}(x)\in\{0,1\}^{n_\ell}$ records which units in layer $\ell$ are active and $D^{(\ell)}(x)\in\mathbb{R}^{n_\ell\times n_\ell}$ is the corresponding diagonal selector matrix. Each ReLU layer can then be written as
\begin{equation*}
\begin{split}
    h^{(\ell)}(x) &= D^{(\ell)}(x)\,a^{(\ell)}(x) \\
    &= D^{(\ell)}(x)\bigl(W^{(\ell)}h^{(\ell-1)}(x)+b^{(\ell)}\bigr),
\end{split}
\end{equation*}
so the only source of nonlinearity is the variation of the masks $D^{(\ell)}(x)$. Once they are fixed, every layer becomes an affine map~\cite{Hwang2020Un,Sudjianto2020Unwrapping,Picchiotti2022Clustering,Barua2026Mechanistic}. Let
\begin{equation*}
s(x)=\bigl(s^{(1)}(x),\dots,s^{(L)}(x)\bigr)\in\{0,1\}^{\sum_{\ell=1}^L n_\ell}
\end{equation*}
denote the full activation pattern, and let
\begin{equation*}
\mathcal{S}_{\mathrm{exp}}\triangleq\{s(x):x\in\mathcal{X}\}
\end{equation*}
be the set of distinct patterns realized on \(\mathcal{X}\)~\cite{Sudjianto2020Unwrapping,Barua2026Mechanistic}.
\end{definition}

Since forward propagation assigns every input a unique pattern $s(x)$, each pattern $s$ selects an \emph{activation cell} $\mathcal{R}_s=\{x\in\mathcal{X}:s(x)=s\}$, an intersection of finitely many half-spaces and hence a convex polyhedral region. Since distinct patterns give disjoint cells, they partition the input space such that $\mathcal{X}=\bigsqcup_{s\in\mathcal{S}_{\mathrm{exp}}}\mathcal{R}_s$~\cite{Sudjianto2020Unwrapping,Montufar2014Number,Barua2026Mechanistic}. However, this exact partition $\mathcal{R}_s$ can be too fine-grained to support interpretable regime assignment, since the number of cells can grow rapidly with network depth~\cite{Montufar2014Number}. An upper bound on the number of cells is $\mathcal{O}(n_{\max}^{pL})$, where $n_{\max}=\max_{1\le \ell\le L} n_\ell$ denotes the maximum hidden-layer width~\cite{Raghu2017Expressive}. This fine partition may produce many tiny or even singleton cells~\cite{Montufar2014Number,Sudjianto2020Unwrapping,Picchiotti2022Clustering,Barua2026Mechanistic}. Practical local surrogate modeling therefore requires collapsing these cells into a small number of stable, interpretable regimes, each a union of activation cells on which the network is approximately affine. This motivates the activation-induced clustering of \method developed in Section~\ref{sec:method}.

\section{\method: Regime-Switching, Explainable, and Activation-Induced Linear Models}\label{sec:method}

\method has three stages: 1) distilling a DNN teacher $f_{\mathrm{T}}$ into a WSSN $f_{\mathrm{S}}$, which has teacher-aligned but lower-dimensional activation patterns; 2) discovering $K$ latent regimes by clustering the WSSN's activation patterns; 3) fitting a mixture of local experts together with a separate explanatory gate, making both feature attributions and regime assignments interpretable. A schematic illustration of \method is provided in Fig.~\ref{fig:actir}.

\begin{figure*}[!t]
\centering
\includegraphics[width=0.8\textwidth]{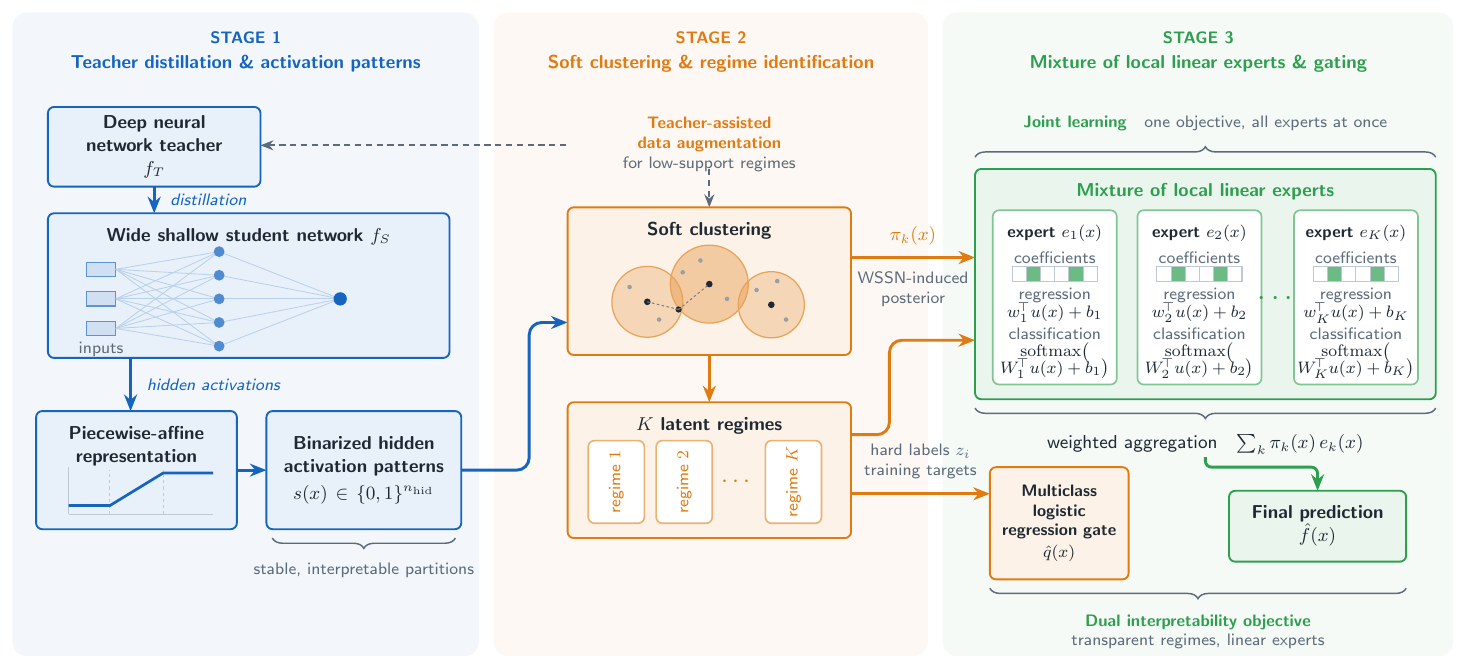}
\caption{Overview of the \method pipeline. Stage~1 distills a deep teacher $f_{\mathrm{T}}$ into the WSSN $f_{\mathrm{S}}$. Stage~2 clusters the WSSN's binarized activation patterns into $K$ regimes, giving the routing posterior $\pi_k(x)$. Stage~3 fits a mixture of local linear experts under $\pi_k(x)$ to produce the prediction $\hat f(x)$, and trains the explanatory gate $\hat q(x)$ to explain the assignments.}
\label{fig:actir}
\end{figure*}

\subsection*{\textbf{Stage 1: Teacher-to-Student Distillation}}

Let $u(x)$ denote the input to the student WSSN:
\begin{equation*}
    u(x) =
    \begin{cases}
        x, & \text{for tabular data } x \in \mathbb{R}^p, \\
        \varphi(x), & \text{for image data } x \in \mathbb{R}^{H \times W \times C}.
    \end{cases}
\end{equation*}
Here $\varphi(x)\in\mathbb{R}^p$ denotes the output of the WSSN's convolutional feature extractor. Throughout, we write $u(x)\in\mathbb{R}^p$. The extractor $\varphi$ is trained jointly with the WSSN during distillation and is frozen afterwards. Therefore, regime discovery, the local experts, and the explanatory gate all operate on the fixed representation $u(x)$. Note that for image data, local experts and the explanatory gate are linear in $u(x)$ rather than in the raw pixels.

We then pass $u(x)$ through the one-hidden-layer ReLU head, with pre-activation $a(x)=W_1 u(x)+b_1$ and hidden activation $h(x)=\rho(a(x))$, giving
\begin{equation*}
    f_{\mathrm{S}}(x) = W_2 h(x)+b_2.
\end{equation*}

We detail the construction of $\varphi(x)$ for each image dataset in the supplementary material. For both tabular and image data, the WSSN has a single hidden layer (excluding the feature-extraction layers for image data), so it induces only a manageable number of activation cells. For classification, $f_{\mathrm{S}}$ is trained by knowledge distillation from $f_{\mathrm{T}}$:
\begin{equation}\label{eq:2}
\begin{aligned}
\mathcal{L}_{\mathrm{KD}}^{\mathrm{cls}}
&=
\alpha T^2\,\mathcal{L}_{\mathrm{KL}}
\left(
\mathrm{softmax}\left(\frac{f_{\mathrm{T}}(x)}{T}\right)
\,\middle\|\,
\mathrm{softmax}\left(\frac{f_{\mathrm{S}}(x)}{T}\right)
\right)\\
&\quad +
(1-\alpha)\,
\mathcal{L}_{\mathrm{CE}}\bigl(f_{\mathrm{S}}(x),y\bigr),
\end{aligned}
\end{equation}
where $\mathcal{L}_{\mathrm{KL}}$ is the Kullback--Leibler divergence between the WSSN and the teacher distributions, and $\mathcal{L}_{\mathrm{CE}}$ is the cross-entropy against the label $y$. The temperature $T>0$ controls the softness of the softmax: $T>1$ smooths the output distribution and facilitates transfer of dark knowledge, while $T\to 0^+$ sharpens it toward a one-hot vector. The weight $\alpha\in[0,1]$ balances distillation and label-matching. For regression, we apply the mean-squared error:
\begin{equation*}
\mathcal{L}_{\mathrm{KD}}^{\mathrm{reg}}
=
\alpha\,\|f_{\mathrm{S}}(x)-f_{\mathrm{T}}(x)\|_2^2
+
(1-\alpha)\,\|f_{\mathrm{S}}(x)-y\|_2^2.
\end{equation*}

\subsection*{\textbf{Stage 2: Soft Clustering for Latent Regime Discovery}}
We freeze the WSSN after distillation, so its activation pattern $s(x)$ is fixed as defined in Definition~\ref{def:activation_pattern}. Therefore, the WSSN is an affine function of its input $u(x)$ within each exact activation cell. We then discover $K$ regimes by soft clustering with learnable centroids $\mu_1,\dots,\mu_K\in\mathbb{R}^{n_{\mathrm{hid}}}$, where $n_{\mathrm{hid}}$ is the hidden width of the WSSN, and the weighted distance:
\begin{equation*}
    D_k(x)
    =
    \frac{1}{n_{\mathrm{hid}}}\sum_{j=1}^{n_{\mathrm{hid}}}
    \left[1+(\omega-1)s_j(x)\right]\left(s_j(x)-\mu_{k,j}\right)^2,
\end{equation*}
where $\omega>1$ upweights mismatches on active units (weight $\omega$ when $s_j(x)=1$ and weight $1$ otherwise). The WSSN-induced posterior uses an inverse temperature $\beta_{\mathrm{clus}} > 0$:
\begin{equation}\label{eq:3}
    \pi_k(x)
    =
    \frac{\exp\{-\beta_{\mathrm{clus}}D_k(x)\}}
    {\sum_{j=1}^K \exp\{-\beta_{\mathrm{clus}}D_j(x)\}}.
\end{equation}
A small $\beta_{\mathrm{clus}}$ produces flatter regime assignments, whereas a large $\beta_{\mathrm{clus}}$ concentrates the assignments on a few regimes. Near-one-hot assignment enhances interpretability, since each prediction only draws from a few regimes and its feature attribution localizes to a small set of experts. To avoid imposing near-one-hot assignments at the initialization of the clustering algorithm, we progressively increase the inverse temperature according to
\begin{equation}\label{eq:4}
    \beta_{\mathrm{clus}}^{(t+1)}
    =
    \min\left\{\eta\,\beta_{\mathrm{clus}}^{(t)},\,\beta_{\max}\right\},
\end{equation}
where $t$ indexes clustering epochs, $\eta>1$ is the sharpening factor, and $\beta_{\max}$ is the upper bound on the inverse temperature. For a fixed $K$, the centroids are learned by minimizing
\begin{equation*}
    \mathcal{L}_{\mathrm{clus}}
    =
    \frac{1}{n}\sum_{i=1}^n\sum_{k=1}^K \pi_k(x_i)D_k(x_i),
\end{equation*}
where centroids are initialized from $K$ randomly chosen training activation patterns. We update the centroids by differentiating through both $D_k(x_i)$ and $\pi_k(x_i)$.

Define the regime assignment as $z(x)=\argmax_k \pi_k(x)$. Thus, each \emph{regime} $\mathcal{C}_k=\{x: z(x)=k\}$ is a union of activation cells on which the WSSN is piecewise-affine. Since clustering merges cells with similar activation patterns, the WSSN remains approximately affine within each regime, as supported empirically by the strong performance in Section~\ref{sec:experiments}. Because the WSSN is frozen, clustering merely summarizes how the distilled WSSN organizes the input space rather than absorbing the predictive burden into the router. This is shown in the ablation study in Section~\ref{sec:experiments:ablation:heavy_lifting}.

\subsection*{\textbf{Stage 3: Fixed-Router Mixture of Regularized Local Experts}}
\medskip
\noindent\textbf{Local Experts.}
With the posterior $\pi_k(x)$ fixed, we train a mixture of local linear experts. Each regime-$k$ expert is:
\begin{equation}\label{eq:5}
    e_k(x)=
    \begin{cases}
        \mathrm{softmax}(W_k^\top u(x)+b_k), & \text{classification}, \\
        w_k^\top u(x)+b_k, & \text{regression}.
    \end{cases}
\end{equation}
For $O$-class classification, $W_k\in\mathbb{R}^{p\times O}$ and $b_k\in\mathbb{R}^{O}$, whereas for regression $w_k\in\mathbb{R}^{p}$ and $b_k\in\mathbb{R}$. The final prediction aggregates the experts under the posterior $\pi_k(x)$:
\begin{equation}\label{eq:6}
    \hat f(x)=\sum_{k=1}^{K}\pi_k(x)\,e_k(x).
\end{equation}
To optimize the expert parameters, let $\mathcal{L}_y$ be the predictive loss of the fixed-router mixture, with
\begin{equation*}
\mathcal{L}_y=
\begin{cases}
\frac{1}{n}\sum_{i=1}^n\mathcal{L}_{\mathrm{CE}}\bigl(\hat f(x_i),y_i\bigr), & \text{classification},\\
\frac{1}{n}\sum_{i=1}^n\mathcal{L}_{\mathrm{MSE}}\bigl(\hat f(x_i),y_i\bigr), & \text{regression}.
\end{cases}
\end{equation*}
We estimate the expert parameters through
\begin{equation}\label{eq:7}
\min_{\theta_y}\quad
\mathcal{L}_y(\theta_y) + \lambda_y \Omega_y(\theta_y),
\end{equation}
where $\theta_y$ collects the local linear experts' parameters in Eq.~\eqref{eq:5} across all $K$ regimes,
\begin{equation*}
    \theta_y=
    \begin{cases}
        \{(w_k,b_k)\}_{k=1}^{K}, & \text{regression},\\
        \{(W_k,b_k)\}_{k=1}^{K}, & \text{classification},
    \end{cases}
\end{equation*}
and $\Omega_y$ penalizes only the coefficients,
\begin{equation*}
\Omega_y(\theta_y)=
    \begin{cases}
        \displaystyle\sum_{k=1}^{K}\sum_{j=1}^{p}|w_{k,j}|, & \text{regression},\\[6pt]
        \displaystyle\sum_{k=1}^{K}\sum_{j=1}^{p}\sum_{o=1}^{O}|(W_k)_{jo}|, & \text{classification}.
    \end{cases}
\end{equation*}

\medskip

\noindent\textbf{Explanatory Gate.}
The posterior $\pi_k(x)$ routes inputs but does not explain \emph{why} an input is assigned to a regime. To make the assignment rule interpretable, we train a separate multiclass logistic regression to predict the hard regime label $z_i=\argmax_k \pi_k(x_i)$:
\begin{equation}\label{eq:8}
\hat q(x)=\mathrm{softmax}(\Gamma u(x)+c),
\end{equation}
where $\Gamma=[\gamma_1,\dots,\gamma_K]^\top \in \mathbb{R}^{K\times p}$, $\gamma_k\in\mathbb{R}^{p}$, and $c\in\mathbb{R}^{K}$ are trainable gate parameters. We collect the gate parameters as $\theta_g=\{\Gamma,c\}$. Since a common shift of all $\gamma_k$ leaves $\hat q$ unchanged, coefficients are read as contrasts between regimes. The explanatory gate is trained using the cross-entropy loss
\[
\mathcal{L}_g(\theta_g)
=
\frac{1}{n}\sum_{i=1}^n
\mathcal{L}_{\mathrm{CE}}\bigl(\hat q(x_i),z_i\bigr).
\]
We estimate the gate parameters through
\begin{equation}\label{eq:9}
\min_{\theta_g}\quad
\mathcal{L}_g(\theta_g) + \lambda_g \|\Gamma\|_1,
\end{equation}
where $\|\Gamma\|_1=\sum_{k=1}^{K}\sum_{j=1}^{p}|\Gamma_{kj}|$ denotes the entrywise $\ell_1$ norm.

Since both the experts and the explanatory gate are linear models, \method yields dual interpretability: 1) the expert coefficients explain \emph{how} each prediction is formed; 2) the gate coefficients explain \emph{why} a sample is assigned to its regime.

\medskip

\noindent\textbf{Teacher-Assisted Data Augmentation.}
To stabilize expert estimation in low-support regimes, we augment small regimes with teacher-labeled synthetic samples, giving the DNN a second role as a source of supervision beyond distillation. Let $\mathcal{I}_k=\{i:z(x_i)=k\}$ denote the set of training samples assigned to regime \(k\), and let $n_k=|\mathcal{I}_k|$ be the number of original samples in regime \(k\). If $n_k<\tau$ for a predefined minimum regime size $\tau$, we generate $n_k^{\mathrm{aug}}=\tau-n_k$ synthetic samples. We split the variables into a continuous block $\mathcal{J}_{\mathrm{cont}}$ and a categorical (one-hot) block $\mathcal{J}_{\mathrm{cat}}$:
\begin{equation}\label{eq:10}
\begin{aligned}
x^{\mathrm{aug}}_{k,j}[\mathcal{J}_{\mathrm{cont}}] &= \bar{\mu}_k + \epsilon\,\sigma_k \odot \xi_{j}, \quad \xi_{j}\sim\mathcal{N}(0,I_{|\mathcal{J}_{\mathrm{cont}}|}),\\
x^{\mathrm{aug}}_{k,j}[\mathcal{J}_{\mathrm{cat}}] &= x_{i_j}[\mathcal{J}_{\mathrm{cat}}], \quad i_j\sim\mathrm{Unif}(\mathcal{I}_k),
\end{aligned}
\end{equation}
where $\bar{\mu}_k,\sigma_k\in\mathbb{R}^{|\mathcal{J}_{\mathrm{cont}}|}$ are the within-regime mean and standard deviation of the continuous variables, and $\epsilon>0$ controls the variance magnitude. Continuous variables are thus simulated around the regime mean, while categorical variables are copied from a uniformly sampled real member of the regime to preserve valid encodings. Each synthetic sample is labeled by the teacher,
\begin{equation*}
    y^{\mathrm{aug}}_{k,j}=
    \begin{cases}
        \argmax_{c} f_{\mathrm{T}}(x^{\mathrm{aug}}_{k,j})_c, & \text{classification},\\[1mm]
        f_{\mathrm{T}}(x^{\mathrm{aug}}_{k,j}), & \text{regression},
    \end{cases}
\end{equation*}
and appended to the training set. The experts and explanatory gate are then estimated through the objectives in Eqs.~\eqref{eq:7} and~\eqref{eq:9}, respectively, with the augmented samples routed by the same posterior $\pi_k$. Samples generated for regime $k$ may be assigned to another regime. Regimes with $n_k=0$ are skipped. We augment only the tabular datasets because every regime exceeds the minimum size $\tau$ on the image datasets.

\medskip

\noindent\textbf{\method-EG.}
In \method, predictions are weighted by the WSSN-induced posterior $\pi_k(x)$ in Eq.~\eqref{eq:3}. The explanatory gate $\hat q(x)$ in Eq.~\eqref{eq:8} explains these assignments post hoc but never routes inputs in the main results in Section~\ref{sec:experiments}. Promoting the gate to the router yields a variant we call \method-EG (EG for explanatory gate). \method-EG is then a \emph{two-layer linear model} on the representation $u(x)$: the gate reassigns every sample,
\begin{equation*}
z_i^{g}=\argmax_k \hat q_k(x_i),
\qquad
\mathcal{I}_k^{g}=\{i: z_i^{g}=k\},
\end{equation*}
and predictions are mixed by the gate posterior,
\begin{equation*}
\hat f_{\mathrm{EG}}(x)
=\sum_{k=1}^{K}\hat q_k(x)\,e_k^{g}(x),
\end{equation*}
which is Eq.~\eqref{eq:6} with $\hat q_k$ in place of $\pi_k$. The local experts $e_k^{g}$ are then refitted on all training samples by solving the same objective in Eq.~\eqref{eq:7} with $\hat f(x_i)$ replaced by $\hat f_{\mathrm{EG}}(x_i)$. The hard labels $z_i^{g}$ thus define the regimes $\mathcal{I}_k^{g}$ of \method-EG, whereas the refit uses the same soft mixture as \method, so $\hat q_k(x)$ weights the experts both during training and at test time. Both the assignment rule and the within-regime predictions of \method-EG are then directly readable from linear coefficients. The ablation study in Section~\ref{sec:experiments:ablation:gate_routing} shows that \method-EG performs competitively with \method.

\section{Theoretical Analysis of Structural Stability and Support Recovery}\label{sec:theory}

We analyze the joint reliability of regime assignment and sparse support recovery in \method under an idealized setting where the data follow a latent piecewise sparse linear model whose regimes are separated by a linear winner-take-all regime selector. Here $x$ represents $u(x)$, the winner-take-all rule is a linear abstraction of regime assignment, and $\lambda$ controls within-regime sparsity. The setting is idealized in three ways: 1)~the regimes are treated as known; 2)~the model is exactly linear within each of them, whereas \method discovers regimes by clustering activation patterns and is only approximately linear inside each regime; 3)~the true coefficients are sparse, whereas the fitted experts need not be. We allow the test-time regime selector to be learned, with error bounded by $\eta_n$, while retaining true regime membership for expert training. A recurring theme of the analysis is to make explicit how the number of regimes $K$ affects both stability and overall support recovery. Note that this analysis also assumes a regression model with additive Gaussian noise.

\subsection{Model Setup}\label{sec:theory:model_setup}

Let $x \in \mathcal X=\mathbb{R}^p$ denote the input signal and suppose the data are generated from a latent piecewise sparse linear model:
\begin{equation*}
y = x^\top \beta_{k^*}^* + \varepsilon,
\end{equation*}
where $k^* = g^*(x)$ denotes the true regime, $\beta_k^* \in \mathbb{R}^p$ are sparse vectors with support
\begin{equation*}
S_k = \operatorname{supp}(\beta_k^*), \quad |S_k| = s_k,
\end{equation*}
and $\varepsilon \sim \mathcal{N}(0,\sigma^2)$.

The true regime assignment is determined by a linear regime selector $g^*$ with gating vectors $\{v_k^*\}_{k=1}^{K}\subset\mathbb{R}^p$, specified in Assumptions~\ref{ass:gating_setup} and~\ref{ass:gate_norm}. For simplicity, the idealized selector omits intercepts.

Define the pairwise margin:
\begin{equation}\label{eq:11}
m(x) = \min_{j \neq k^*} (v_{k^*}^* - v_j^*)^\top x.
\end{equation}
We analyze stability at inputs with $m(x)>0$. This margin quantifies how much the winning regime's gating score exceeds that of its closest competitor.

\subsection{Stability of Regime Assignment}\label{sec:theory:stability}

We study how reliably the true regime selector preserves its decision under additive perturbations of a fixed input signal. Throughout this subsection we treat $x$ as a deterministic signal and take all probabilities with respect to the perturbation $\delta$ alone. For the probabilistic bounds, we assume $K\ge2$ and $\sigma_g>0$.

\begin{assumption}[Linear Gating, Deterministic Signal]\label{ass:gating_setup}
The true regime selector is the linear winner-take-all rule
\begin{equation}\label{eq:12}
g^*(x) = \argmax_{k \in \{1,\dots,K\}} v_k^{*\top} x,
\end{equation}
with gating vectors $\{v_k^*\}_{k=1}^K$ fixed.
\end{assumption}

\begin{assumption}[Additive Isotropic Perturbation]\label{ass:additive_noise}
We observe $x+\delta$ where $\delta\sim\mathcal{N}(0,\sigma_g^2 I_p)$ is independent of $x$.
\end{assumption}

\begin{proposition}[Deterministic Stability (fixed $x$)]\label{prop:det_stability}
Suppose Assumption~\ref{ass:gating_setup} holds. Fix $x$ with $k^*=g^*(x)$ and margin $m(x)>0$. For any additive perturbation $\delta\in\mathbb{R}^p$, if
\begin{equation*}
\|\delta\|_2 < \frac{m(x)}{\max_{j\neq k^*}\|v_{k^*}^*-v_j^*\|_2},
\end{equation*}
then $g^*(x+\delta)=g^*(x)$.
\end{proposition}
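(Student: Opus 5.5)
The plan is to show directly that the winning regime $k^*$ keeps a strictly positive pairwise margin after the perturbation. By Assumption~\ref{ass:gating_setup}, $g^*(x+\delta)=k^*$ as soon as $(v_{k^*}^*-v_j^*)^\top(x+\delta)>0$ for every $j\neq k^*$. This inequality gives a strict maximizer, so no tie-breaking convention for the $\argmax$ in Eq.~\eqref{eq:12} is needed.

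\textbf{Step 1: decompose the perturbed gap.} Fix $j\neq k^*$ and write $d_j=v_{k^*}^*-v_j^*$. Linearity of the score gives
\[
d_j^\top(x+\delta)=d_j^\top x+d_j^\top\delta .
\]
By the definition of the margin in Eq.~\eqref{eq:11}, the first term is at least $m(x)$.

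\textbf{Step 2: bound the perturbation term.} Cauchy--Schwarz gives
\[
d_j^\top\delta\ge-\|d_j\|_2\|\delta\|_2\ge -M\|\delta\|_2,
\qquad M=\max_{j\neq k^*}\|d_j\|_2 .
\]

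\textbf{Step 3: combine.} Together the two steps give $d_j^\top(x+\delta)\ge m(x)-M\|\delta\|_2$. We need $M>0$ so that the hypothesis is meaningful. If $M=0$, all gating vectors would coincide, which would force $m(x)=0$ and contradict $m(x)>0$. With $M>0$, the hypothesis $\|\delta\|_2<m(x)/M$ gives $M\|\delta\|_2<m(x)$, so $d_j^\top(x+\delta)>0$ for every $j\neq k^*$. Hence $k^*$ is the unique maximizer of $v_k^{*\top}(x+\delta)$, that is, $g^*(x+\delta)=g^*(x)$.

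The argument is elementary, and the only point needing care is Step 3. There I must check that $M>0$ so the bound is well defined, and that the inequality stays strict so that the maximizer is unique.
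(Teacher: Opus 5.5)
Your proposal is correct and follows the paper's proof: both lower-bound each gap $(v_{k^*}^*-v_j^*)^\top(x+\delta)$ by $m(x)-\max_{j\neq k^*}\|v_{k^*}^*-v_j^*\|_2\,\|\delta\|_2>0$ via Cauchy--Schwarz and conclude that $k^*$ remains the winner. Your extra checks are harmless refinements the paper leaves implicit: that $M>0$ (otherwise $m(x)=0$) and that the strict inequality makes the $\argmax$ unique.
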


\begin{proof}
For every $j\neq k^*$, the Cauchy--Schwarz inequality gives
\begin{equation*}
\begin{aligned}
&(v_{k^*}^*-v_j^*)^\top(x+\delta) \\
&\qquad\ge m(x)-\|v_{k^*}^*-v_j^*\|_2\,\|\delta\|_2>0.
\end{aligned}
\end{equation*}
Hence $g^*(x+\delta)=k^*$.
\end{proof}

We now strengthen this deterministic result to a probabilistic stability bound under Gaussian perturbations. To make the dependence on the number of regimes $K$ explicit, we additionally assume bounded gating energy.

\begin{assumption}[Bounded Gating Energy]\label{ass:gate_norm}
$\|v_k^*\|_2 \le B_v$ for all $k\in\{1,\dots,K\}$.
\end{assumption}

Fix $x$ and let $k^*=g^*(x)$. For each $j\neq k^*$, define the gating-vector difference $a_j\triangleq v_j^*-v_{k^*}^*$. Define the Gaussian-max statistic (random in $\delta$) as $M_{k^*}(\delta) \triangleq \max_{j\neq k^*} a_j^\top \delta$. A misassignment occurs only if some competitor's perturbed gain exceeds the margin:
\begin{equation*}
\{g^*(x+\delta)\neq k^*\}\subseteq \Big\{M_{k^*}(\delta) \ge m(x)\Big\}.
\end{equation*}

The following result bounds this misassignment probability via Gaussian-max concentration~\cite{Vershynin2018High}.

\begin{proposition}[Probabilistic Stability Bound via Gaussian-Max Concentration]
\label{prop:sharp_stability}
Under Assumptions~\ref{ass:gating_setup}--\ref{ass:gate_norm}, the misassignment probability (with respect to $\delta$) satisfies
\begin{equation}\label{eq:13}
\mathbb{P}_\delta\!\big(g^*(x+\delta)\neq k^* \mid x\big)
\le
\exp\!\left(
-\frac{\big(m(x)-\mathbb{E}_\delta M_{k^*}(\delta)\big)_+^2}
{2\,\sigma_g^2 \max_{j\neq k^*}\|a_j\|_2^2}
\right),
\end{equation}
where $(u)_+ \triangleq \max\{u,0\}$ and
$\mathbb{E}_\delta$ denotes expectation with respect to
$\delta\sim\mathcal{N}(0,\sigma_g^2 I_p)$.

Moreover, using
\begin{equation*}
\mathbb{E}_\delta M_{k^*}(\delta)
\le
\sigma_g \max_{j\neq k^*}\|a_j\|_2 \sqrt{2\log(K-1)},
\end{equation*}
and $\|a_j\|_2 \le 2B_v$, we obtain the explicit bound
\begin{multline}\label{eq:14}
\mathbb{P}_\delta\!\big(g^*(x+\delta)\neq k^* \mid x\big)\\
\le
\exp\!\left(
-\frac{1}{2}
\Big(
\frac{m(x)}{2\sigma_g B_v}
-
\sqrt{2\log(K\!-\!1)}
\Big)_+^2
\right).
\end{multline}
For any prescribed margin threshold $m_{\min}>0$, Eq.~\eqref{eq:14} holds uniformly for all $x\in\mathcal X$ satisfying $m(x)\ge m_{\min}$, with $m(x)$ replaced by $m_{\min}$, since the bound is decreasing in $m(x)$.
\end{proposition}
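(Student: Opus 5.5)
The plan is to view $M_{k^*}(\delta)=\max_{j\neq k^*}a_j^\top\delta$ as a Lipschitz function of a standard Gaussian vector and then apply Gaussian concentration. Write $\delta=\sigma_g\xi$ with $\xi\sim\mathcal N(0,I_p)$, and define $F(\xi)\triangleq\max_{j\neq k^*}\sigma_g a_j^\top\xi$. This is a maximum of linear functions. For any $\xi,\xi'$ we have $|F(\xi)-F(\xi')|\le\max_j\sigma_g|a_j^\top(\xi-\xi')|\le \sigma_g L\,\|\xi-\xi'\|_2$, where $L\triangleq\max_{j\neq k^*}\|a_j\|_2$. So $F$ is $\sigma_gL$-Lipschitz. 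The Gaussian concentration inequality for Lipschitz functions (Tsirelson--Ibragimov--Sudakov, as in~\cite{Vershynin2018High}) then gives, for every $t\ge0$,
\[
\mathbb P\bigl(F-\mathbb E F\ge t\bigr)\le \exp\!\bigl(-t^2/(2\sigma_g^2L^2)\bigr).
\]

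Next we combine this with the inclusion $\{g^*(x+\delta)\neq k^*\}\subseteq\{M_{k^*}(\delta)\ge m(x)\}$ stated before the proposition. That inclusion holds because a misassignment to some $j$ requires $v_j^{*\top}(x+\delta)\ge v_{k^*}^{*\top}(x+\delta)$, i.e.\ $a_j^\top\delta\ge (v_{k^*}^*-v_j^*)^\top x\ge m(x)$. If $m(x)>\mathbb E M_{k^*}$, take $t=m(x)-\mathbb E M_{k^*}$ to obtain Eq.~\eqref{eq:13}. Otherwise the positive part is zero and the right-hand side equals $1$, so the bound is trivial.

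For the explicit bound Eq.~\eqref{eq:14}, there are three steps.
\begin{enumerate}
\item Bound the expected maximum. Each $a_j^\top\delta$ is $\mathcal N(0,\sigma_g^2\|a_j\|_2^2)$, hence sub-Gaussian with variance proxy at most $\sigma_g^2L^2$. The standard argument applies Jensen to $\exp(\lambda\,\mathbb E M)\le\sum_j\mathbb E e^{\lambda a_j^\top\delta}\le (K-1)e^{\lambda^2\sigma_g^2L^2/2}$ and optimizes over $\lambda$. This yields $\mathbb E M_{k^*}\le\sigma_gL\sqrt{2\log(K-1)}$. When $K=2$ the maximum is a single centered Gaussian with mean $0$, which is consistent with $\log 1=0$.
\item Simplify the exponent. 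The map $u\mapsto(u)_+^2$ is nondecreasing, so replacing $\mathbb E M$ by the upper bound from step 1 only increases the right-hand side of Eq.~\eqref{eq:13}. The exponent becomes $-\tfrac12\bigl(m(x)/(\sigma_gL)-\sqrt{2\log(K-1)}\bigr)_+^2$.
\item Bound $L$. The triangle inequality and Assumption~\ref{ass:gate_norm} give $L\le 2B_v$. Since $m(x)/(\sigma_g L)$ is decreasing in $L$, replacing $L$ by $2B_v$ decreases the argument of $(\cdot)_+$ and so can only increase the bound. This gives Eq.~\eqref{eq:14}.
\end{enumerate}

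The uniform statement follows from monotonicity: the right-hand side of Eq.~\eqref{eq:14} is nonincreasing in $m(x)$. Hence $m(x)\ge m_{\min}$ lets us replace $m(x)$ by $m_{\min}$.

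The only step with real content is the Lipschitz Gaussian concentration with the sharp constant $2$ in the denominator; I will cite it from~\cite{Vershynin2018High}. Everything else is bookkeeping about the directions of the monotone replacements, which must each be checked to go the right way.
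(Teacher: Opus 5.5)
Your proposal is correct and follows essentially the same route as the paper: the inclusion $\{g^*(x+\delta)\neq k^*\}\subseteq\{M_{k^*}(\delta)\ge m(x)\}$, then Gaussian concentration for the $\max_{j\neq k^*}\|a_j\|_2$-Lipschitz map $M_{k^*}$ (your rescaling $\delta=\sigma_g\xi$ is equivalent), then the choice $t=m(x)-\mathbb{E}_\delta M_{k^*}(\delta)$ with the trivial case $t<0$, and finally the maximal inequality together with $\|a_j\|_2\le 2B_v$. The only differences are that you derive the maximal inequality explicitly via the moment-generating-function argument and check the direction of each monotone substitution; the paper cites the former and states the latter without detail.
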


\begin{proof}
The proof is given in the supplementary material.
\end{proof}

Three factors compete in the explicit bound in Eq.~\eqref{eq:14}: a larger margin $m(x)$ promotes stability, a higher noise level $\sigma_g$ erodes it, and the regime-complexity term $\sqrt{2\log(K-1)}$ imposes an additional cost that grows only slowly with $K$.

\begin{remark}[When the bound is nonvacuous]
The explicit bound in Proposition~\ref{prop:sharp_stability} yields a non-trivial guarantee (i.e., strictly less than $1$) only when
\begin{equation*}
m(x) > 2\sigma_g B_v\sqrt{2\log(K-1)}.
\end{equation*}
This condition requires the margin to exceed an upper bound on the expected maximum Gaussian fluctuation. Otherwise, Eq.~\eqref{eq:14} becomes vacuous.
\end{remark}

\subsection{Joint Regime and Support Recovery with Learning Error}
\label{sec:theory:joint}

We study support recovery with a learned regime selector and oracle-trained experts. This isolates regime uncertainty and does not account for the effect of assignment errors on expert training. All probabilities in this support-recovery analysis are conditional on the training inputs.

Let $\mathcal{D}_n$ denote the training data, independent of the test perturbation $\delta$. Let $\hat g$ be a learned regime selector constructed from $\mathcal{D}_n$, and define $\hat k = \hat g(x+\delta)$ at test time. For each regime $k$, define the oracle index set
\begin{equation*}
\mathcal I_k^*=\{i:g^*(x_i)=k\}, \qquad n_k=|\mathcal I_k^*|.
\end{equation*}
Let $\hat\beta_k$ be the LASSO estimator fitted using the samples indexed by $\mathcal I_k^*$, and let $\hat S_k = \operatorname{supp}(\hat\beta_k)$. These oracle index sets are unavailable to the practical algorithm because $g^*$ is unknown. We assume that observations in regime $k$ follow
\begin{equation*}
y = x^\top \beta_k^* + \varepsilon,
\qquad
\varepsilon \sim \mathcal{N}(0,\sigma^2),
\end{equation*}
independently across samples, with $\sigma>0$.

To bring classical LASSO support-recovery guarantees to bear within each regime, we impose some standard conditions on the within-regime design and signal: a restricted eigenvalue condition (Assumption~\ref{ass:re}), an irrepresentable condition with incoherence parameter $\gamma_k\in(0,1]$ (Assumption~\ref{ass:irrep}), a beta-min condition (Assumption~\ref{ass:betamin}), and the penalty choice $\lambda=A_\lambda\sigma\sqrt{\log p/n_{k^*}}$ with $A_\lambda\ge4/\gamma_{k^*}$, where $\gamma_{k^*}$ is the incoherence parameter of the true regime $k^*$. Under these conditions, the LASSO result for given training inputs~\cite{Wainwright2009Sharp} guarantees support recovery within regime $k^*$ with probability at least $1-c_1\exp(-c_2 n_{k^*}\lambda^2/\sigma^2)$ for some constants $c_1,c_2>0$. We state the corresponding result as Theorem~\ref{thm:wainwright} in the supplementary material.

Since the regime selector is itself estimated, misassignment of the regime is a third source of error, alongside LASSO failure within the correct regime and the perturbation-driven switching of Section~\ref{sec:theory:stability}. We assume that the learned selector $\hat g$ satisfies $\sup_{x\in\mathcal{X}}\mathbb{P}_{\mathcal{D}_n}\big(\hat g(x)\neq g^*(x)\big)\le\eta_n$ for a sequence $\eta_n\to 0$ (Assumption~\ref{ass:regime_learning}). All of these conditions are stated formally in Section~\ref{sec:supp:assumptions} of the supplementary material. Combining the three sources yields an end-to-end guarantee for oracle-trained experts. For fixed $x$ as in Section~\ref{sec:theory:stability}, under Assumptions~\ref{ass:gating_setup}--\ref{ass:gate_norm} and~\ref{ass:re}--\ref{ass:regime_learning} and the stated penalty choice, the probability that the support recovered in the predicted regime differs from the true support satisfies
\begin{equation}\label{eq:15}
\begin{aligned}
\mathbb{P}\big(\hat S_{\hat k} \neq S_{k^*}\big)
&\le
\underbrace{\eta_n}_{\text{regime learning}}
+
\underbrace{c_1\exp\!\left(-c_2 n_{k^*}\lambda^2/\sigma^2\right)}_{\text{support recovery}} \\
&\quad+
\underbrace{\exp\!\left(
-\frac{1}{2}
\Big(
\frac{m(x)}{2\sigma_g B_v}
-
\sqrt{2\log(K-1)}
\Big)_+^2
\right)}_{\text{perturbation stability}}.
\end{aligned}
\end{equation}

The formal statement and proof are given in the supplementary material as Proposition~\ref{prop:joint_full}. Section~\ref{sec:theory:design_implications} explains the implications of this bound for the choice of $K$.

\subsection{Design Implications and Structural Trade-Offs}
\label{sec:theory:design_implications}
\medskip

\noindent\textbf{Regime Learning Accuracy.}
The term $\eta_n$ captures the generalization error of the learned regime selector $\hat g$. Assumption~\ref{ass:regime_learning} abstracts the regime-identification performance of $\hat g$ through this single quantity. Since a full analysis of the regime-learning procedure lies beyond our scope, we instead take $\eta_n$ as given and combine it additively with the switching probability and the within-regime recovery error. This yields an end-to-end reliability bound. A smaller $\eta_n$ tightens the overall support-recovery bound.

\medskip

\noindent\textbf{Perturbation Stability.}
The perturbation-driven instability term is controlled by the margin $m(x)$ relative to the noise-complexity radius $2\sigma_g B_v \sqrt{2\log(K-1)}$. This exposes a trade-off: increasing the number of regimes $K$ raises approximation capacity, but it also enlarges the noise-complexity radius at rate $\sqrt{\log K}$, thereby weakening the stability guarantee. This comparison fixes $m(x)$, $\sigma_g$, and $B_v$. In \method, changing $K$ may also alter margins and improve approximation, but these effects are not captured by the fixed-regime analysis.

\medskip

\noindent\textbf{Per-Regime Sample Complexity.}
Recall from Section~\ref{sec:theory:joint} that $\lambda \asymp \sigma\sqrt{\log p/n_{k^*}}$. With balanced regimes ($n_{k^*}\approx n/K$) and all else fixed, a larger $K$ tightens the beta-min requirement:
\begin{equation*}
\min_{j\in S_{k^*}} |\beta_{k^*,j}^*|
\gtrsim
\sigma \sqrt{\frac{K\log p}{n}}.
\end{equation*}
Consequently, a larger $K$ raises the signal strength required by the support-recovery guarantee.

\medskip

\noindent\textbf{Complexity--Stability Trade-Off.}
Overall, these effects make $K$ a single knob that trades representational capacity against statistical reliability. A larger $K$ allows a finer partition and can reduce approximation bias when the regression function is smooth, but it also shrinks the per-regime sample size and can increase switching instability, potentially increasing estimation variability and making the support-recovery conditions harder to satisfy.

Under balanced sampling, we use the sample-size heuristic
\begin{equation}\label{eq:16}
\frac{n}{K} \gg s_{\max}\log p,
\end{equation}
where $s_{\max}=\max_{1\le k\le K}|S_k|$. Together with reliable regime learning and adequate margins, this heuristic motivates a moderate number of well-separated regimes to balance approximation accuracy and explanation reliability. We examine this qualitative trade-off in the ablation on $K$ in Section~\ref{sec:experiments:ablation:num_regimes}.

\makeatletter
\immediate\write\@auxout{\string\newlabel{count:assumption}{{\arabic{assumption}}{}{}{}{}}}%
\immediate\write\@auxout{\string\newlabel{count:proposition}{{\arabic{proposition}}{}{}{}{}}}%
\immediate\write\@auxout{\string\newlabel{count:theorem}{{\arabic{theorem}}{}{}{}{}}}%
\makeatother

\section{Experiments}\label{sec:experiments}

We evaluate \method on six datasets, comparing it with SEE-Net~\cite{Seo2024Explainable}, MLM~\cite{Seo2022Mixture}, other inherently interpretable models, and standard MLP/CNN models. MLM and the inherently interpretable models are evaluated only on tabular datasets. SEE-Net (Pre-intp) and SEE-Net (Intp) denote SEE-Net's pre-interpretable NN, which predicts through the guidance NN, and its final interpretable NN, respectively. MLM-CELL is the cell-level mixture obtained from layer-wise clustering, whereas MLM-EPIC merges cells into larger Explainable Prediction-induced Input Clusters (EPIC). SEE-Net (Intp) and MLM-EPIC are the interpretable models intended for deployment in practice.

The six datasets comprise three tabular datasets (Bike Sharing, California Housing, Covertype) and three image datasets (MNIST, Flowers, CIFAR-10). We detail the full dataset descriptions, implementation and training details, and additional per-dataset interpretability analyses in the supplementary material.

In the main experiments, \method routes inputs using the WSSN-induced posterior $\pi_k(x)$ rather than the explanatory gate, so that the experiments examine the usefulness of the WSSN activation patterns themselves. Section~\ref{sec:experiments:ablation:gate_routing} examines \method-EG, the derived two-layer linear model in which the explanatory gate takes over routing.

\subsection{Synthetic Data Study}\label{sec:experiments:synthetic}
To illustrate the regime-recovery behavior of \method, we study a synthetic dataset with known ground-truth regimes. We draw $n=2{,}000$ points uniformly over the square $[-1,1]^2$ with independent covariates $X_1,X_2$, and let the response $Y$ follow a piecewise-linear trapezoidal-bowl surface with an affine interior and additive Gaussian noise. Let $r=\max(|X_1|,|X_2|)$. Then
\begin{equation*}
Y=
\begin{cases}
\beta_0+\beta_1 X_1+\beta_2 X_2+\varepsilon & \text{if } r\le t,\\
X_2 - t + \varepsilon & \text{if } r> t,\ X_2\ge|X_1|,\\
-X_2 - t + \varepsilon & \text{if } r> t,\ X_2\le-|X_1|,\\
X_1 - t + \varepsilon & \text{if } r> t,\ X_1>|X_2|,\\
-X_1 - t + \varepsilon & \text{if } r> t,\ X_1<-|X_2|,
\end{cases}
\end{equation*}
where \(t=0.5\), \(\varepsilon\sim\mathcal{N}(0,0.02^2)\) is the Gaussian noise, and the center regime uses \(\beta_0=0.5\), \(\beta_1=0.5\), \(\beta_2=-0.5\). For this experiment we fix the final number of regimes at \(K=5\) for \method, MLM, and SEE-Net. Since each regime is linear in the inputs, the optimal decomposition follows the edges of the trapezoidal bowl. Regime boundaries are marked by jumps or changes in slope. Fig.~\ref{fig:1} compares MLM, SEE-Net, and \method\ on the same synthetic draw. MLM in Fig.~\ref{fig:1}\subref{fig:1a} misses almost the entire right and middle regions, and SEE-Net in Fig.~\ref{fig:1}\subref{fig:1b} cuts across the true boundaries and recovers the regimes only partially, whereas \method\ in Fig.~\ref{fig:1}\subref{fig:1c} aligns most closely with the true geometry.
\begin{figure}[!t]
\centering
\subfloat[MLM]{\includegraphics[width=0.32\linewidth]{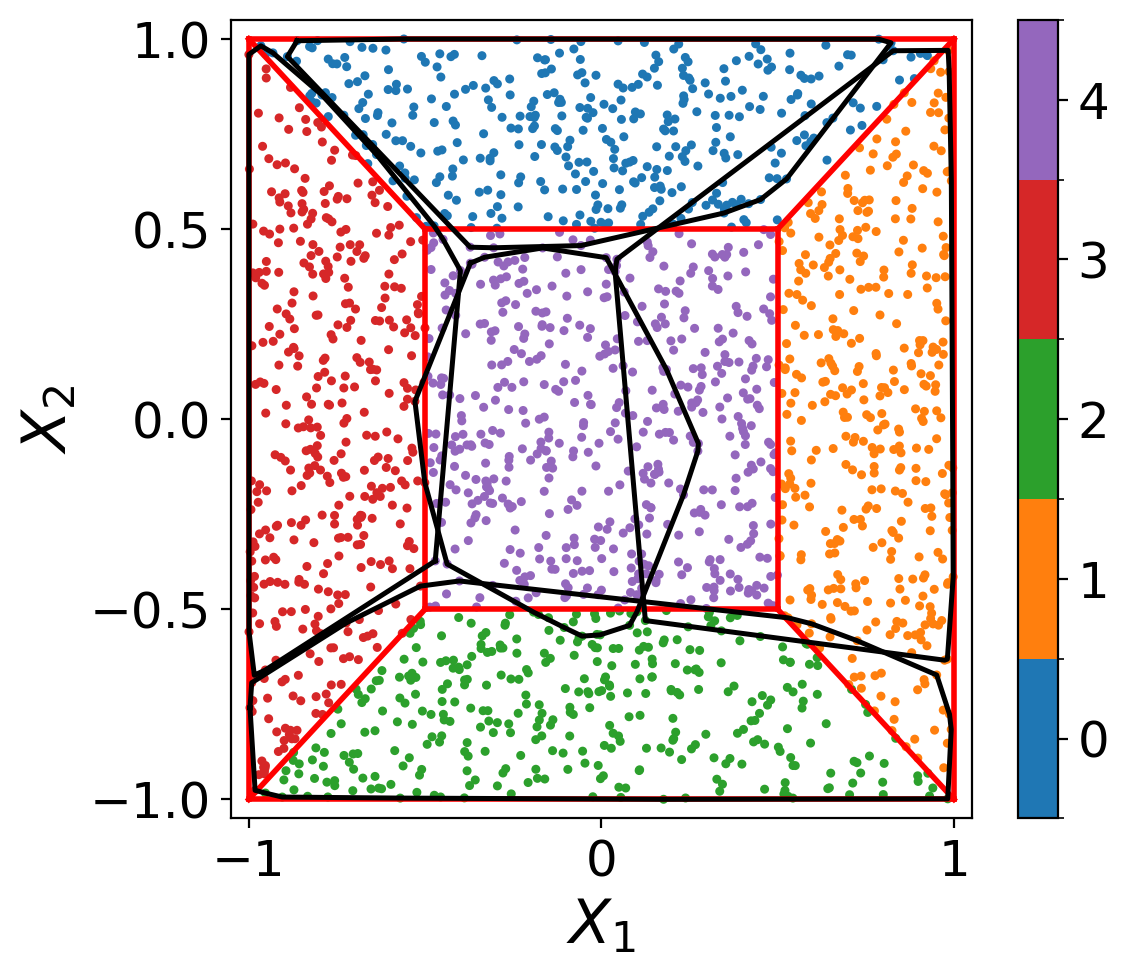}\label{fig:1a}}
\hfill
\subfloat[SEE-Net]{\includegraphics[width=0.32\linewidth]{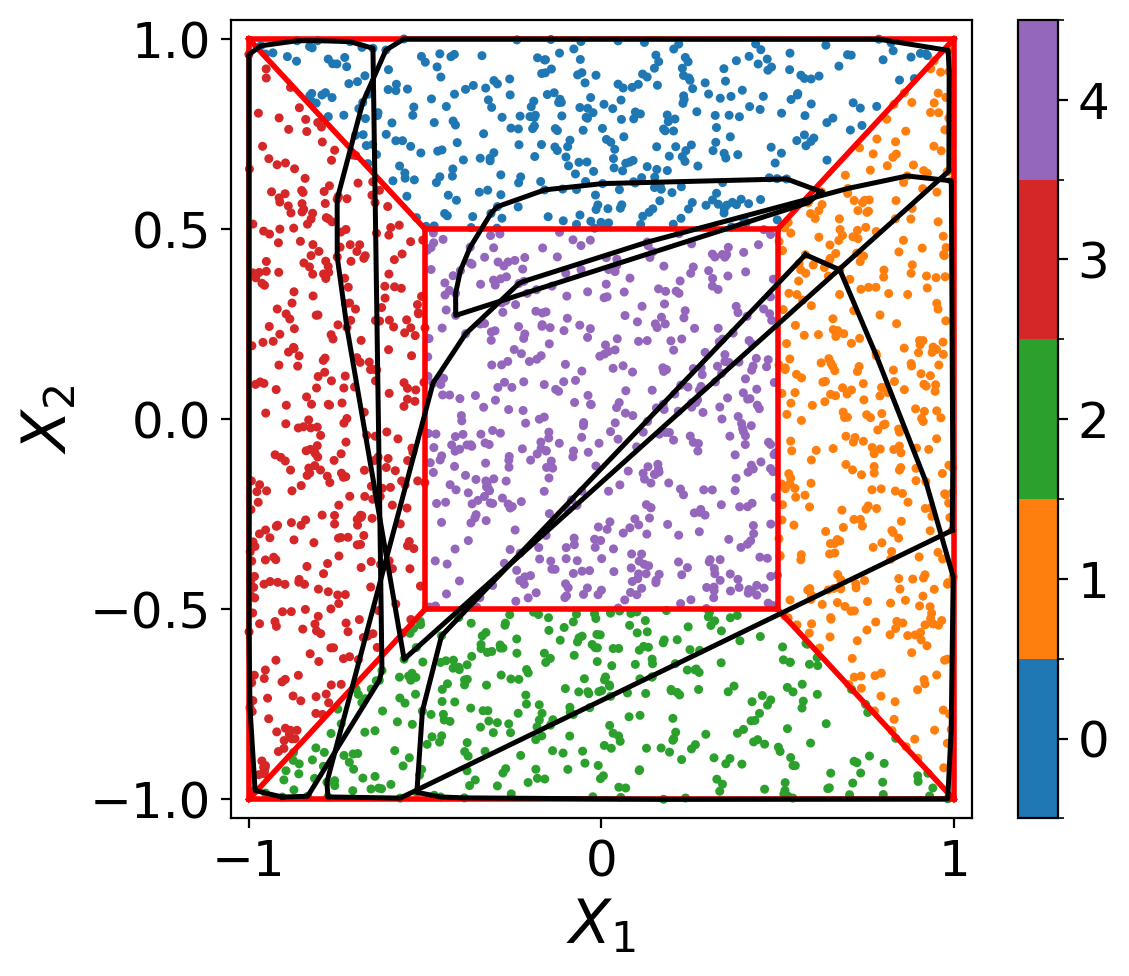}\label{fig:1b}}
\hfill
\subfloat[\method]{\includegraphics[width=0.32\linewidth]{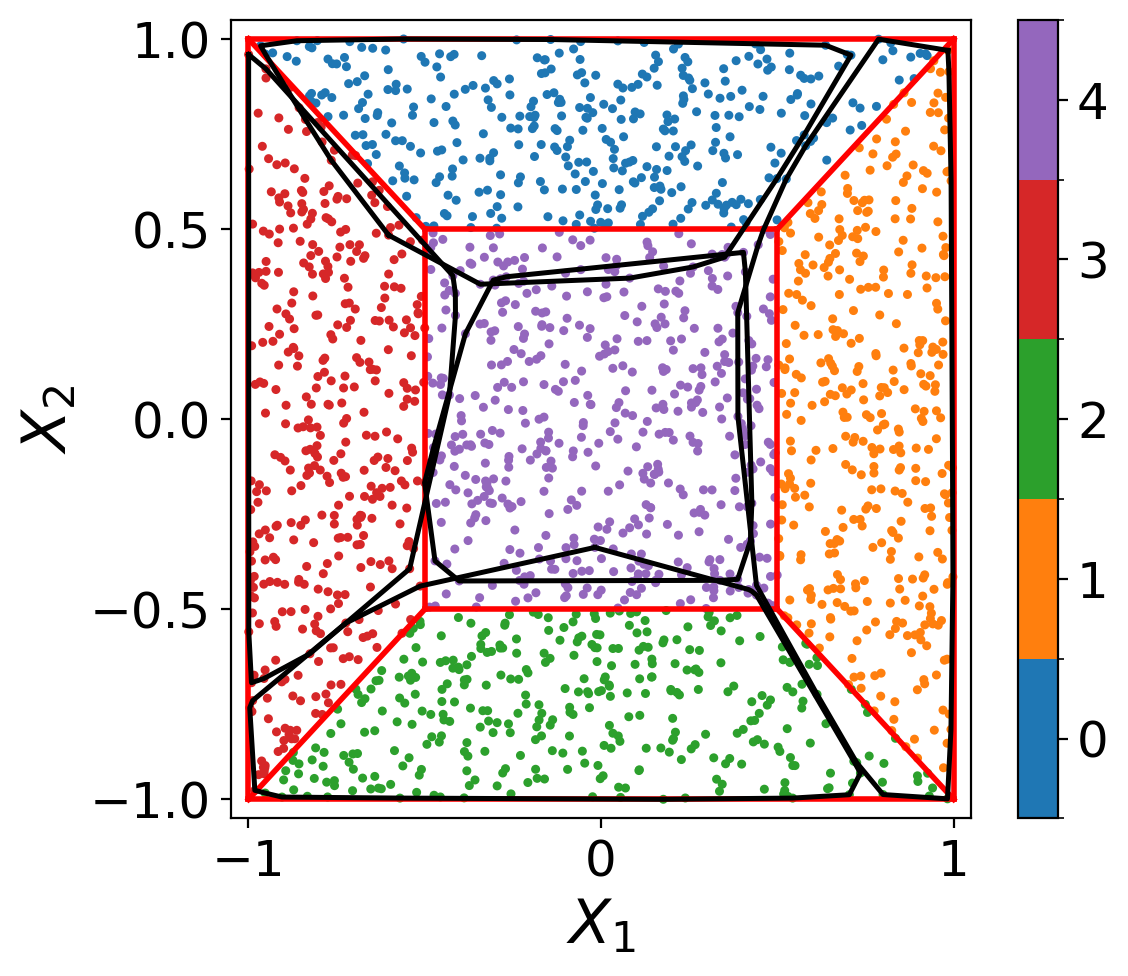}\label{fig:1c}}
\caption{Predicted regime assignments for MLM, SEE-Net, and \method. Ground-truth boundaries in red, predicted boundaries as black outlines.}
\label{fig:1}
\end{figure}

\subsection{Real-World Data Study}\label{sec:experiments:realworld}
\noindent\textbf{Prediction Performance.}\label{sec:experiments:realworld:prediction}
Tables~\ref{tab:1} and~\ref{tab:2} report the predictive performance of \method against inherently interpretable models, SEE-Net, MLM, and MLP/CNN models, where applicable. We also report Gate AUC, which measures how accurately the explanatory gate predicts the WSSN-induced regime assignments. 

Table~\ref{tab:1} shows that \method is the strongest interpretable model on all tabular datasets, outperforming SEE-Net (Intp), MLM-CELL, MLM-EPIC, and the inherently interpretable models. The MLM baselines exhibit substantial variation on Bike Sharing, suggesting instability under our experimental setup. Strong black-box models such as MLP and XGBoost still perform better than \method, but \method remains simple to read because the routing posterior is near one-hot, so each prediction can be explained mostly by a single linear expert, as Fig.~\ref{fig:tab:route}\subref{fig:cov}--\subref{fig:cal} in the supplementary material indicates.

In Table~\ref{tab:2}, \method nearly matches the teacher CNN on MNIST. On Flowers, the distilled WSSN achieves a higher AUC than the teacher, illustrating a potential predictive benefit of the distillation stage. \method retains part of this gain while providing regime-level explanations. Overall, \method improves over SEE-Net~(Intp) on all three image datasets. This shows that \method's performance advantage carries over from tabular features to high-dimensional images. More importantly, these image datasets demonstrate that the interpretability benefits come at little cost in accuracy, although a gap to the strongest black-box models remains on the tabular datasets. \method is also stable across runs since the standard deviations are small and only larger on harder datasets such as California Housing and Flowers. This consistency indicates that the distillation, clustering, and fixed-router fitting pipeline yields robust final predictions. 

The gate-recovery results in both Tables~\ref{tab:1} and~\ref{tab:2} provide a complementary interpretability view. The WSSN-induced regime labels are recovered almost perfectly by the linear gate on Bike Sharing, Covertype, MNIST, and CIFAR-10, suggesting that the switching logic is both effective and explainable.

The lower Gate AUC on California Housing reflects the fact that a larger, more fragmented partition is harder to explain with a simple linear assignment rule, as this dataset uses the largest partition, $K=150$, among all datasets. Among the image datasets, Flowers has the lowest Gate AUC, suggesting that its regime assignments are harder to recover with a linear gate. Nevertheless, its Gate AUC of 0.974 remains high, indicating that the assignments are still largely recoverable from the learned representation.
\begin{table*}[t]
\begingroup
\setlength{\tabcolsep}{2.8pt}
\renewcommand{\arraystretch}{1.08}
\caption{Prediction performance on the three tabular datasets, reported as mean (std) over five runs. Gate AUC measures how accurately the linear gate recovers the discovered regime assignments. We fit the same explanatory gate to the regimes of SEE-Net and the MLM variants, so the column is comparable across all regime-switching methods. All AUC values are macro-averaged one-vs-rest over classes or regimes. Empty cell means not applicable. Bold black text marks the best result for performance column (excluding SEE-Net (Pre-intp), WSSN, and Gate AUC); bold dark blue, the best interpretable model.}
\label{tab:1}
\begin{center}
\begin{adjustbox}{max width=\textwidth}
\begin{tabular}{l cc cc cc}
\hline
\rowcolor{gray!18}
 & \multicolumn{2}{c}{Bike Sharing}
 & \multicolumn{2}{c}{Cal.\ Housing}
 & \multicolumn{2}{c}{Covertype} \\
\cline{2-7}
\rowcolor{gray!18}
\multirow{-2}{*}{Method}
 & Test RMSE $\downarrow$ & Test Gate AUC $\uparrow$
 & Test RMSE $\downarrow$ & Test Gate AUC $\uparrow$
 & Test AUC $\uparrow$ & Test Gate AUC $\uparrow$ \\
\hline
Logistic Reg. &  &  &  &  & 0.937 (0.000) &  \\
SVM &  &  &  &  & 0.925 (0.000) &  \\
LASSO Reg. & 203.485 (0.000) &  & 0.830 (0.008) &  &  &  \\
GAM & 241.067 (0.000) &  & 1.050 (0.009) &  & 0.943 (0.000) &  \\
Random Forest & 94.510 (0.456) &  & 0.536 (0.007) &  & 0.956 (0.001) &  \\
SVR & 127.276 (0.000) &  & 0.568 (0.006) &  &  &  \\
XGBoost & \textbf{89.355 (0.236)} &  & \textbf{0.461 (0.007)} &  & 0.986 (0.000) &  \\
\hline
SEE-Net (Pre-intp) & 269.171 (0.215) & 0.995 (0.005) & 0.663 (0.008) & 0.943 (0.021) & 0.988 (0.002) & 0.948 (0.013) \\
SEE-Net (Intp) & 262.994 (2.442) & 0.997 (0.003) & 3.893 (3.934) & 1.000 (0.000) & 0.924 (0.008) & 1.000 (0.000) \\
MLM-CELL & 875.494 (379.166) & 0.943 (0.013) & 0.651 (0.022) & 0.880 (0.010) & 0.816 (0.014) & 0.994 (0.002) \\
MLM-EPIC & 823.153 (369.246) & 0.739 (0.092) & 0.682 (0.130) & 0.879 (0.014) &  &  \\
\hline
Teacher MLP & 89.498 (3.812) &  & 0.515 (0.006) &  & \textbf{0.995 (0.000)} &  \\
WSSN & 90.367 (5.497) &  & 0.525 (0.006) &  & 0.991 (0.000) &  \\
\textbf{\method} & \textcolor{teal!70!blue}{\textbf{142.273 (4.241)}} & 0.988 (0.008) & \textcolor{teal!70!blue}{\textbf{0.608 (0.022)}} & 0.738 (0.113) & \textcolor{teal!70!blue}{\textbf{0.963 (0.001)}} & 1.000 (0.000) \\
\hline
\end{tabular}
\end{adjustbox}
\end{center}
\endgroup
\end{table*}

\begin{table*}[t]
\begingroup
\setlength{\tabcolsep}{2.8pt}
\renewcommand{\arraystretch}{1.08}
\caption{Prediction performance on the three image datasets, reported as mean (std) over five runs. Inherently interpretable models and MLM-CELL/MLM-EPIC are omitted, as they are not designed for image data. Conventions follow Table~\ref{tab:1}.}
\label{tab:2}
\begin{center}
\begin{adjustbox}{max width=\textwidth}
\begin{tabular}{l cc cc cc}
\hline
\rowcolor{gray!18}
 & \multicolumn{2}{c}{MNIST}
 & \multicolumn{2}{c}{Flowers}
 & \multicolumn{2}{c}{CIFAR-10} \\
\cline{2-7}
\rowcolor{gray!18}
\multirow{-2}{*}{Method}
 & Test AUC $\uparrow$ & Test Gate AUC $\uparrow$
 & Test AUC $\uparrow$ & Test Gate AUC $\uparrow$
 & Test AUC $\uparrow$ & Test Gate AUC $\uparrow$ \\
\hline
SEE-Net (Pre-intp) & 0.999 (0.000) & 0.979 (0.008) & 0.875 (0.007) & 0.804 (0.137) & 0.920 (0.005) & 0.779 (0.015) \\
SEE-Net (Intp) & 0.985 (0.002) & 1.000 (0.000) & 0.841 (0.053) & 0.999 (0.001) & 0.773 (0.029) & 0.994 (0.013) \\
\hline
Teacher CNN & \textbf{1.000 (0.000)} &  & 0.850 (0.018) &  & \textbf{0.961 (0.001)} &  \\
WSSN & 1.000 (0.000) &  & 0.893 (0.015) &  & 0.927 (0.001) &  \\
\textbf{\method} & \textcolor{teal!70!blue}{\textbf{0.999 (0.000)}} & 0.999 (0.000) & \textcolor{teal!70!blue}{\textbf{0.862 (0.019)}} & 0.974 (0.007) & \textcolor{teal!70!blue}{\textbf{0.908 (0.002)}} & 0.993 (0.000) \\
\hline
\end{tabular}
\end{adjustbox}
\end{center}
\endgroup
\end{table*}

\medskip

\noindent\textbf{Regime Recovery and Interpretability Analysis. }\label{sec:experiments:realworld:interpretability}
Beyond prediction, \method provides interpretable regime assignments and regime-specific prediction rules, explaining both \emph{why} an instance is routed to a regime and \emph{how} its prediction is formed. We examine this dual interpretability on the image datasets from three perspectives: 1) whether routing is effective without collapsing to a single regime; 2) what the explanatory gate and local expert coefficients reveal about regime assignment and prediction; 3) whether the resulting regimes and explanations are human-readable. The corresponding diagnostics for the tabular datasets are reported in the supplementary material.

\begin{figure*}[!t]
\vspace{-14pt}
\centering
\captionsetup[subfloat]{captionskip=0pt}
\includegraphics[width=0.32\textwidth]{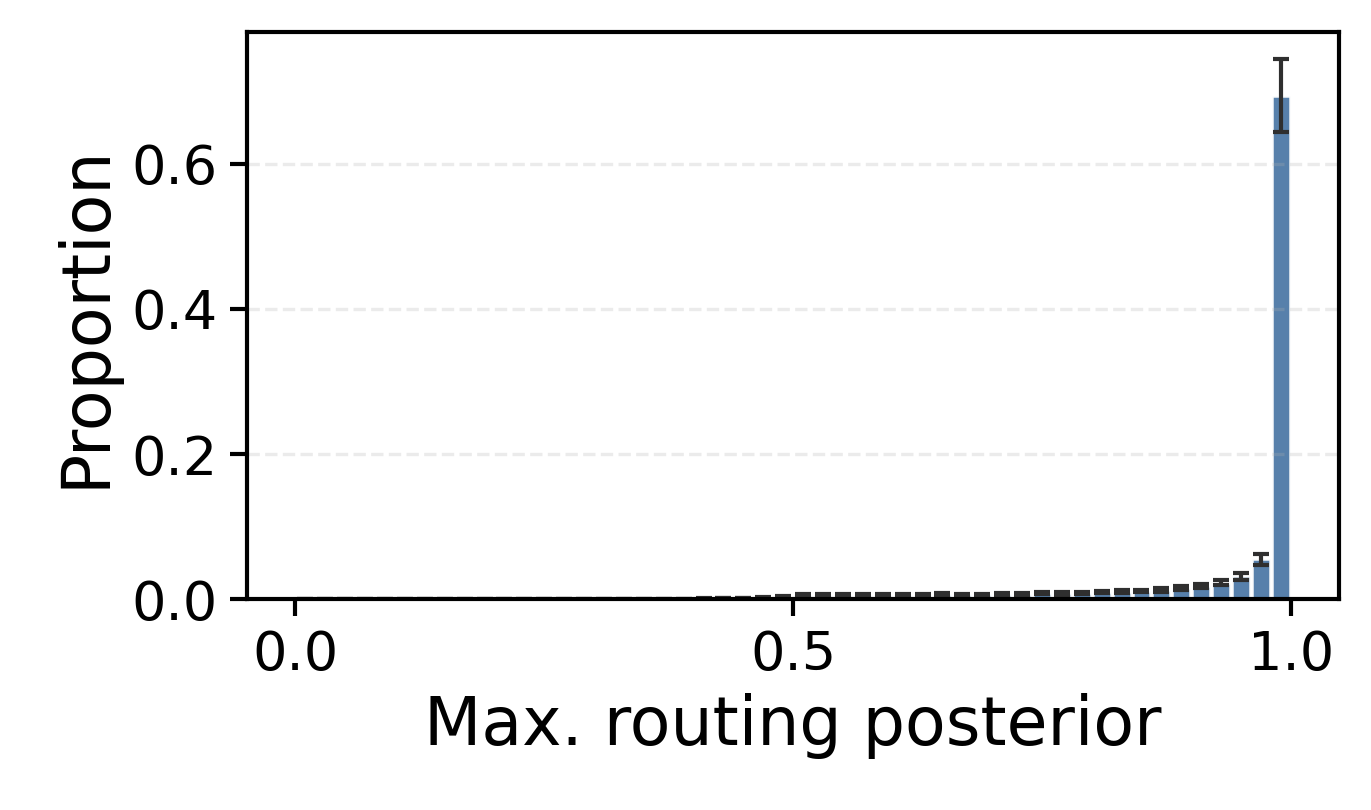}
\hfill
\includegraphics[width=0.32\textwidth]{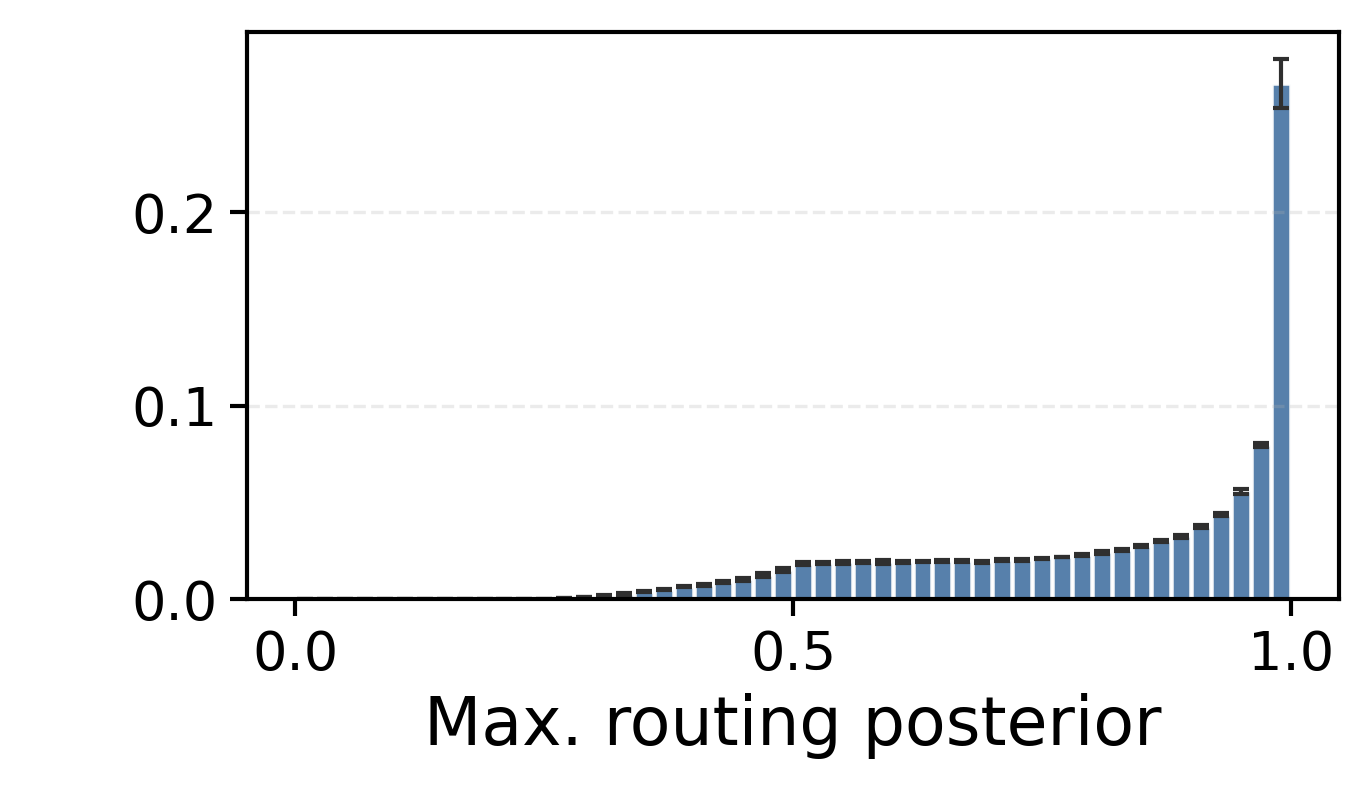}
\hfill
\includegraphics[width=0.32\textwidth]{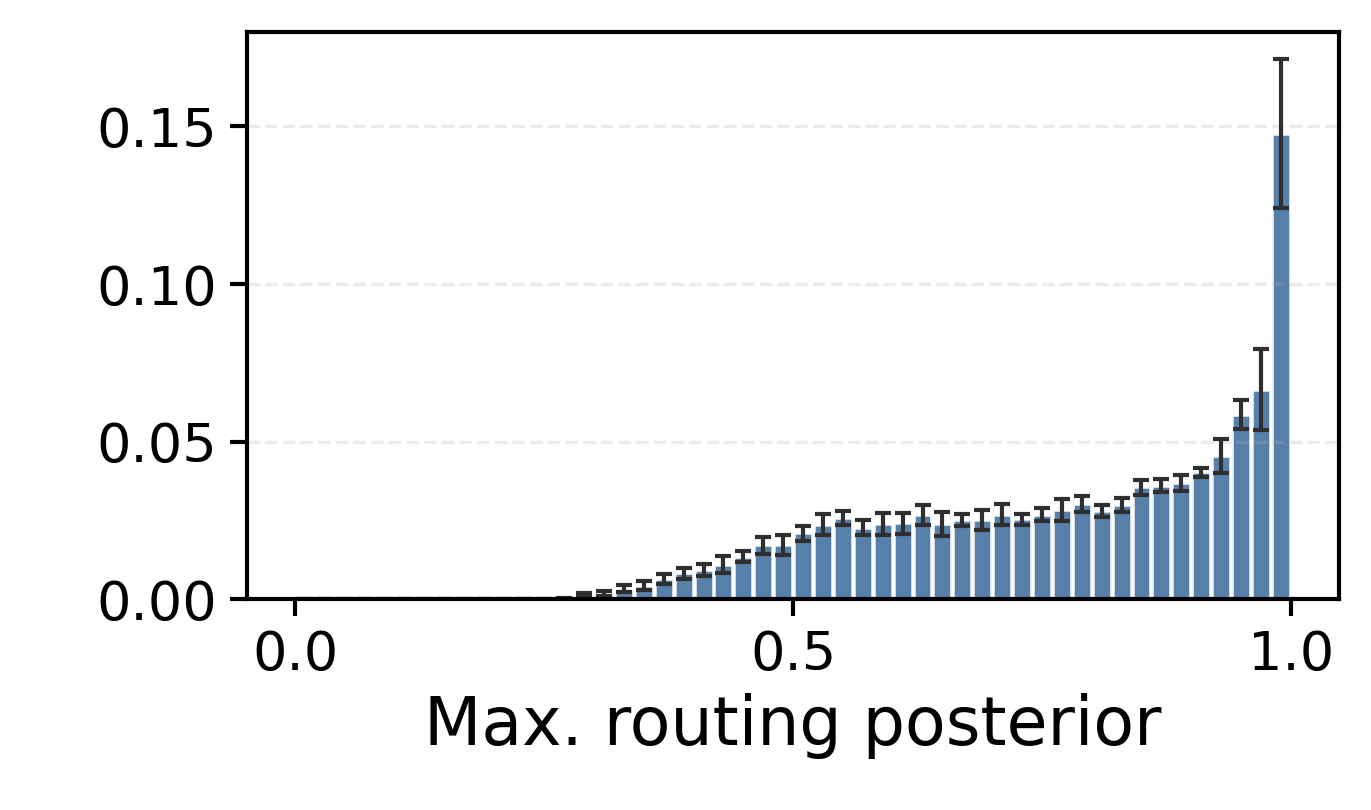}
\par\vspace{-10pt}
\subfloat[MNIST]{\includegraphics[width=0.32\textwidth]{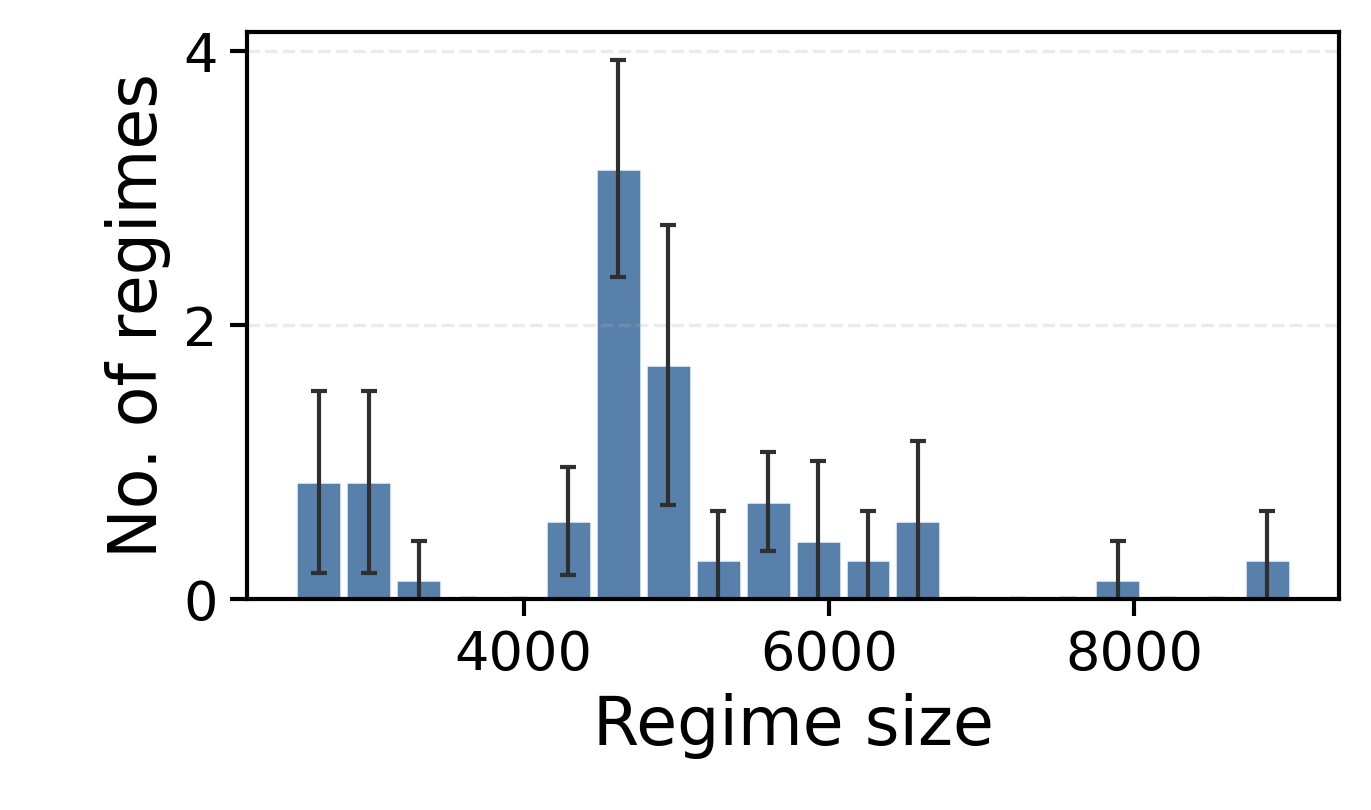}\label{fig:route:mnist}}
\hfill
\subfloat[CIFAR-10]{\includegraphics[width=0.32\textwidth]{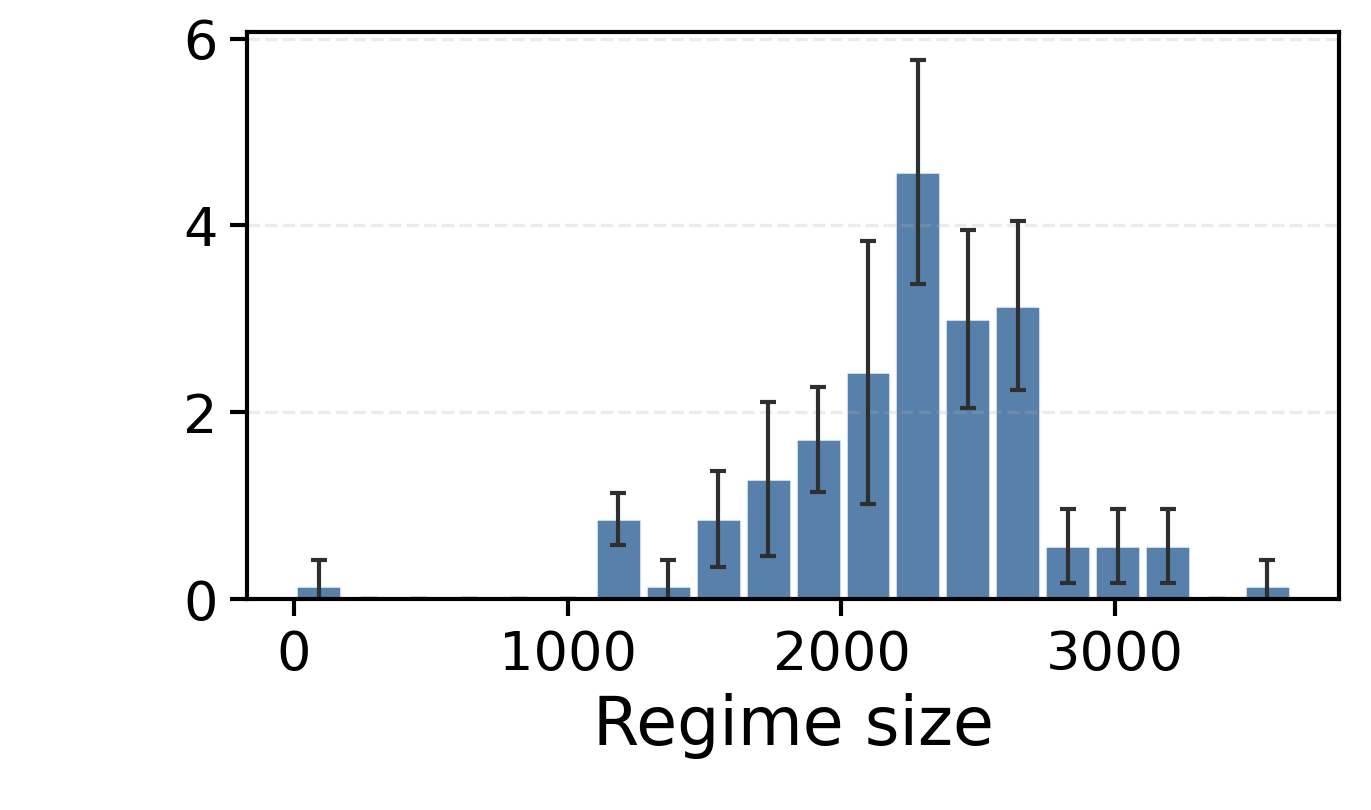}\label{fig:route:cifar}}
\hfill
\subfloat[Flowers]{\includegraphics[width=0.32\textwidth]{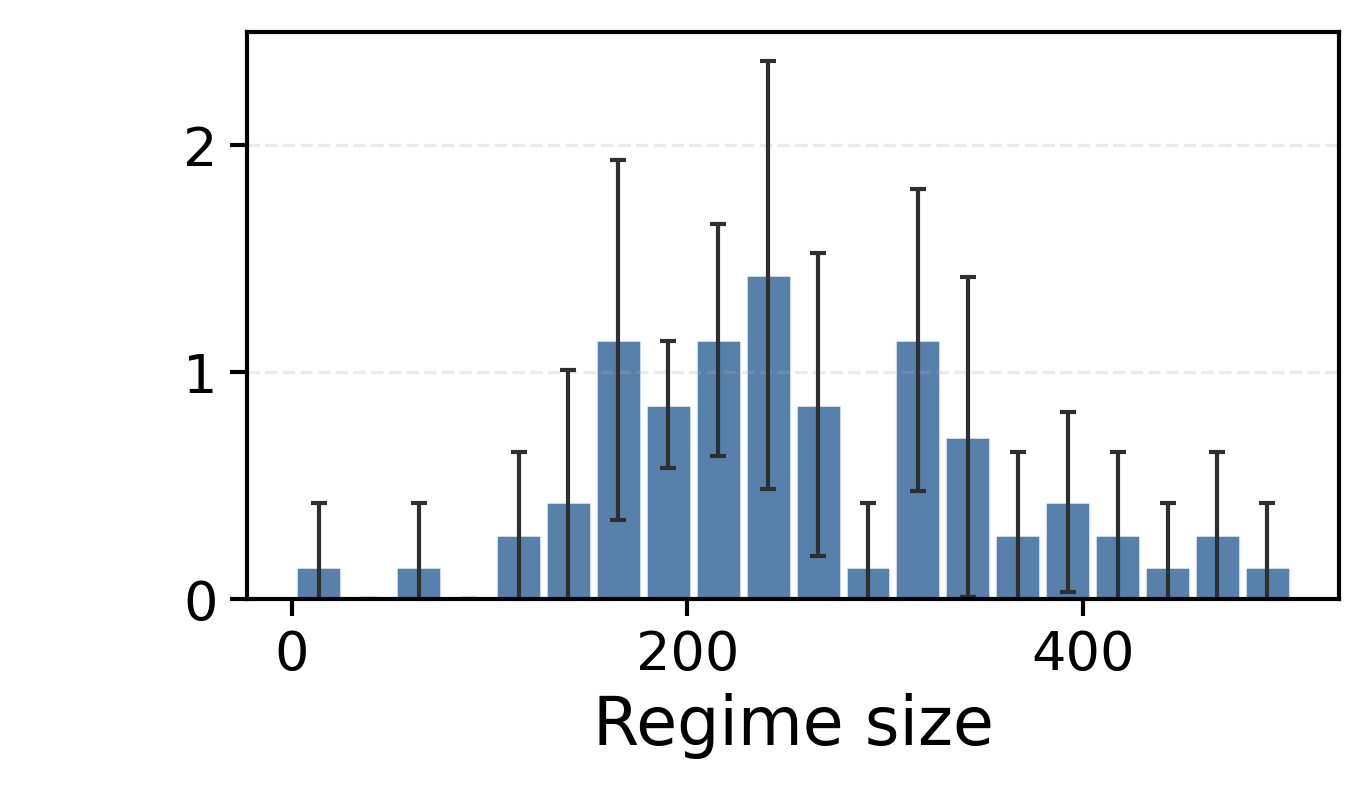}\label{fig:route:flower}}
\caption{Routing sharpness and regime size balance on MNIST~(a), CIFAR-10~(b), and Flowers~(c). Top row: distribution of the maximum routing posterior $\max_k \pi_k(x)$ over training samples. Bottom row: distribution of regime sizes, i.e., the number of training samples assigned to each non-empty regime. Bars are means over five independent runs, and error bars are $\pm1.96$ standard errors across runs.}
\label{fig:img:route}
\end{figure*}

\begin{figure*}[!t]
\centering
\includegraphics[width=\textwidth]{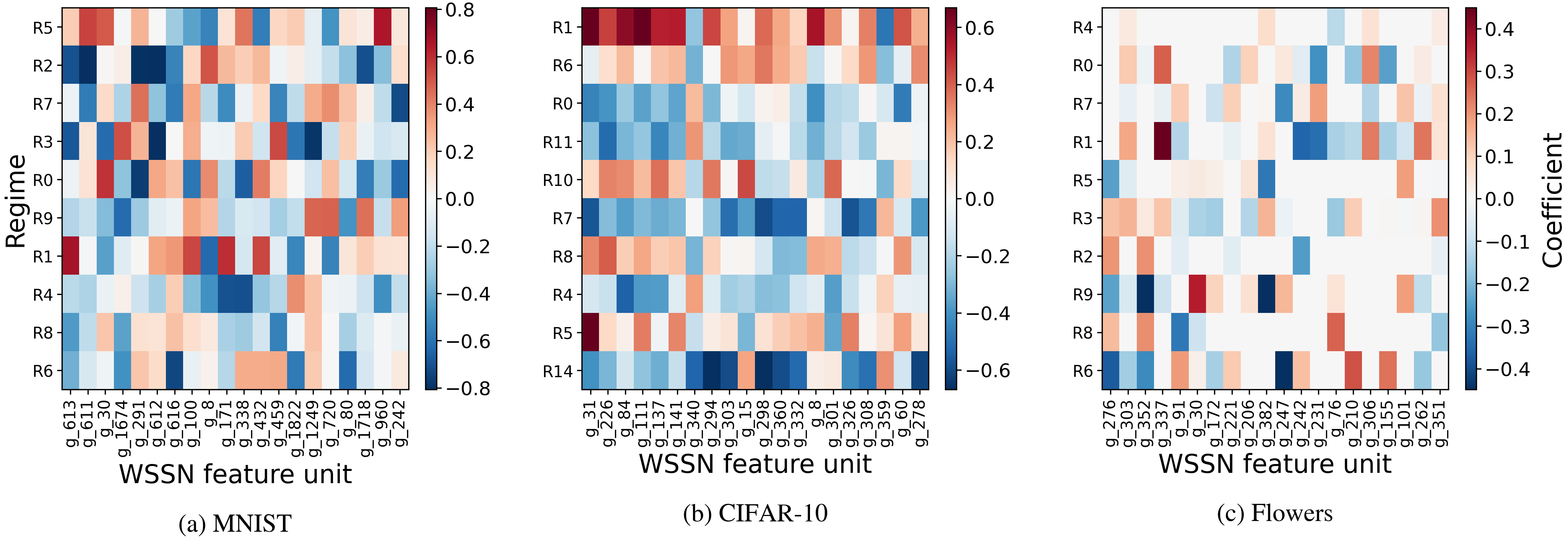}
\caption{Single-run explanatory gate coefficients on MNIST~(a), CIFAR-10~(b), and Flowers~(c). Rows are the 10 largest regimes by training-set size. Columns show feature units of $u(x)$, labeled \texttt{g\_}$j$. For MNIST, a feature unit is one spatial location in a convolutional channel; for CIFAR-10 and Flowers, it is a max-pooled channel. Color represents the gate coefficient of the labeled feature unit for that regime.}
\label{fig:img:gate}
\end{figure*}

Fig.~\ref{fig:img:route} shows that routing is effective on all three datasets. On MNIST, the maximum routing posterior concentrates near one and the regime sizes are balanced across all the regimes (Fig.~\ref{fig:img:route}\subref{fig:route:mnist}), indicating sharp, noncollapsed routing. On CIFAR-10, routing is less sharp than on MNIST, which is consistent with the greater visual complexity, while the regime sizes remain balanced with few oversized or undersized regimes (Fig.~\ref{fig:img:route}\subref{fig:route:cifar}). On Flowers, routing is likewise less sharp than on MNIST, while the regime sizes stay relatively balanced (Fig.~\ref{fig:img:route}\subref{fig:route:flower}).

Fig.~\ref{fig:img:gate} illustrates regime-assignment rules for the image datasets. For MNIST, \texttt{g\_613} has a coefficient of $+0.67$ for regime~1 against $-0.70$ and $-0.68$ for regimes~2 and~3, so stronger activation of this feature-map location favors regime~1 over regimes~2 and~3. For CIFAR-10, \texttt{g\_31} has a coefficient of $+0.73$ for regime~1 against $-0.40$ for regime~14, so stronger activation of this channel favors regime~1 over regime~14. For Flowers, some pooled features receive coefficients of opposite signs across regimes, reflecting different regime-assignment rules.

\begin{figure*}[!t]
\centering
\subfloat[Regime 1]{\includegraphics[height=0.22\textheight]{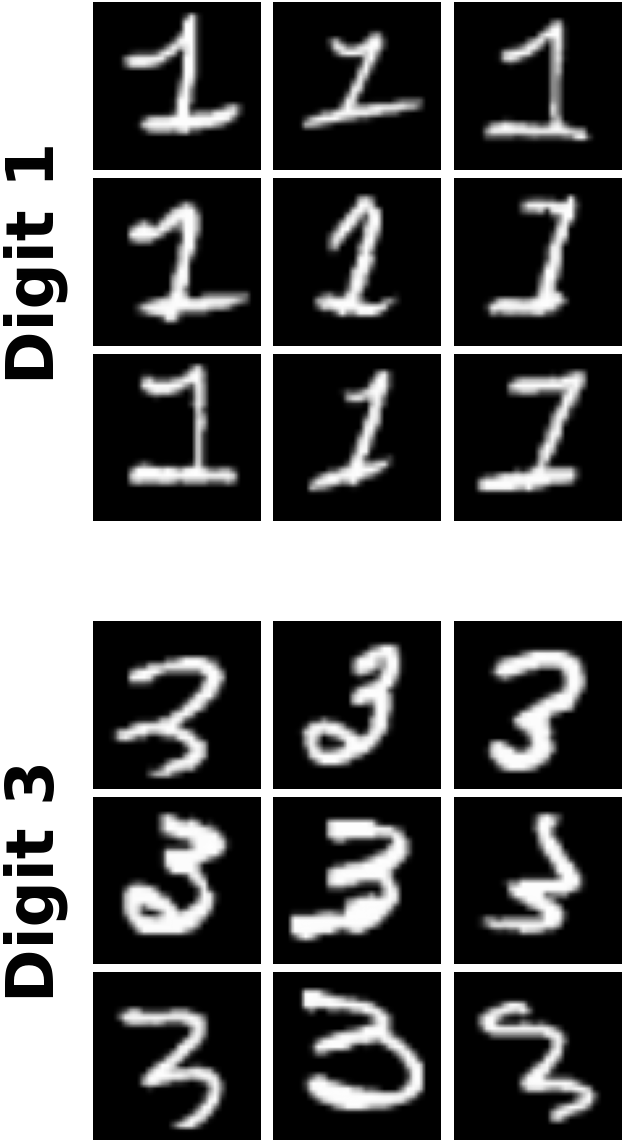}\label{fig:mnist:r1}}
\hspace{2pt}
\subfloat[Regime 2]{\includegraphics[height=0.22\textheight]{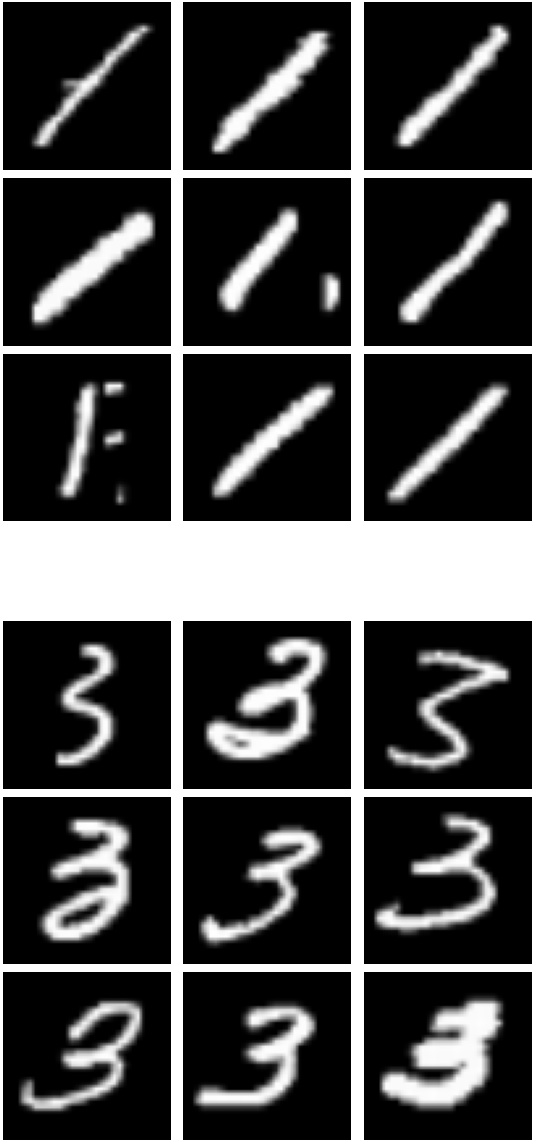}\label{fig:mnist:r2}}
\hspace{2pt}
\subfloat[Regime 6]{\includegraphics[height=0.22\textheight]{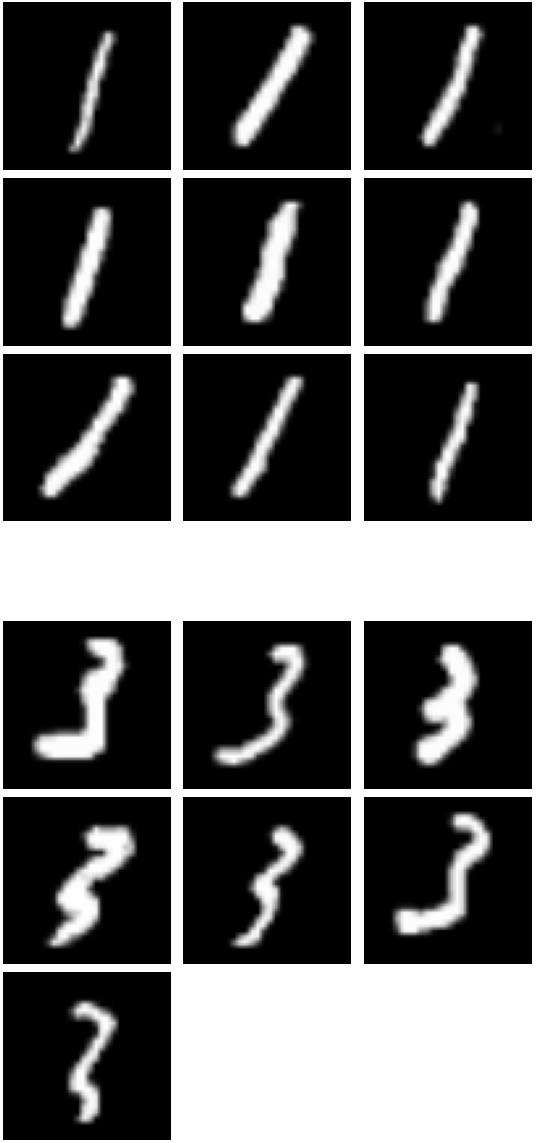}\label{fig:mnist:r6}}
\hspace{2pt}
\subfloat[Regime 7]{\includegraphics[height=0.22\textheight]{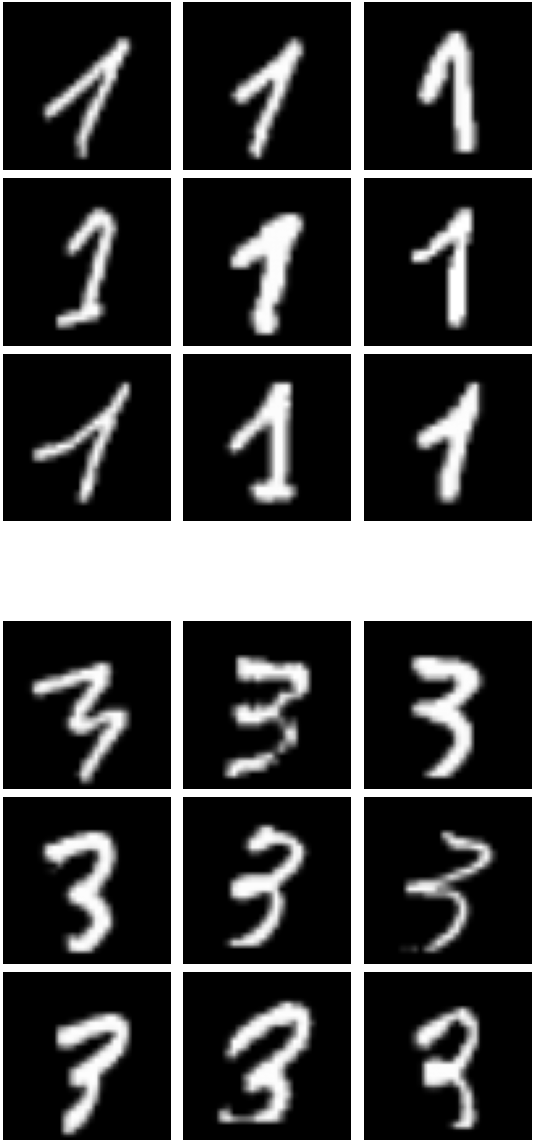}\label{fig:mnist:r7}}
\hspace{2pt}
\subfloat[Regime 8]{\includegraphics[height=0.22\textheight]{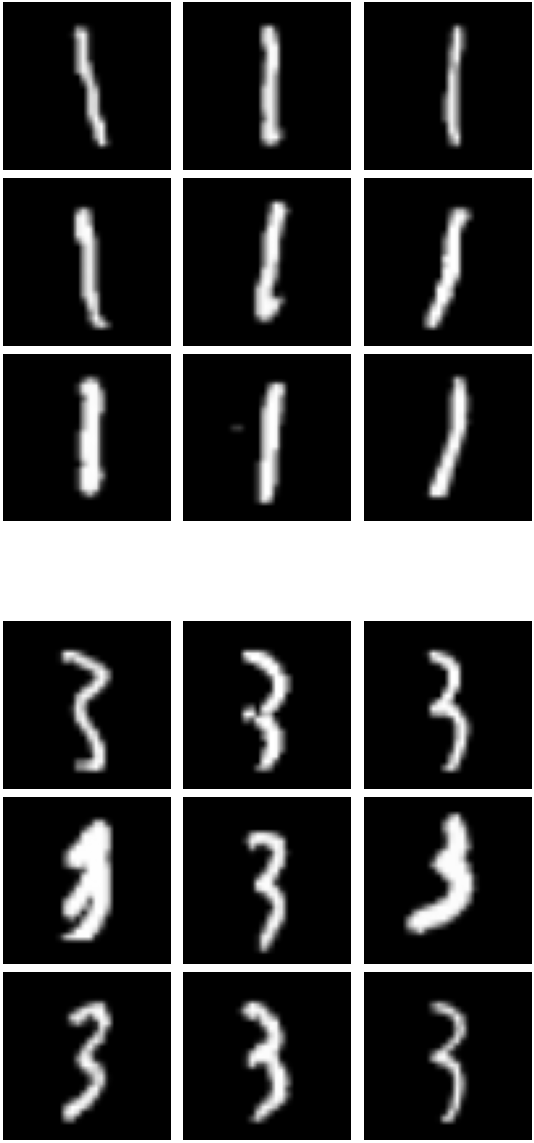}\label{fig:mnist:r8}}
\par\vspace{2pt}
\subfloat[Regime 0]{\includegraphics[height=0.22\textheight]{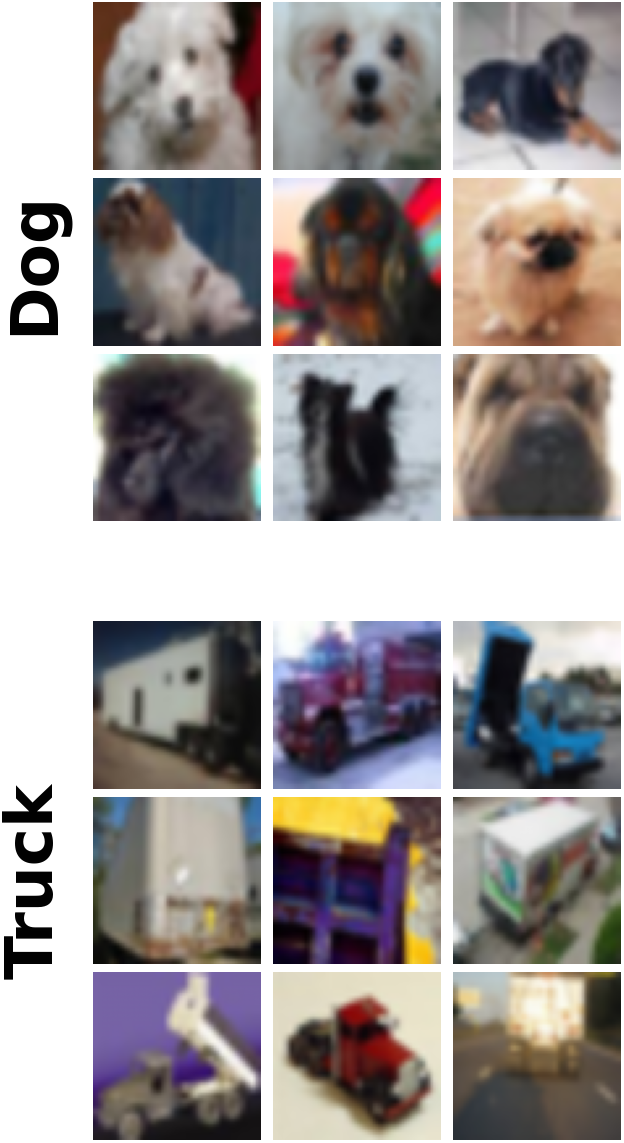}\label{fig:cifar:r0}}
\hspace{2pt}
\subfloat[Regime 3]{\includegraphics[height=0.22\textheight]{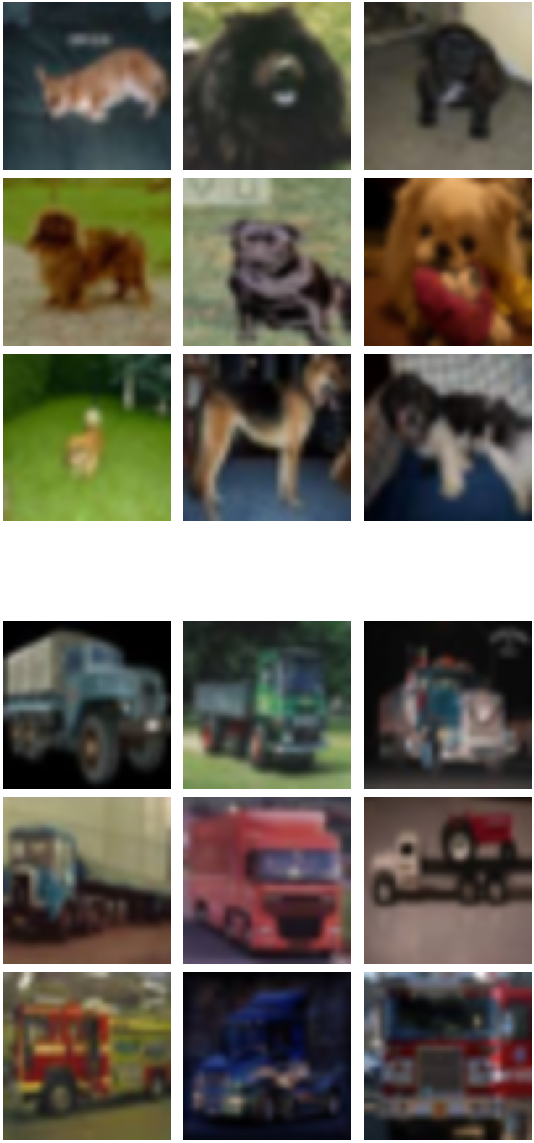}\label{fig:cifar:r3}}
\hspace{2pt}
\subfloat[Regime 6]{\includegraphics[height=0.22\textheight]{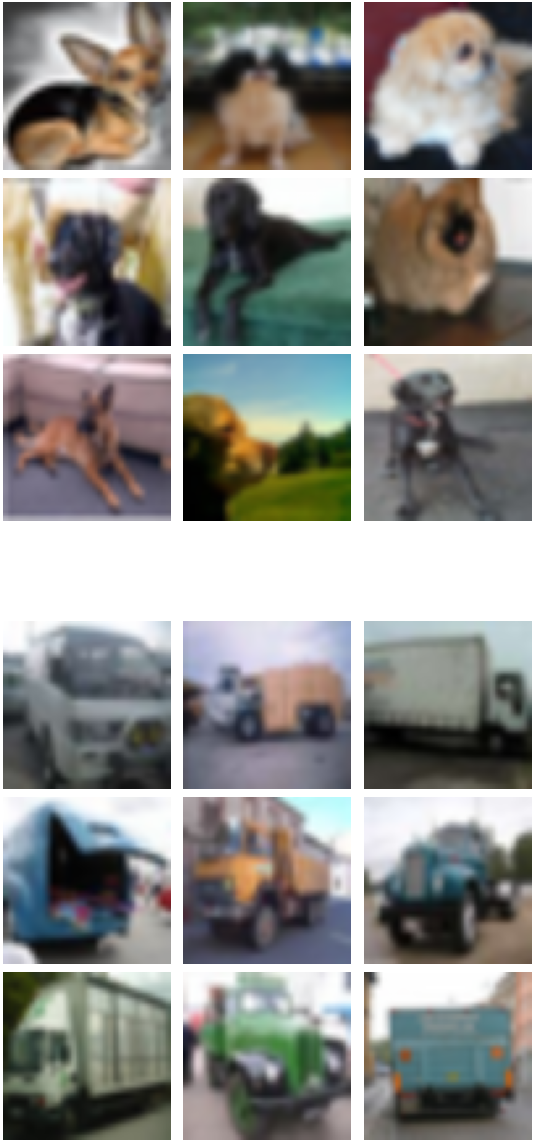}\label{fig:cifar:r6}}
\hspace{2pt}
\subfloat[Regime 7]{\includegraphics[height=0.22\textheight]{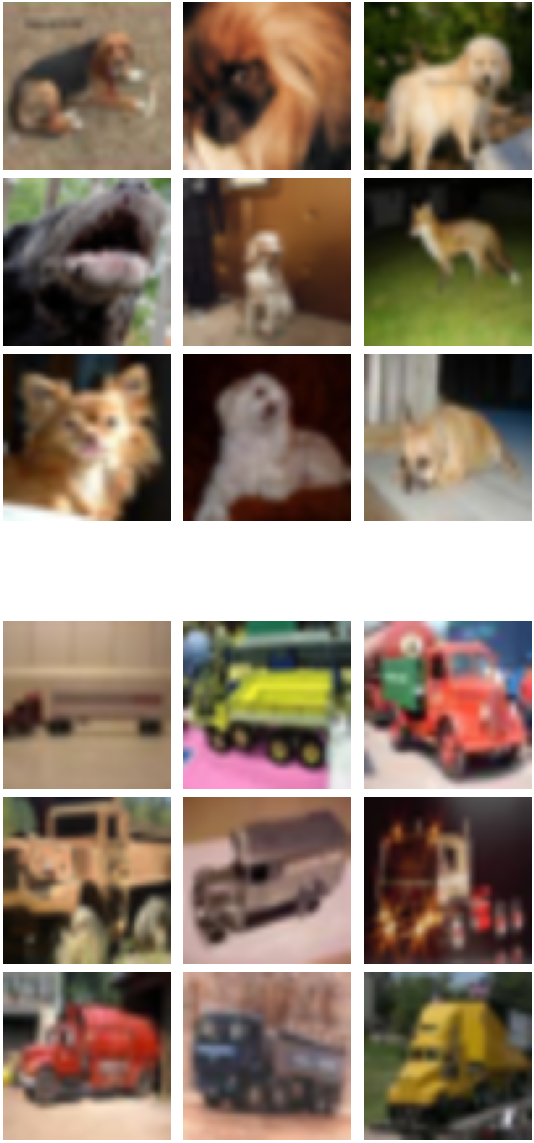}\label{fig:cifar:r7}}
\hspace{2pt}
\subfloat[Regime 9]{\includegraphics[height=0.22\textheight]{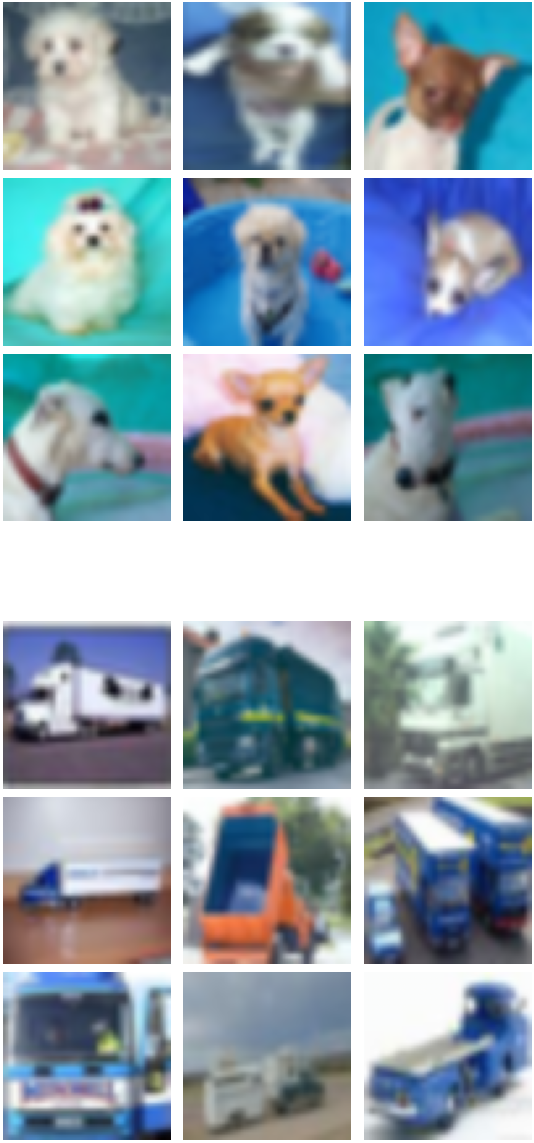}\label{fig:cifar:r9}}
\par\vspace{2pt}
\subfloat[Regime 0]{\includegraphics[height=0.22\textheight]{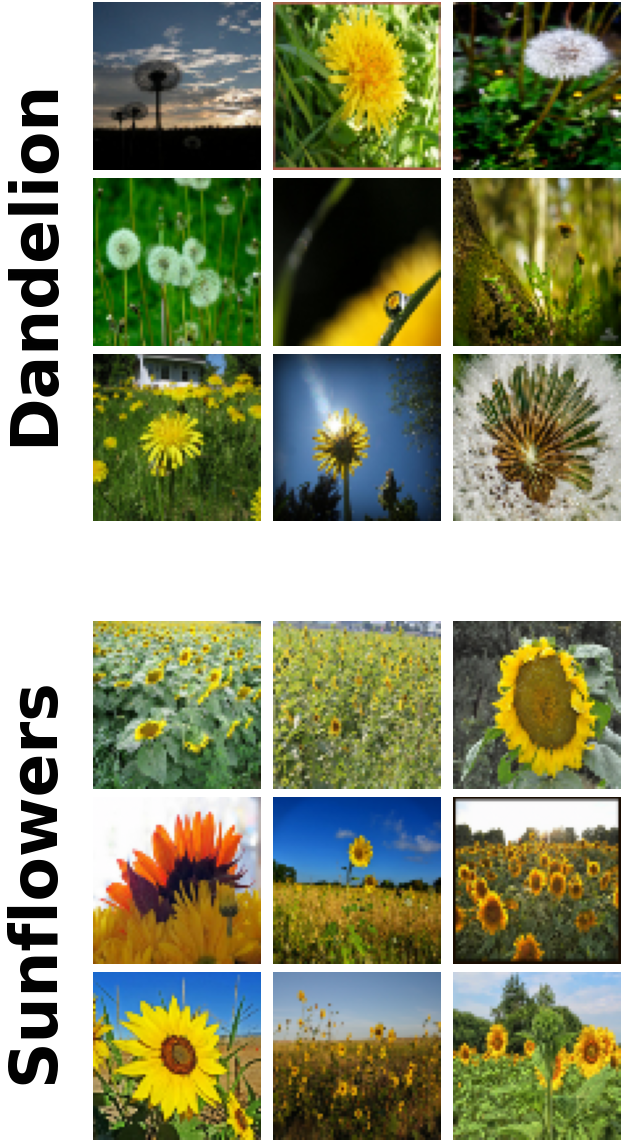}\label{fig:flower:r0}}
\hspace{2pt}
\subfloat[Regime 1]{\includegraphics[height=0.22\textheight]{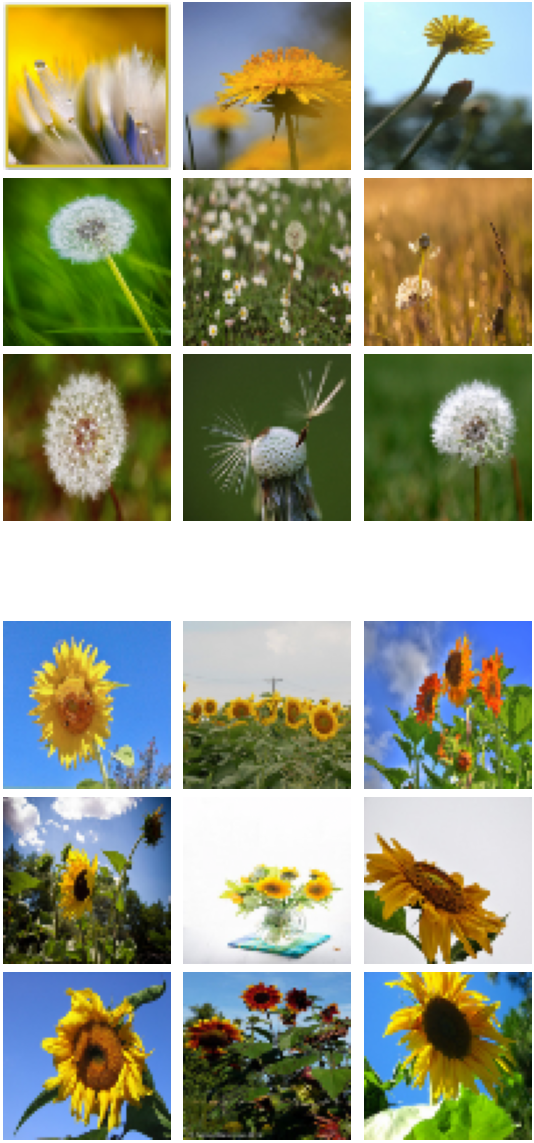}\label{fig:flower:r1}}
\hspace{2pt}
\subfloat[Regime 6]{\includegraphics[height=0.22\textheight]{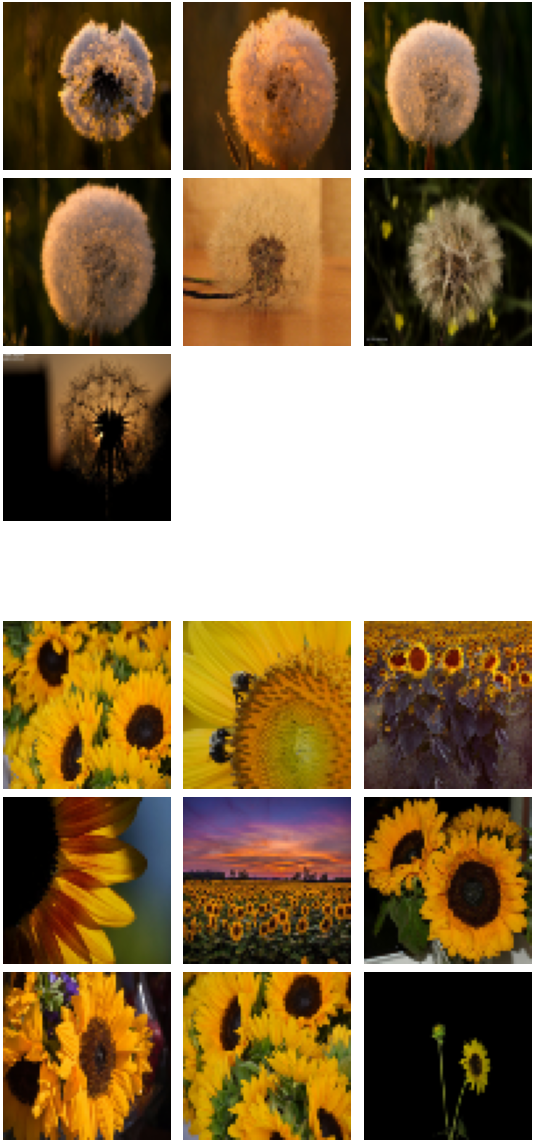}\label{fig:flower:r6}}
\hspace{2pt}
\subfloat[Regime 7]{\includegraphics[height=0.22\textheight]{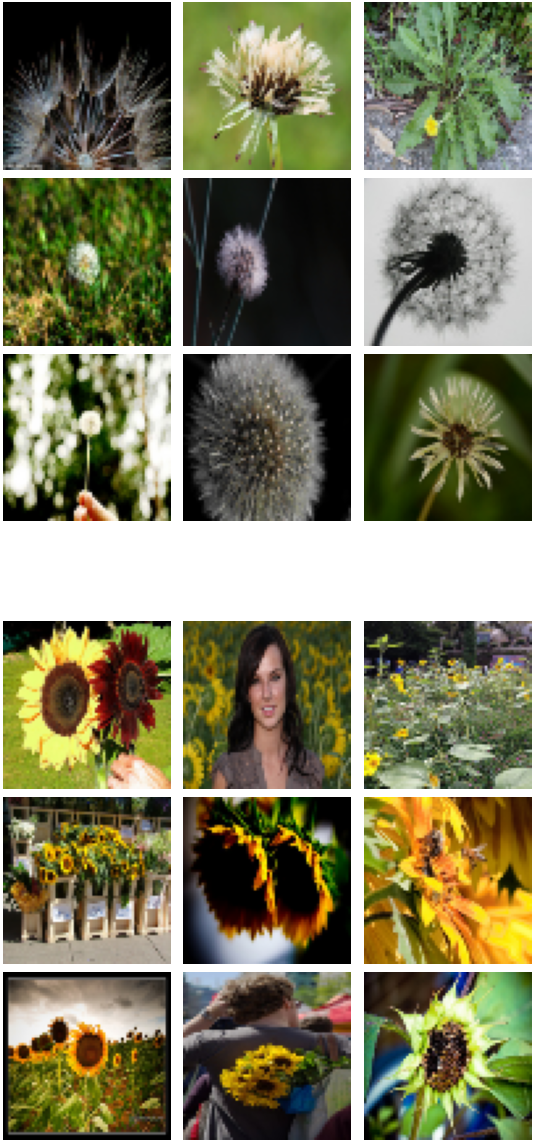}\label{fig:flower:r7}}
\hspace{2pt}
\subfloat[Regime 9]{\includegraphics[height=0.22\textheight]{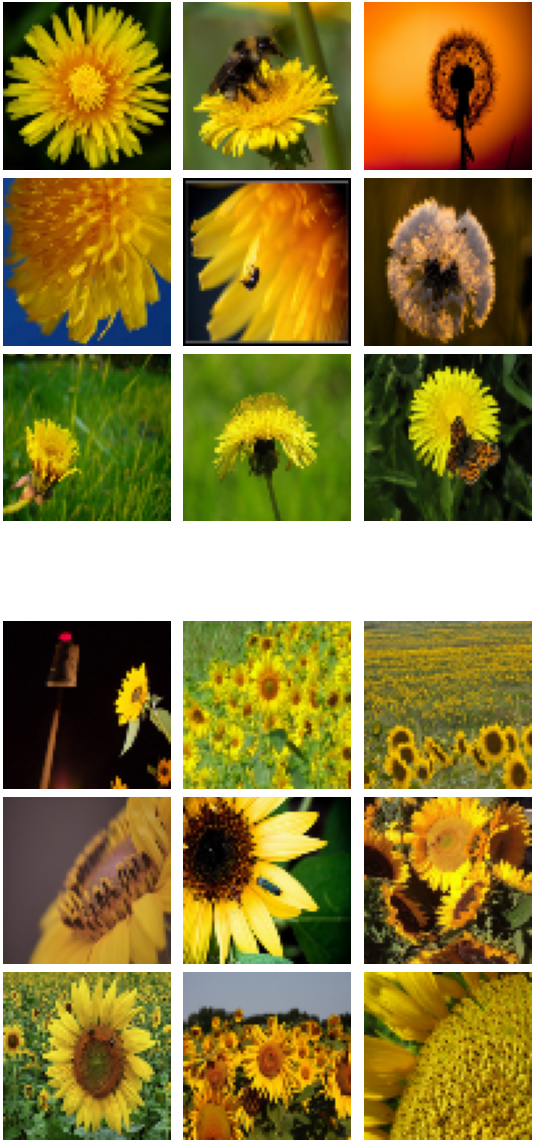}\label{fig:flower:r9}}
\caption{Training images grouped by regime. (a)--(e) MNIST digits~1 and~3 in five selected regimes, revealing variations in writing style among same-digit images. (f)--(j) CIFAR-10 dogs and trucks in five selected regimes, revealing variations in color, shape, and style among same-class images. (k)--(o) Dandelions and sunflowers from Flowers in five selected regimes, revealing variations in visual tone, framing, and bloom stage among same-class images. Within each regime, each class is shown as a \(3\times3\) grid of randomly sampled training images.}
\label{fig:img:regimes}
\end{figure*}

Fig.~\ref{fig:img:regimes} shows that the learned regimes are human-readable. For MNIST in Fig.~\ref{fig:img:regimes}\subref{fig:mnist:r1}--\subref{fig:mnist:r8}, each regime isolates a coherent writing style. For example, digit~1 has a more formal style in regime~1, a strongly slanted version in regime~2, and thin upright strokes in regime~8. Digit~3 also shows distinct variations across regimes. For CIFAR-10 in Fig.~\ref{fig:img:regimes}\subref{fig:cifar:r0}--\subref{fig:cifar:r9}, the partition captures interpretable structure even at \(32\times32\) resolution. Dogs separate largely by pose and framing: regime~0 collects head-only close-ups facing the camera; regime~7 gathers side-facing views; regimes~3 and~6 contain full-body shots. Trucks are instead organized by viewpoint, with regimes~3, 6, and~7 each grouping trucks facing a consistent direction. Regime~9 is color-driven across both classes, with predominantly white dogs on blue backgrounds and blue-toned trucks. For Flowers in Fig.~\ref{fig:img:regimes}\subref{fig:flower:r0}--\subref{fig:flower:r9}, regimes~6 and~9 share a warm visual tone and differ mainly in dandelions, with white seed heads in regime~6 and yellow blooms in regime~9; regime~1 gathers sunflowers on lighter backgrounds, and regime~0 leans toward distant, wide-field shots. The visual differences between the displayed regimes are less pronounced for Flowers than for MNIST and CIFAR-10. Overall, \method organizes images by writing style, pose, framing, and color, suggesting that it partitions the input space along human-readable features.
\begin{figure*}[!t]
\centering
\includegraphics[width=0.7\textwidth]{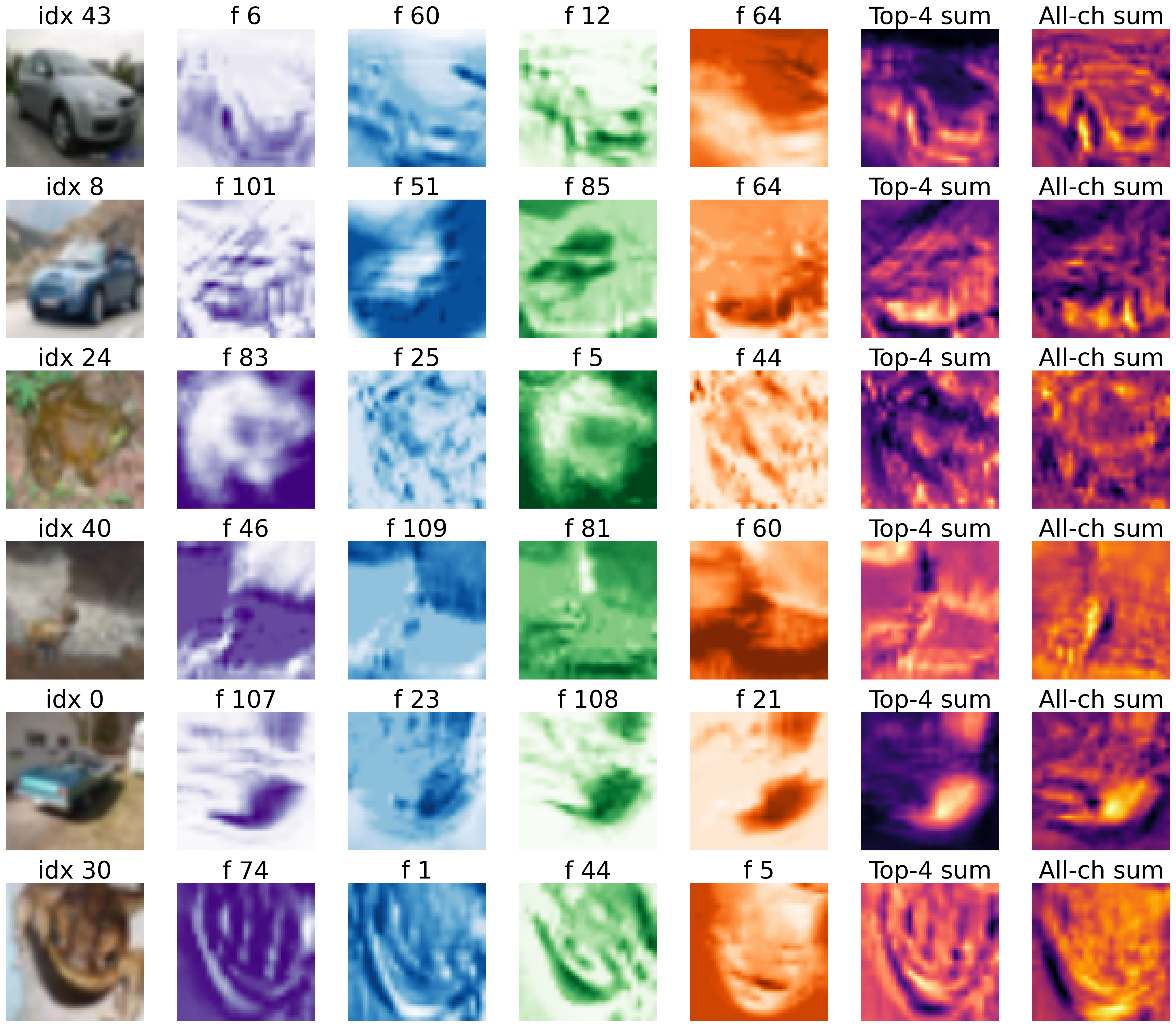}
\caption{CIFAR-10 local expert contribution maps. The first column shows the original image and its index (``idx''). The next four columns show the filter groups with the largest total absolute weight$\times$activation. Each group sums the expert-weighted pre-pooling feature maps across the three RGB input channels. The final two columns sum contributions over these four groups (Top-4) and all channels, respectively. All maps are computed for the ground-truth class using the expert with the highest routing posterior. Larger positive contributions appear darker in the per-group maps and brighter in the aggregate maps.}
\label{fig:cifar:heat}
\end{figure*}

After analyzing regime discovery, we now turn to regime-specific prediction logic. Fig.~\ref{fig:cifar:heat} visualizes the feature-map locations that are most influential for each prediction. In row~1, \texttt{f60} and \texttt{f12} trace the car's silhouette while \texttt{f64} focuses on the windshield. The Top-4 sum concentrates on the silhouette. In row~2, \texttt{f101} outlines the silhouette, \texttt{f85} responds to the upper body and \texttt{f64} to the lower body, while \texttt{f51} instead focuses on the background. The Top-4 sum emphasizes the lower body together with the full silhouette. Row~5 is the most consistent example in which every displayed filter group, together with the Top-4 and All-channel sums, concentrates on the shadowed part of the car. For the frog in row~6, \texttt{f74}, \texttt{f1}, and \texttt{f44} spread over the whole animal while \texttt{f5} focuses on the background, and the Top-4 sum covers the body and traces its outline. Notably, several filter groups recur across images: \texttt{f64} on both cars (rows~1--2), \texttt{f44} and \texttt{f5} on both frogs (rows~3 and~6), and \texttt{f60} on a car (row~1) and the deer (row~4), which suggests that these filter groups capture reusable visual patterns. Across rows, the Top-4 and All-channel sums remain similar, indicating that predictive contributions are concentrated in a few dominant filter groups.

\subsection{Ablation Studies}\label{sec:experiments:ablation}

\noindent\textbf{Does the Router Carry the Predictive Burden?}\label{sec:experiments:ablation:heavy_lifting}
A key design claim of \method is that the routing stage discovers a meaningful partition while the predictive burden is carried mostly by the local linear experts rather than by the router itself. To verify this claim, we replace the local linear experts with a majority-vote baseline that fits no model within each regime. For classification, the majority vote replaces each local expert prediction $e_k(x)$ in Eq.~\eqref{eq:6} with the empirical class-frequency vector $d_k$ of regime $k$. Each component $d_{k,c}$ is the proportion of samples in class $c$ among all training samples in regime $k$. The final prediction aggregates them as in Eq.~\eqref{eq:6}, giving the weighted vote $\hat{f}_{\mathrm{mv}}(x) = \argmax_c \sum_{k=1}^{K} \pi_k(x)\, d_{k,c}$. For AUC evaluation, we use the mixed class-frequency vector $\sum_{k=1}^{K}\pi_k(x)d_k$ before taking the argmax. For regression, we use $\bar{y}_k$, the mean response for training samples in regime $k$, so the prediction becomes $\hat{f}_{\mathrm{mv}}(x) = \sum_{k=1}^{K} \pi_k(x)\, \bar{y}_k$. Since linear modeling is skipped in all regimes, this experiment reveals the standalone prediction capability of the routing stage.

\begin{figure}[t]
\centering
\includegraphics[width=\columnwidth]{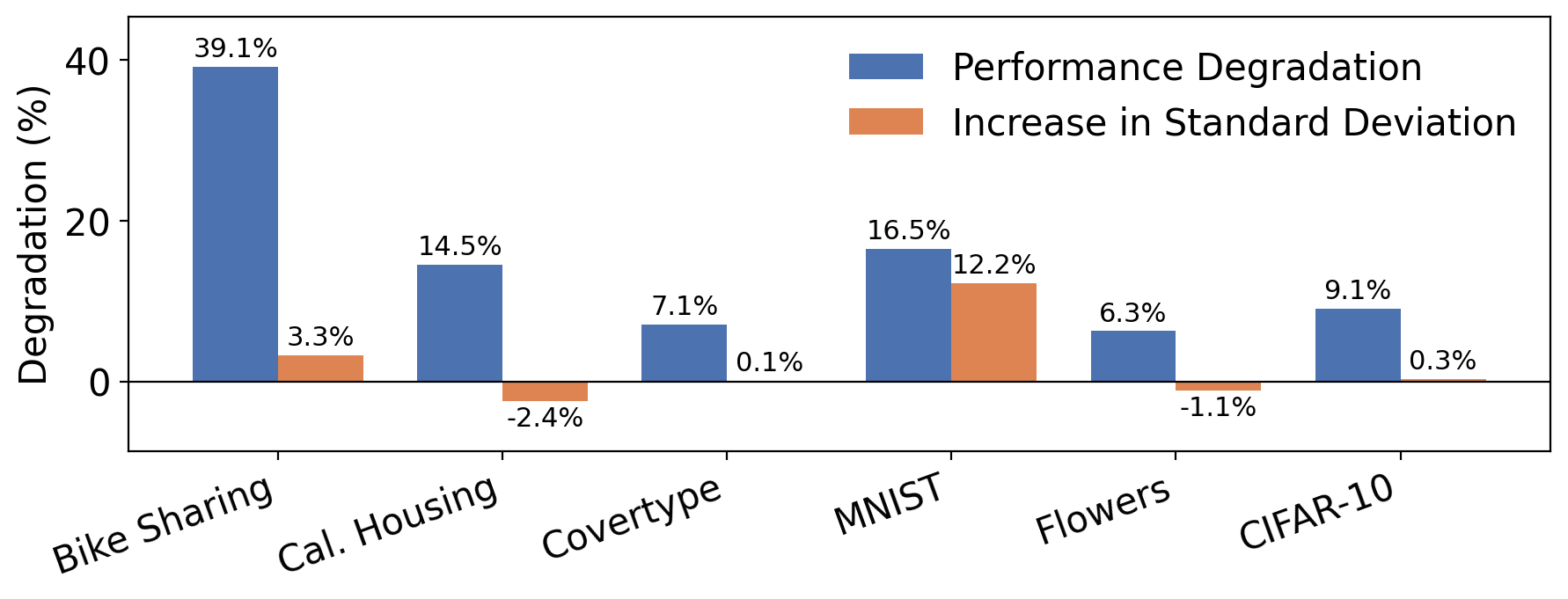}
\caption{Effect of replacing the local linear experts with majority-vote predictions. Blue bars give the performance degradation (percentage increase in RMSE for regression, or percentage drop in AUC for classification). Orange bars show the increase in the cross-run standard deviation of the test metric, expressed as a percentage of the full model's test metric. Negative orange bars indicate a decrease in standard deviation. Positive values mean the ablation is worse than the full \method.}
\label{fig:abl:vote}
\end{figure}

Fig.~\ref{fig:abl:vote} summarizes the results across all six datasets. Replacing the local experts with majority-vote predictions reduces predictive performance on every dataset and increases the cross-run standard deviation of the test metric on four of the six datasets, most noticeably on MNIST and Bike Sharing. This indicates that the discovered regimes alone are insufficient for strong predictive performance. These results show that the \emph{linear} experts recover predictive performance that within-regime constants cannot, suggesting that the learned regimes successfully decompose a nonlinear prediction task into subproblems that can be effectively modeled by linear models. Overall, these findings indicate that \method identifies a meaningful two-level structure that achieves both interpretability and strong predictive accuracy.

\medskip

\noindent\textbf{Routing with the Explanatory Gate.}\label{sec:experiments:ablation:gate_routing}
Section~\ref{sec:method} introduced \method-EG, the two-layer linear model in which the explanatory gate is used as the router instead of as a post hoc explanation of the router. Since the gate already recovers the discovered regimes with high Gate AUC, as shown in Tables~\ref{tab:1} and~\ref{tab:2}, we examine whether \method-EG predicts as well as \method.

\begin{figure}[t]
\centering
\includegraphics[width=\columnwidth]{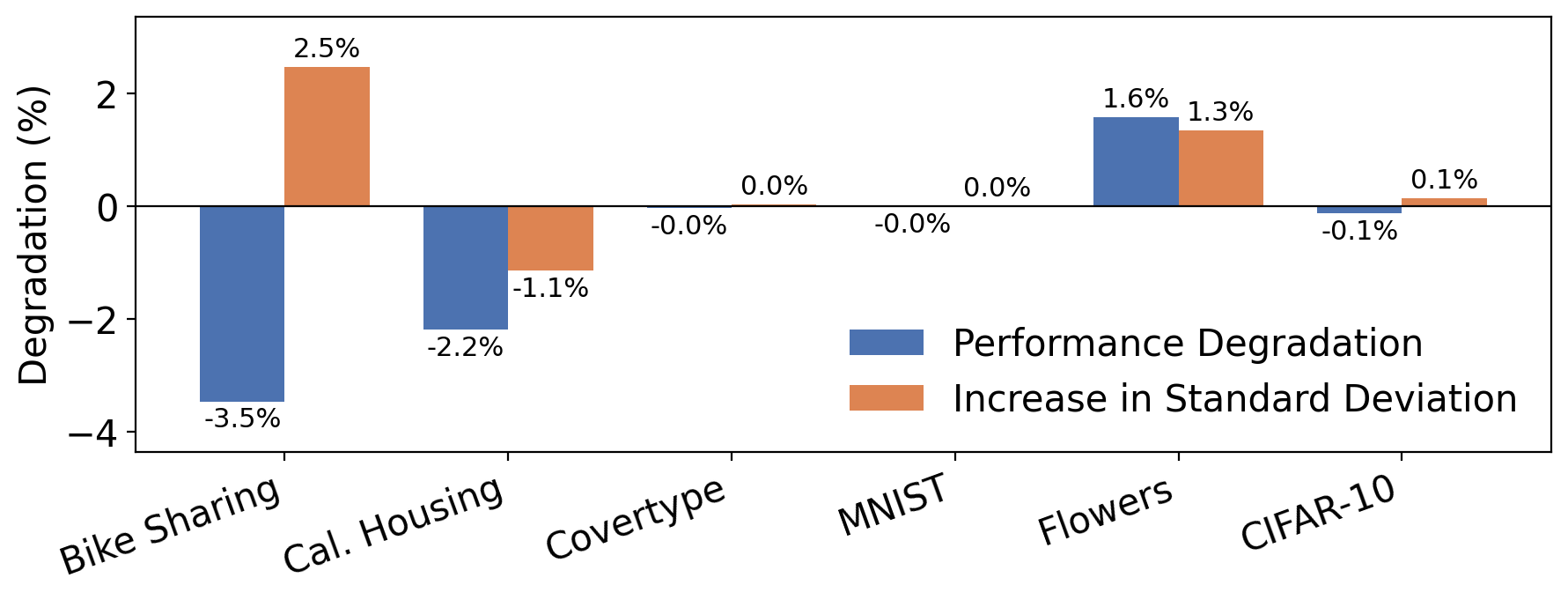}
\caption{Effect of replacing the router with the explanatory gate and refitting the local linear experts on all training samples under the gate posterior $\hat q_k(x)$. Bars and sign convention as in Fig.~\ref{fig:abl:vote}.}
\label{fig:abl:gate}
\end{figure}

As shown in Fig.~\ref{fig:abl:gate}, \method-EG performs comparably to \method. The largest performance degradation is 1.6\% on Flowers, and \method-EG slightly improves on Bike Sharing and California Housing. The differences on Covertype, MNIST, and CIFAR-10 are negligible. These results indicate that \method-EG achieves predictive performance comparable to \method, demonstrating that a two-layer linear model can achieve strong predictive performance while remaining inherently interpretable. We retain WSSN-based routing in the main results to evaluate the predictive usefulness of activation-induced regimes directly.

\medskip

\noindent\textbf{Effect of the Number of Regimes \texorpdfstring{$K$}{K}.}\label{sec:experiments:ablation:num_regimes}
Section~\ref{sec:theory:design_implications} motivates a trade-off between representational capacity and statistical reliability as $K$ varies. We therefore examine how \method performs with varying $K$ on the shared grid $K\in\{5,\,10,\,50,\,100,\,150,\,200\}$ for each dataset. We additionally include results for CIFAR-10 and Covertype at $K=20$ and $K=30$, respectively, which are the values of $K$ used for these datasets in the main experiments. Fig.~\ref{fig:abl:k} illustrates the results for each dataset, with regression and classification tasks separated into two plots.

Eq.~\eqref{eq:16} provides a viewpoint for interpreting Fig.~\ref{fig:abl:k}. Since the fitted experts retain nearly all features (Table~\ref{tab:4} of the supplementary material), we take $s_{\max}=p$ in Eq.~\eqref{eq:16} and report the ratio $n/(K\,p\log p)$ as a heuristic diagnostic of sample size relative to model complexity as $K$ varies. Larger values suggest a more favorable balance between the available samples and the complexity of the local models.

For Covertype and California Housing, the ratio remains above $1.0$ over the entire grid, decreasing from $345.3$ to $8.6$ on Covertype ($n=371{,}847$, $p=54$) and from $158.8$ to $4.0$ on California Housing ($n=13{,}209$, $p=8$). Performance improves monotonically on both datasets. As $K=200$ yields little gain over $K=150$ on California Housing, we chose $K=150$ for better interpretability, and likewise $K=30$ for Covertype. Bike Sharing ($n=11{,}122$, $p=19$) shows that this ratio alone does not identify the best $K$. Its RMSE is minimized at $K=10$ (mean 134.707, std 14.026), where the ratio is still $19.9$, and increases thereafter. The ratio decreases to approximately $1.0$ at $K=200$. Nevertheless, Bike Sharing clearly shows a U-shaped curve, with the best performance at an intermediate $K$, consistent with the trade-off between representational capacity and statistical reliability.

For the image datasets, the representation is high-dimensional, and the ratio is below $1.0$ over most of the grid: the ratio drops below $1.0$ at $K=20$ on CIFAR-10 ($n=45{,}000$, $p=384$), and at $K=5$ it is already $0.54$ on MNIST ($n=48{,}000$, $p=2{,}304$) and $0.22$ on Flowers ($n=2{,}569$, $p=384$). Fig.~\ref{fig:abl:k} shows that CIFAR-10's AUC decreases steadily, with a sharper decline beyond $K=50$, while Flowers shows the largest AUC decrease among the three image datasets. MNIST maintains a near-perfect AUC, illustrating that this ratio alone does not determine predictive performance.

Except for Covertype and California Housing, $K$ was matched to the applicable baseline settings.

\begin{figure}[t]
\centering
\includegraphics[width=\columnwidth]{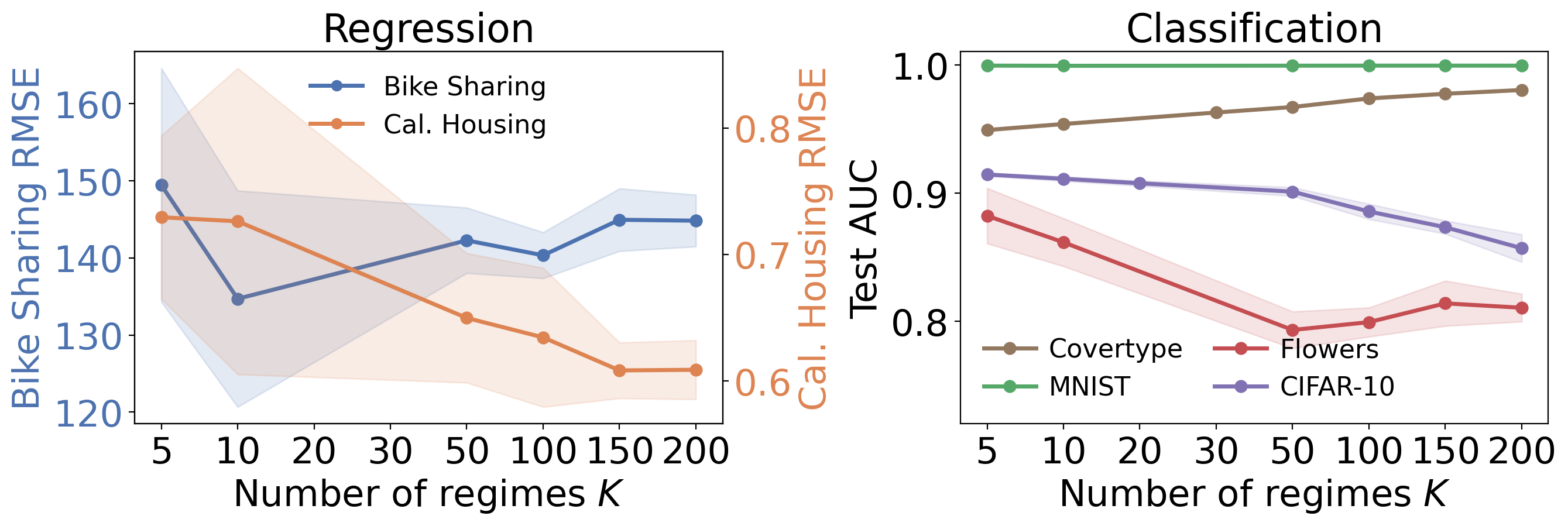}
\caption{Effect of the number of regimes $K$ on test performance. Left: regression RMSE (Bike Sharing on the left axis, California Housing on the right). Right: classification AUC.}
\label{fig:abl:k}
\end{figure}

\section{Conclusion}\label{sec:conclusion}
We introduced \method, regime-switching, explainable, and activation-induced linear models that derive latent regimes from the piecewise-affine activation geometry of a distilled wide, shallow ReLU student network (WSSN) and fit a mixture of local linear experts in the discovered regimes. \method provides dual interpretability: 1) regime-specific feature attributions explaining each prediction; 2) an explanatory gate explaining the regime-switching logic. By analyzing an idealized setting in which the true regimes are known and the model is exactly linear within each, we characterized how regime assignment responds to input perturbations and when within-regime supports are recoverable, revealing a complexity--stability trade-off in the number of regimes $K$, consistent with the performance curves in the ablation study in Section~\ref{sec:experiments:ablation:num_regimes}.

Experiments on tabular and image datasets showed that \method attains competitive predictive accuracy relative to prior DNN-guided mixture surrogates and inherently interpretable models, with local explanations illustrated qualitatively through gate coefficient heat maps and regime-level analysis. Ablations further showed that the router does not carry the predictive burden and that the explanatory gate can itself serve as the routing signal with little loss in accuracy.

\putbib[refs]

\makeatletter
\immediate\write\@auxout{\string\newlabel{count:bib}{{\arabic{enumiv}}{}{}{}{}}}%
\makeatother
\clearpage
\end{bibunit}

\begin{bibunit}[IEEEtran]
\spaceskip=0.33em plus 0.02em minus 0.2em
\setcounter{section}{0}
\renewcommand{\theHsection}{supp.\arabic{section}}

\makeatletter
\let\@REALMoldbibliography\thebibliography
\renewcommand{\thebibliography}[1]{%
  \@REALMoldbibliography{#1}%
  \setcounterref{enumiv}{count:bib}%
  \settowidth{\labelwidth}{\@biblabel{99}}%
  \leftmargin\labelwidth
  \advance\leftmargin\labelsep\relax
}
\makeatother

\title{Supplementary Material for\\ \method: Regime-Switching, Explainable, and Activation-Induced \\Linear Models}

\author{Xiaoran Cheng, Sen Na, and Jia Li,~\IEEEmembership{Fellow,~IEEE}}

\markboth{}{Cheng \MakeLowercase{\textit{et al.}}: \method---Supplementary Material}

\makesupplementtitle

\section{Datasets}\label{sec:datasets}
\noindent\textbf{Bike Sharing.}
This dataset records hourly bike rentals in the Capital Bikeshare system during 2011 and 2012. The target is the total number of rented bikes. The inputs are temporal and weather covariates, including year, month, hour, holiday indicator, weekday, working-day indicator, temperature, apparent temperature, humidity, wind speed, season, and weather condition. Hour, weekday, and month are encoded as sine and cosine pairs; season and weather condition are dummy encoded with the first level dropped, yielding \(19\) input features.

\medskip

\noindent\textbf{California Housing.}
Derived from the 1990 U.S. Census, this dataset contains census block groups in California, with median house value as the target. The 8 covariates are median income, house age, average numbers of rooms and bedrooms, population, average occupancy, latitude, and longitude.

\medskip

\noindent\textbf{Covertype.}
This dataset is used to classify forest cover types in the Roosevelt National Forest of northern Colorado. Each instance represents a \(30\,\mathrm{m}\times30\,\mathrm{m}\) patch of forested land. The inputs comprise 10 continuous variables (elevation, aspect, slope, horizontal and vertical distances to hydrology, horizontal distance to roadways, hillshade at 9 a.m., noon, and 3 p.m., and horizontal distance to fire points), 4 binary wilderness-area indicators, and 40 binary soil-type indicators, yielding \(54\) features.

\medskip

\noindent\textbf{MNIST.}
This handwritten digit classification dataset contains \(28\times28\) grayscale images of digits \(0\) through \(9\).

\medskip

\noindent\textbf{Flowers.}
This dataset contains \(180\times180\) color images (after resizing) from five flower categories: daisy, dandelion, roses, sunflowers, and tulips.

\medskip

\noindent\textbf{CIFAR-10.}
This dataset contains \(32\times32\) color images from ten object classes: airplane, automobile, bird, cat, deer, dog, frog, horse, ship, and truck.

\section{Implementation Details}\label{sec:implementation}
Across all datasets, we first train a DNN teacher and then follow the three-stage pipeline in Section~\ref{sec:method} of the main paper: 1)~we distill the teacher into the WSSN with distillation weight \(\alpha=0.7\) and, for classification, temperature \(T=2\) in Eq.~\eqref{eq:2}; 2)~we freeze the WSSN, fixing both \(s(x)\) and \(u(x)\), and optimize only the \(K\) centroids to discover regimes; 3)~with the routing posterior \(\pi_k(x)\) fixed, we fit the local experts and explanatory gate using Eqs.~\eqref{eq:7} and~\eqref{eq:9}, respectively. Table~\ref{tab:3} lists the dataset-specific hyperparameters.

\begin{table*}[t]
\begingroup
\footnotesize
\setlength{\tabcolsep}{5pt}
\renewcommand{\arraystretch}{1.08}
\caption{Dataset-specific hyperparameters. Shared hyperparameters are given in the text. The minimum regime size \(\tau\) is used only for tabular data augmentation.}
\label{tab:3}
\begin{center}
\begin{tabular}{l cccccc}
\hline
\rowcolor{gray!18}
 & Bike Sharing & Cal.\ Housing & Covertype & MNIST & Flowers & CIFAR-10 \\
\hline
Train / validation / test & 64/16/20\% & 64/16/20\% & 64/16/20\% & 48k/12k/10k & 2,569/550/551 & 45k/5k/10k \\
Batch size & 256 & 256 & 512 & 300 & 32 & 200 \\
Learning rate (teacher) & \(10^{-3}\) & \(10^{-3}\) & \(10^{-3}\) & \(10^{-3}\) & \(10^{-3}\) & \(10^{-3}\) \\
Learning rate (Stage~1) & \(10^{-3}\) & \(10^{-3}\) & \(3\times10^{-4}\) & \(10^{-3}\) & \(10^{-3}\) & \(10^{-3}\) \\
Learning rate (Stages~2--3) & \(10^{-3}\) & \(10^{-3}\) & \(10^{-3}\) & \(10^{-3}\) & \(10^{-3}\) & \(10^{-3}\) \\
Representation dimension \(p\) & 19 & 8 & 54 & 2,304 & 384 & 384 \\
WSSN width \(n_{\mathrm{hid}}\) & 256 & 256 & 256 & 512 & 512 & 512 \\
Regimes \(K\) & 50 & 150 & 30 & 10 & 10 & 20 \\
Active-unit weight \(\omega\) & 10.0 & 2.0 & 2.0 & 2.0 & 2.0 & 2.0 \\
Sharpening factor \(\eta\) & 1.02 & 1.02 & 1.02 & 1.1 & 1.1 & 1.1 \\
Maximum inverse temperature \(\beta_{\max}\) & 150 & 150 & 150 & 100 & 300 & 300 \\
Minimum regime size \(\tau\) & 50 & 100 & 50 &  &  &  \\
\hline
\end{tabular}
\end{center}
\endgroup
\end{table*}

\medskip

\noindent\textbf{Optimization.}
All training stages use Adam for at most \(200\) epochs with an early-stopping patience of \(10\) epochs. Learning rates are listed in Table~\ref{tab:3}. The expert and gate penalty weights are selected from \(\{0,10^{-3},10^{-2},10^{-1}\}\) using the validation set. For final predictions, we use the checkpoint with the highest validation AUC for classification or the lowest validation RMSE for regression. For all datasets, the inverse temperature starts at \(\beta_{\mathrm{clus}}=1\) and follows Eq.~\eqref{eq:4}.

\medskip

\noindent\textbf{Augmentation.}
For tabular data, we generate \(\tau-n_k\) teacher-labeled synthetic samples for each regime with \(0<n_k<\tau\) using Eq.~\eqref{eq:10} with \(\epsilon=0.05\). Image data are not augmented because sufficient training samples are already available in each regime.

\medskip

\noindent\textbf{Architectures.}
For tabular data, the teacher is a ReLU MLP with two hidden layers of widths \((256,64)\). The WSSN has one ReLU hidden layer and operates directly on the original features, so \(u(x)=x\). For MNIST, the teacher has two \(5\times5\) convolutional layers with \(4\) and \(8\) channels, respectively, each followed by average pooling, and then a fully connected layer of width \(120\) and a final classification layer. The WSSN feature extractor uses a single \(5\times5\) convolution with \(4\) channels. Note that, for MNIST, the \(4\times24\times24\) feature maps are flattened without pooling. For Flowers and CIFAR-10, the teacher is a LeNet-style CNN with two \(3\times3\) convolutional layers, each with \(32\) channels. The WSSN feature extractor uses a depthwise \(6\times6\) convolution with \(128\) filters per input channel, followed by adaptive max pooling to one value per channel. For image datasets, the WSSN's ReLU head, local experts, and explanatory gate all operate on the extracted representation \(u(x)\).

\medskip

\noindent\textbf{Data Splits and Preprocessing.}
For tabular datasets, we randomly split the data into training, validation, and test sets in each run, using class-stratified sampling for classification tasks. For MNIST and CIFAR-10, we use the official test sets and randomly split the training sets into training and validation subsets. For Flowers, we randomly divide the \(3{,}670\) images into training, validation, and test sets and resize them to \(180\times180\). All image inputs are normalized separately for each channel.

\section{Additional Interpretability Analyses}\label{sec:tab_interp} 
We provide tabular routing and coefficient diagnostics, followed by additional local expert contribution maps for MNIST and Flowers.

\medskip

\noindent\textbf{Routing and Regime Sizes.}
Fig.~\ref{fig:tab:route} shows sharp routing across all three datasets, with maximum posteriors concentrated near one and no single dominant regime. California Housing has many small regimes and a long right tail in regime size, indicating a more fragmented partition.

\begin{figure*}[!t]
\centering
\captionsetup[subfloat]{captionskip=0pt}
\includegraphics[width=0.32\textwidth]{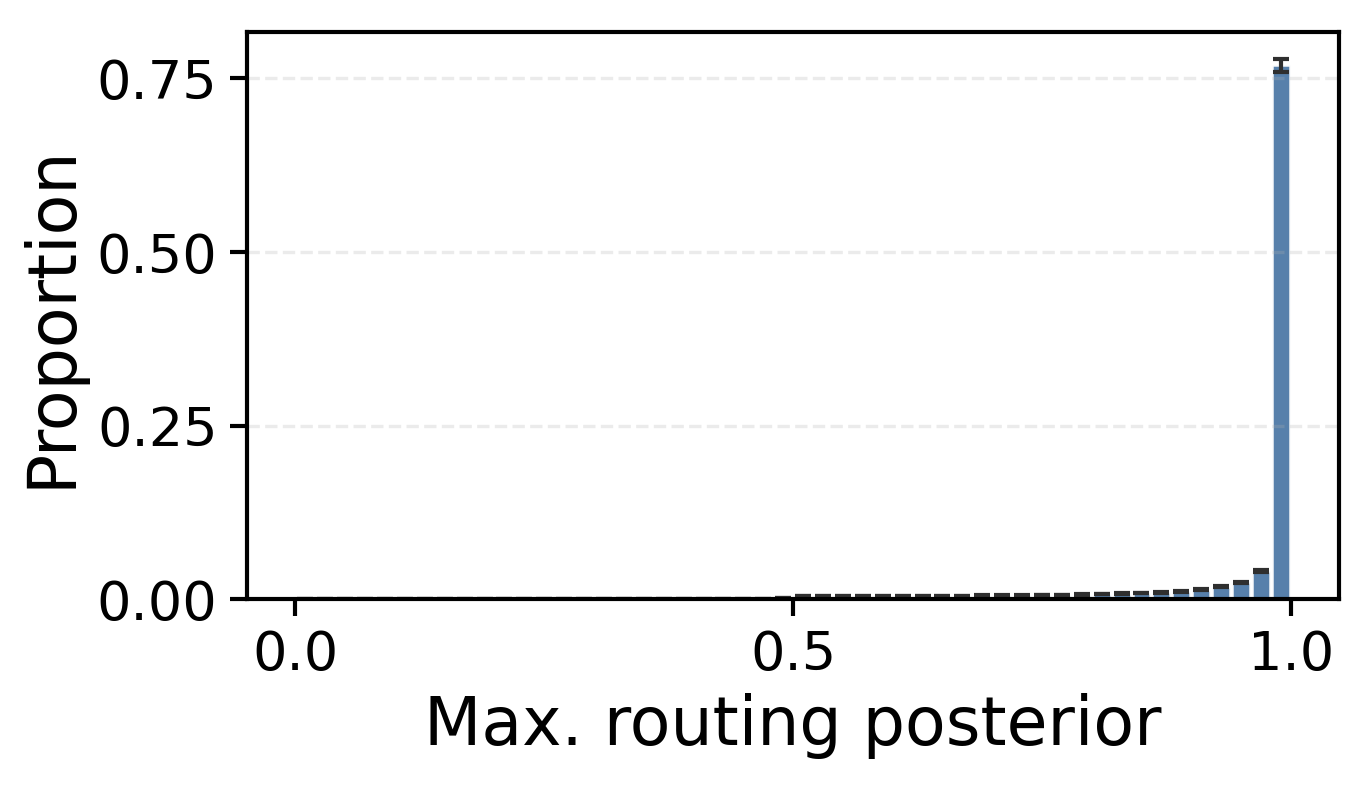}
\hfill
\includegraphics[width=0.32\textwidth]{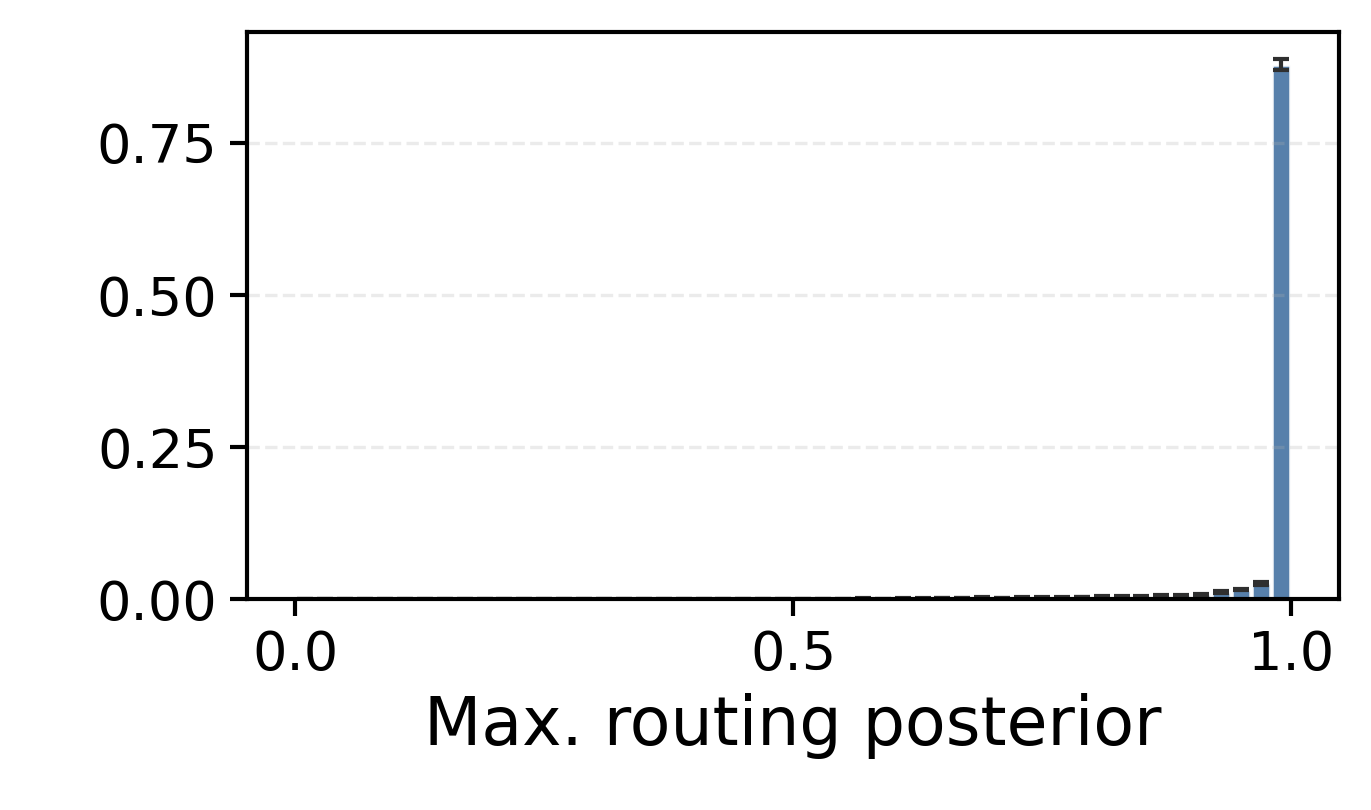}
\hfill
\includegraphics[width=0.32\textwidth]{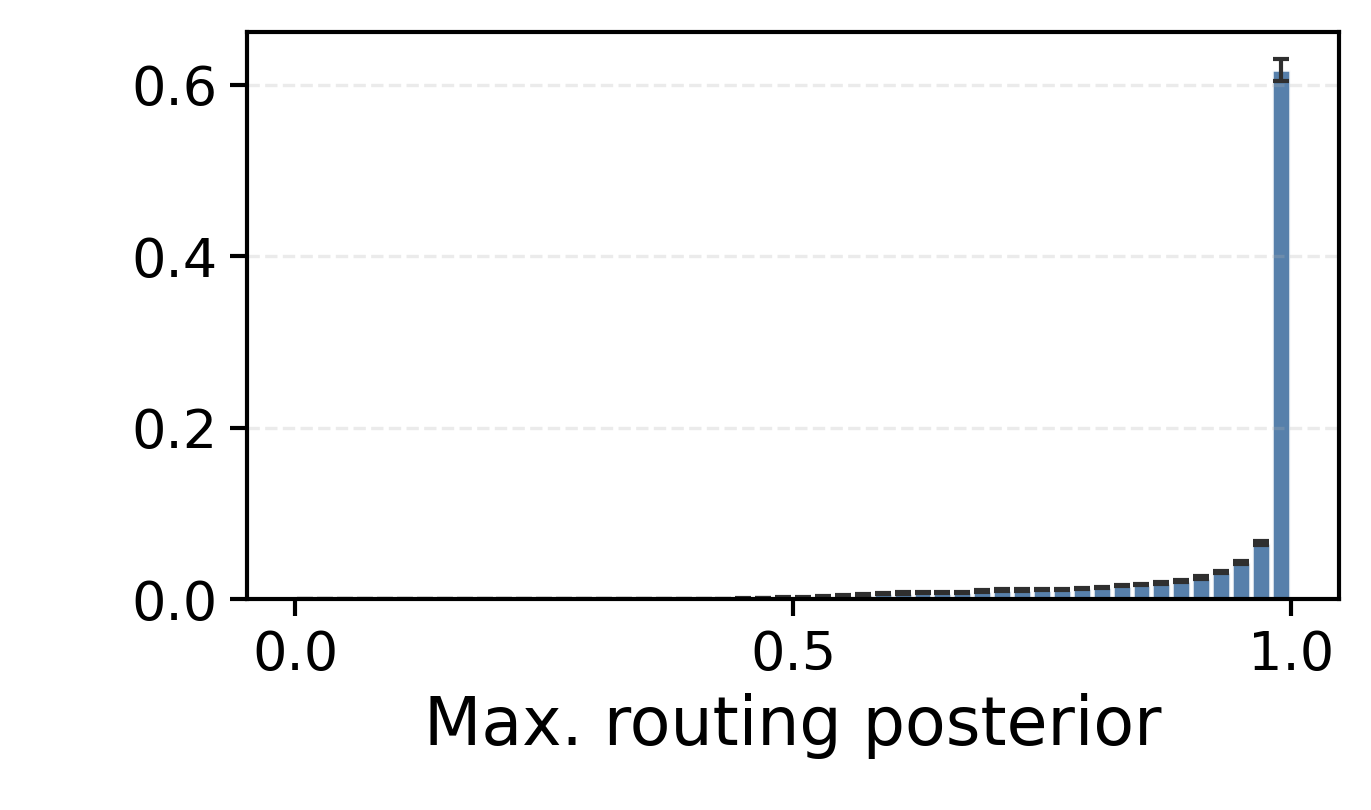}
\par\vspace{-10pt}
\subfloat[Covertype]{\includegraphics[width=0.32\textwidth]{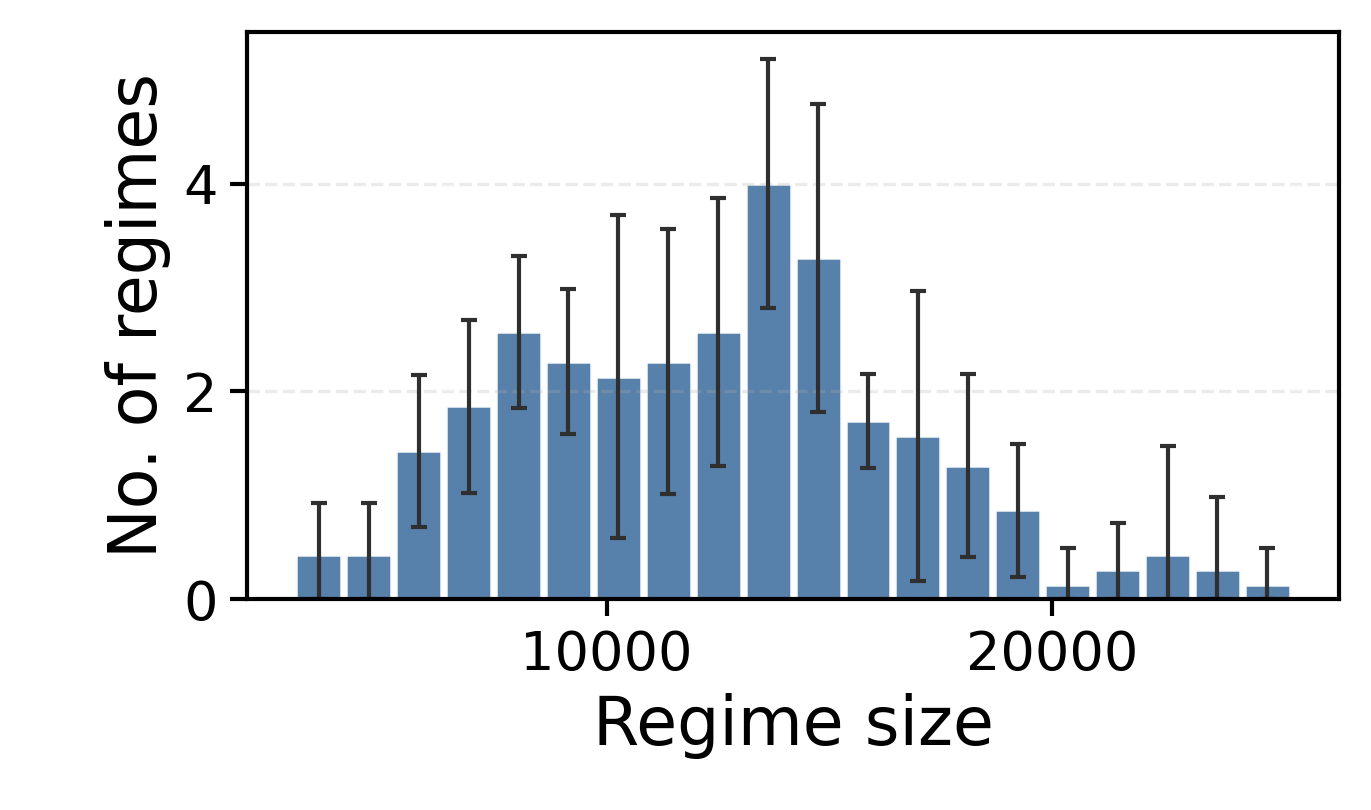}\label{fig:cov:post}\label{fig:cov:size}\label{fig:cov}}
\hfill
\subfloat[Bike Sharing]{\includegraphics[width=0.32\textwidth]{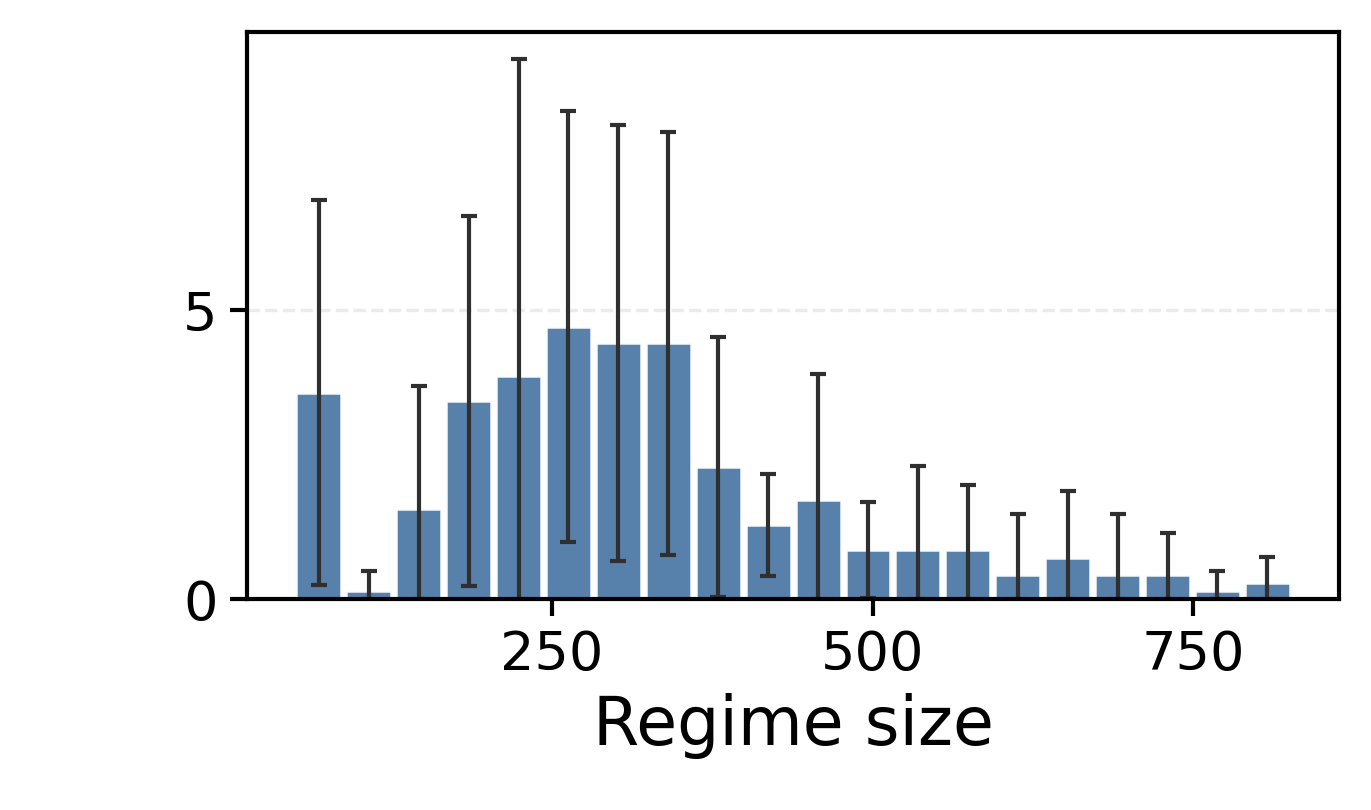}\label{fig:bike:post}\label{fig:bike:size}\label{fig:bike}}
\hfill
\subfloat[California Housing]{\includegraphics[width=0.32\textwidth]{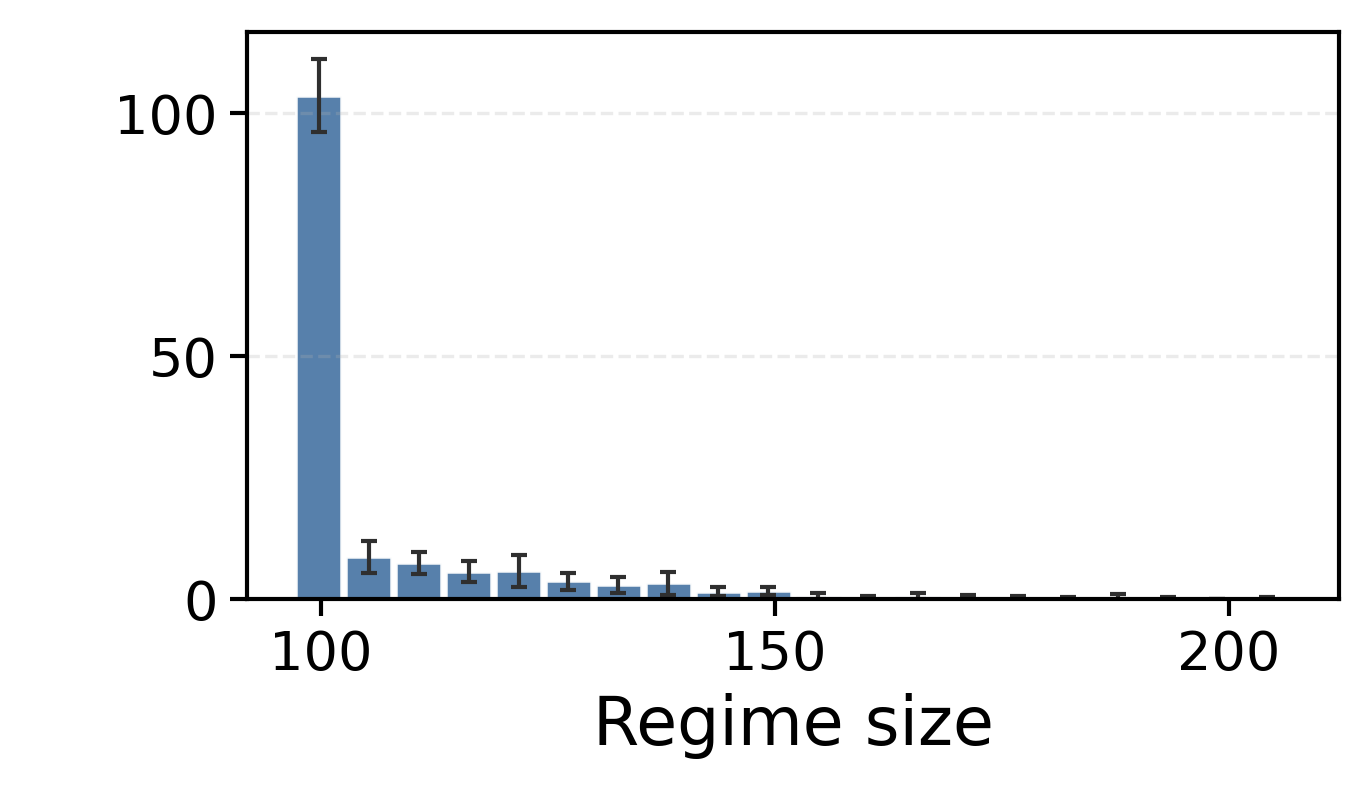}\label{fig:cal:post}\label{fig:cal:size}\label{fig:cal}}
\caption{Routing sharpness and regime sizes on Covertype~(a), Bike Sharing~(b), and California Housing~(c). Conventions follow Fig.~\ref{fig:img:route} of the main paper.}
\label{fig:tab:route}
\end{figure*}

\begin{figure*}[!t]
\centering
\begingroup
\captionsetup[subfloat]{captionskip=0pt}
\includegraphics[width=0.33\textwidth]{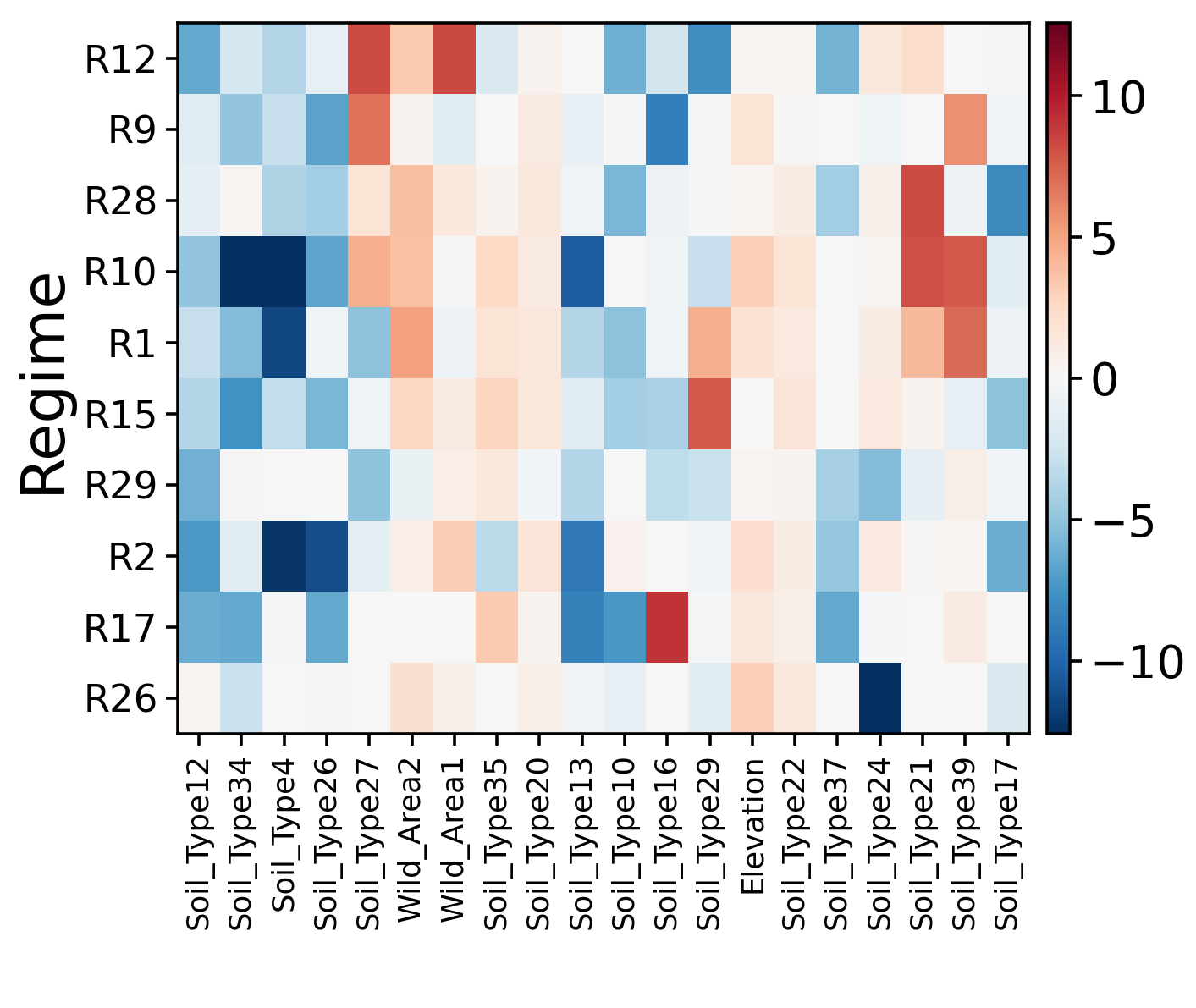}%
\hfill
\includegraphics[width=0.33\textwidth]{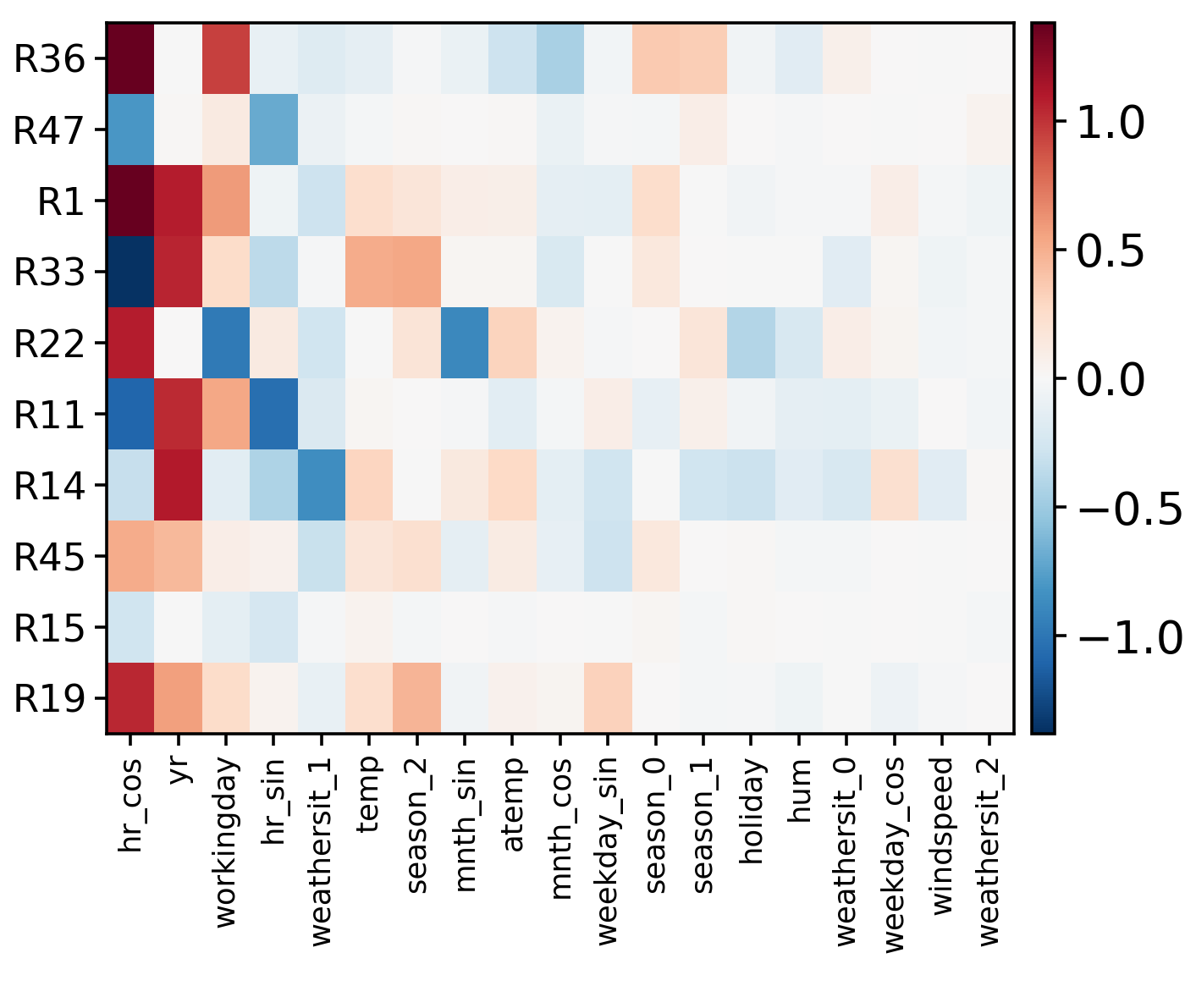}%
\hfill 
\includegraphics[width=0.33\textwidth]{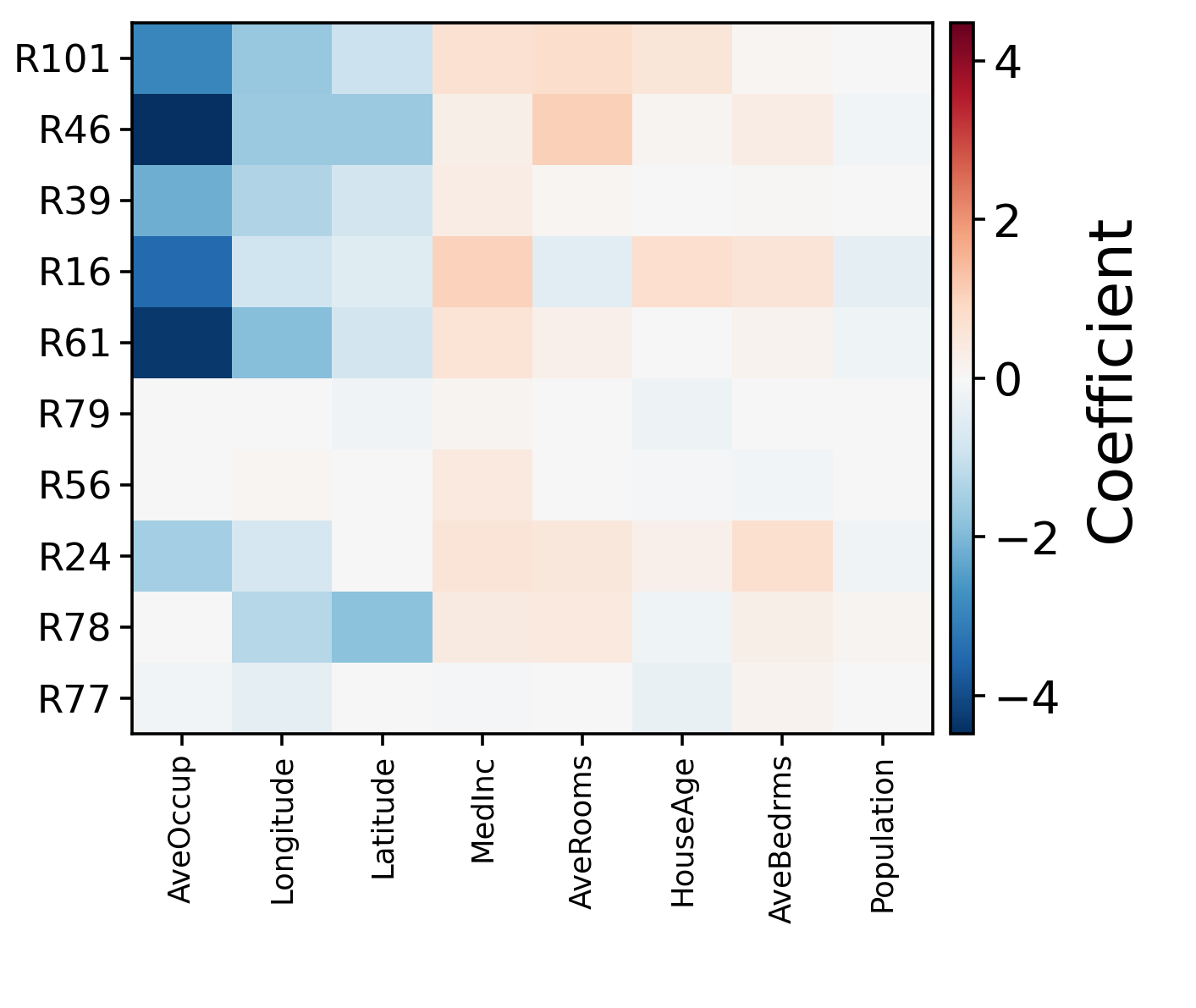}
\par\vspace{-10pt}
\subfloat[Covertype]{\includegraphics[width=0.33\textwidth]{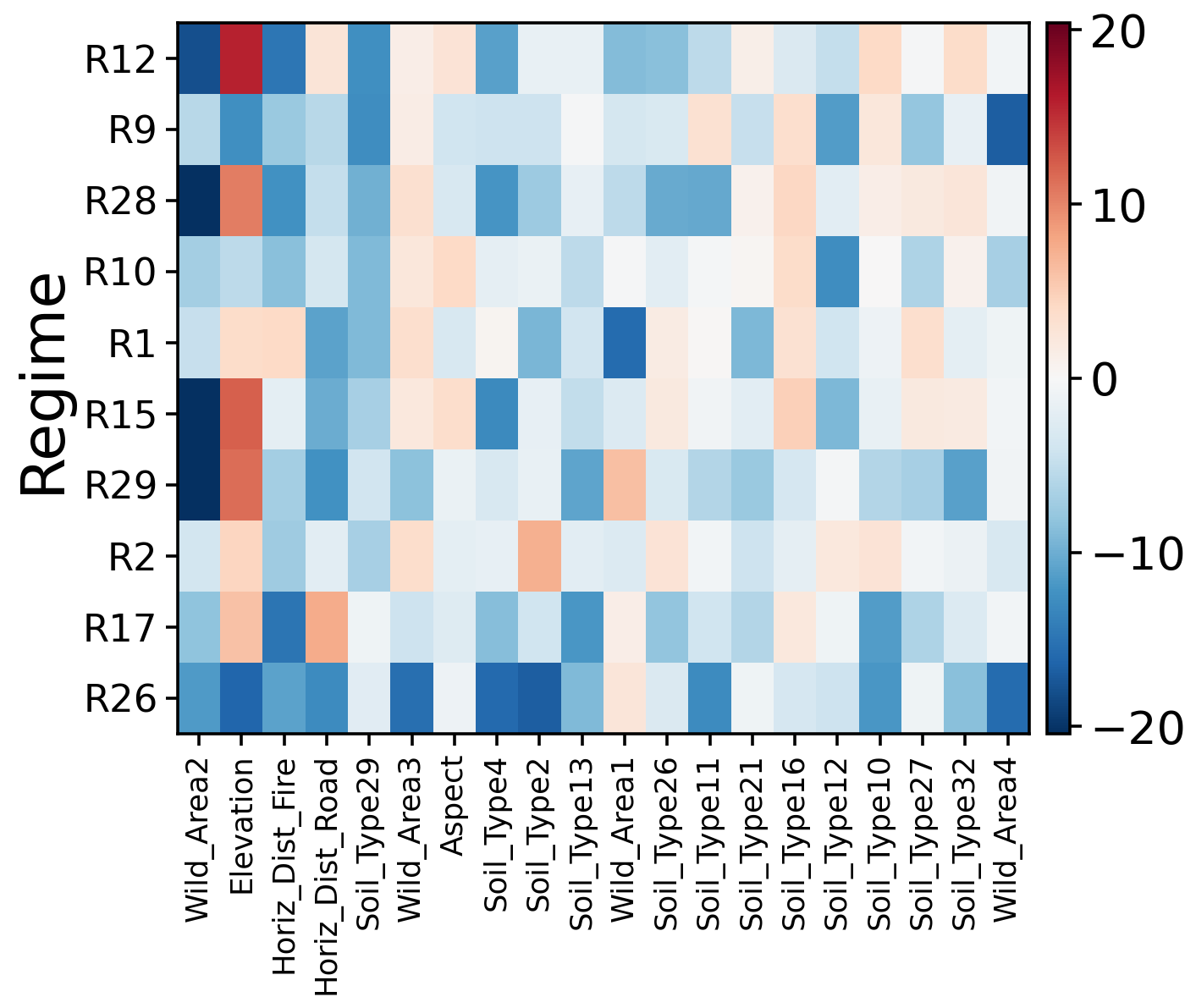}\label{fig:cov:expert}\label{fig:cov:gate}}%
\hfill
\subfloat[Bike Sharing]{\includegraphics[width=0.33\textwidth]{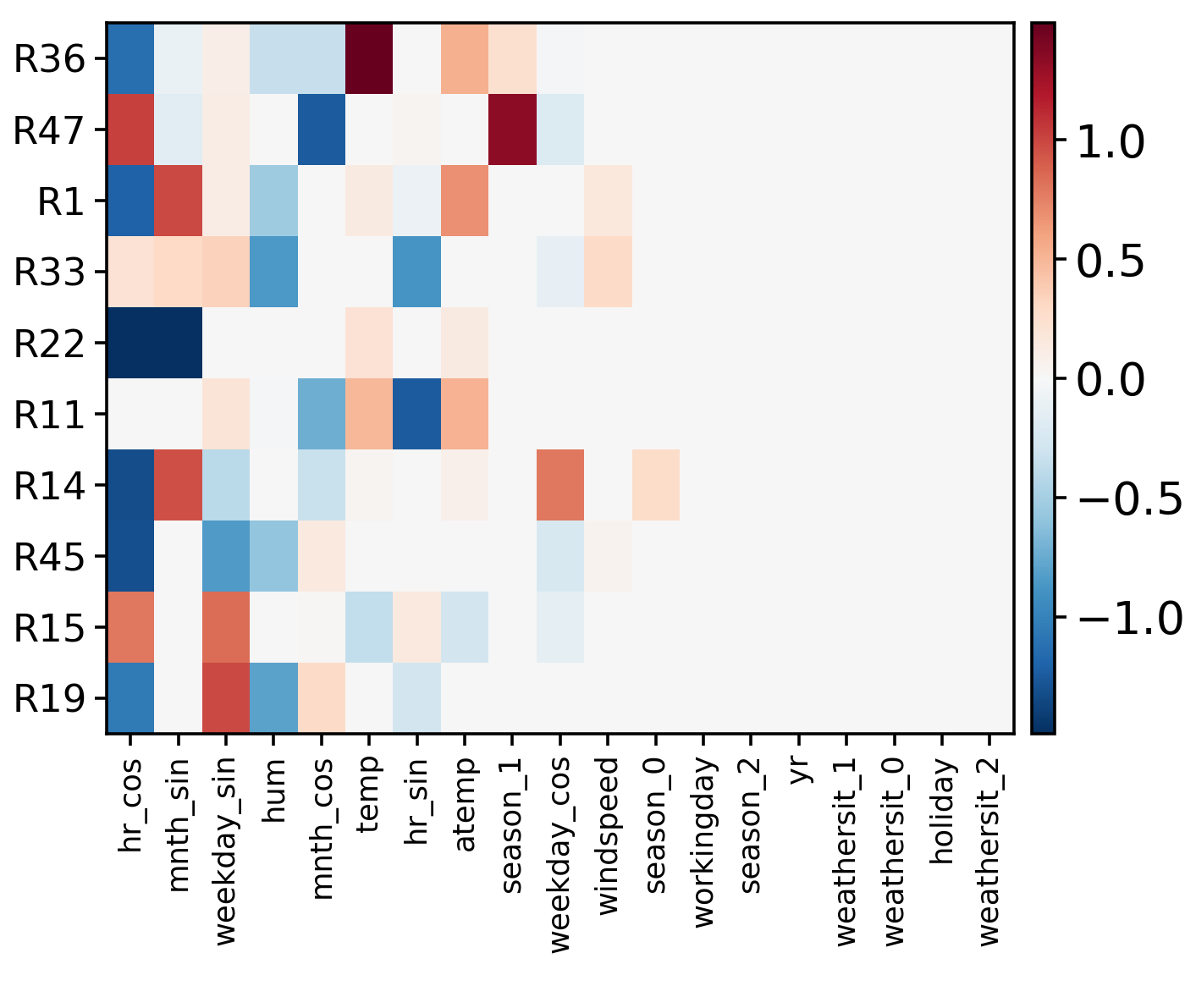}\label{fig:bike:expert}\label{fig:bike:gate}}%
\hfill
\subfloat[California Housing]{\includegraphics[width=0.33\textwidth]{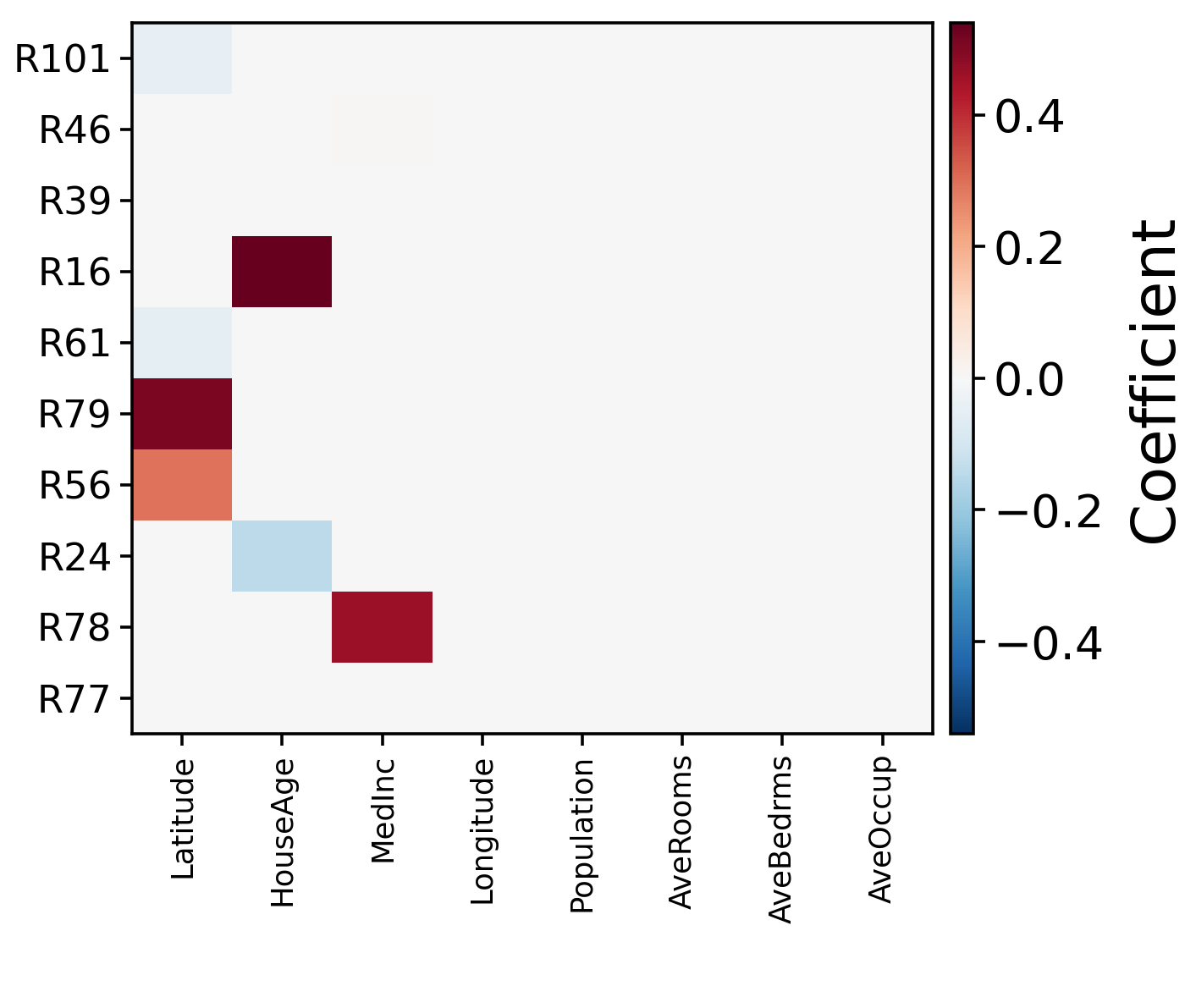}\label{fig:cal:expert}\label{fig:cal:gate}}
\endgroup
\caption{Single-run local expert coefficients (top) and explanatory gate coefficients (bottom) on Covertype~(a), Bike Sharing~(b), and California Housing~(c). Covertype experts show class-0 logit coefficients. Conventions follow Fig.~\ref{fig:img:gate} of the main paper.}
\label{fig:tab:coef}
\end{figure*}

\medskip

\noindent\textbf{Local Expert Coefficients.}
The top row of Fig.~\ref{fig:tab:coef} shows that the same feature can have coefficients of opposite signs across regimes. These sign changes reflect distinct local prediction rules. For example, on Covertype, \texttt{Soil\_Type29} has a coefficient of \(-7.77\) in regime~12 and \(+7.76\) in regime~15. On Bike Sharing, \texttt{hr\_cos} has a coefficient of \(+1.90\) in regime~36 and \(-1.36\) in regime~33. On California Housing, the coefficient of \texttt{HouseAge} changes from \(+0.74\) in regime~16 to \(-0.33\) in regime~77. As shown in Table~\ref{tab:4}, local experts retain most or all features in the largest regime, indicating limited feature sparsity.

\medskip

\noindent\textbf{Explanatory Gate Coefficients.}
The bottom row of Fig.~\ref{fig:tab:coef} shows regime-assignment rules. On Covertype, the coefficient of \texttt{Elevation} is positive in regime~29 and negative in regime~26, so higher \texttt{Elevation} favors regime~29 over regime~26 in the explanatory gate. On Bike Sharing and California Housing, many cells are near white, indicating greater feature sparsity in the explanatory gate. These patterns agree with Table~\ref{tab:4}. In the largest regime, the gate retains an average of \(54\) out of \(54\) features on Covertype, \(11.9\) out of \(19\) on Bike Sharing, and \(1.4\) out of \(8\) on California Housing.

\begin{table}[t]
\centering
\begingroup
\setlength{\tabcolsep}{3pt}
\renewcommand{\arraystretch}{1.08}
\caption{Feature-selection statistics for the main results in Tables~\ref{tab:1} and~\ref{tab:2}, reported as mean (std) over five runs. \emph{Selected features} counts features with absolute coefficient $\ge 10^{-3}$ in the largest regime by mean training-set size, separately for the explanatory gate and the local expert. \emph{Total features} is the representation dimension $p$.}
\label{tab:4}
\begin{adjustbox}{max width=\columnwidth}
\begin{tabular}{@{}lccc@{}}
\hline
\rowcolor{gray!18}
 & \multicolumn{2}{c}{Selected features} & \\
\cline{2-3}
\rowcolor{gray!18}
\multirow{-2}{*}{Dataset}
 & Explanatory gate & Local expert & \multirow{-2}{*}{Total features} \\
\hline
Bike Sharing & 11.9 (5.4) & 17.3 (1.6) & 19 \\
Cal.\ Housing & 1.4 (2.6) & 8.0 (0.0) & 8 \\
Covertype & 54.0 (0.0) & 54.0 (0.0) & 54 \\
MNIST & 2,286.6 (9.8) & 2,290.1 (5.6) & 2,304 \\
Flowers & 237.3 (127.1) & 359.9 (49.6) & 384 \\
CIFAR-10 & 381.9 (0.9) & 383.3 (0.8) & 384 \\
\hline
\end{tabular}
\end{adjustbox}
\endgroup
\end{table}

\medskip

\noindent\textbf{Local Expert Contribution Maps.}
Fig.~\ref{fig:supp:heat}\subref{fig:6a} shows how the WSSN's four convolutional channels capture different parts of MNIST digits. Since the feature extractor has only four channels, the Top-4 and All-channel sums coincide. For the ``3'', channel~2 emphasizes the bends, channels~1 and~3 follow the horizontal strokes, and channel~0 highlights the left side. For the ``7'', channel~1 highlights the top bar, channel~0 follows the vertical stroke, and channel~2 emphasizes the inner corner. For Flowers in Fig.~\ref{fig:supp:heat}\subref{fig:6b}, all four selected filter groups emphasize the radial structure of the dandelion seed head in row~1, a pattern also visible in both aggregate maps. For the daisy in row~3, the selected filter groups trace the white petals and yellow disc.

\begin{figure*}[t]
\centering
\subfloat[MNIST]{\includegraphics[width=0.49\textwidth]{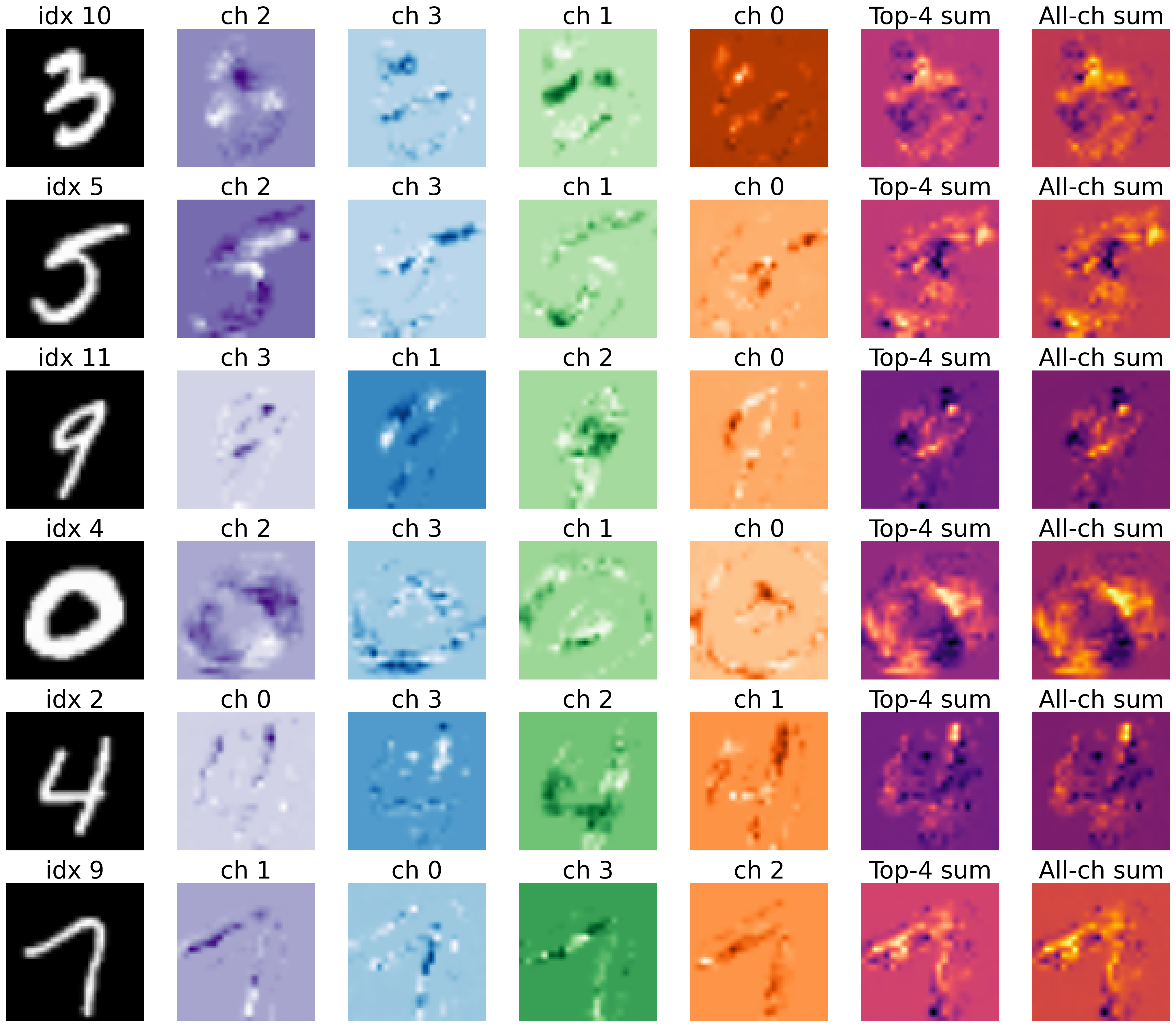}\label{fig:6a}}
\hfill
\subfloat[Flowers]{\includegraphics[width=0.49\textwidth]{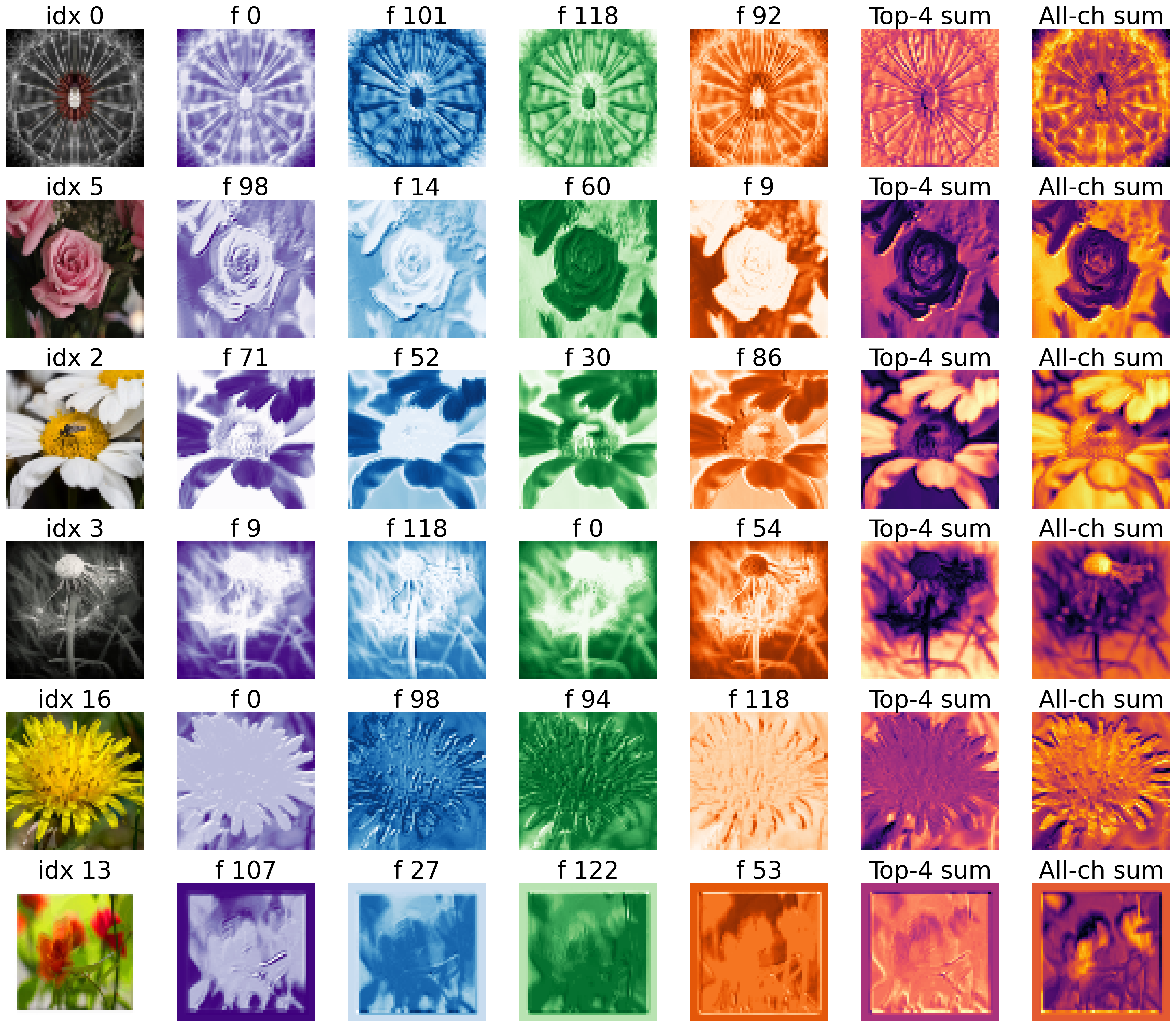}\label{fig:6b}}
\caption{Local expert contribution maps on MNIST~(a) and Flowers~(b). For MNIST, \texttt{ch} $j$ denotes convolutional channel $j$. Other conventions follow Fig.~\ref{fig:cifar:heat}.}
\label{fig:supp:heat}
\end{figure*}

\section{Assumptions and Auxiliary Results for Section~\ref{sec:theory:joint} of the Main Paper}
\label{sec:supp:assumptions}
\setcounterref{assumption}{count:assumption}
\setcounterref{proposition}{count:proposition}
\setcounterref{theorem}{count:theorem}
\renewcommand{\theassumption}{\getrefnumber{sec:theory}.\arabic{assumption}}
\renewcommand{\theproposition}{\getrefnumber{sec:theory}.\arabic{proposition}}
\renewcommand{\thetheorem}{\getrefnumber{sec:theory}.\arabic{theorem}}

We continue to use the notation and idealized regression setting of Section~\ref{sec:theory:joint} of the main paper. Below, we state the assumptions and the LASSO support-recovery bound used in that analysis.
All probabilities in this support-recovery analysis are conditional on the training inputs.

\begin{assumption}[Restricted Eigenvalue Condition]
\label{ass:re}

Consider regime $k$ with true support $S_k = \operatorname{supp}(\beta_k^*)$. Let $\Delta \in \mathbb{R}^p$ denote a generic vector, and in particular the estimation error $\Delta = \hat\beta_k - \beta_k^*$. For any index set $A \subseteq \{1,\dots,p\}$, let $\Delta_A$ denote the restriction of $\Delta$ to coordinates in $A$.

We assume that $X_k \in \mathbb{R}^{n_k \times p}$ satisfies the following condition: there exists a constant $\kappa_k > 0$ such that
\begin{equation*}
\frac{1}{n_k} \|X_k \Delta\|_2^2
\;\ge\;
\kappa_k \|\Delta_{S_k}\|_2^2
\end{equation*}
for all vectors $\Delta$ satisfying the cone constraint
\begin{equation*}
\|\Delta_{S_k^c}\|_1
\le
3 \|\Delta_{S_k}\|_1.
\end{equation*}
We also assume $\max_j\|X_{k,j}\|_2/\sqrt{n_k}\le1$, where $X_{k,j}$ is column $j$ of $X_k$.
\end{assumption}

\begin{assumption}[Irrepresentable Condition]
\label{ass:irrep}
Let $\Sigma_k = \frac{1}{n_k}X_k^\top X_k$ denote the within-regime Gram matrix. There exists $\gamma_k\in(0,1]$ such that
\begin{equation*}
\big\|\Sigma_{k,S_k^cS_k}\,\Sigma_{k,S_kS_k}^{-1}\big\|_\infty
\;\le\;
1-\gamma_k,
\end{equation*}
where, for a matrix $A$, $\|A\|_\infty=\max_i\sum_j|A_{ij}|$ is the $\ell_\infty$ operator norm (maximum absolute row sum).
\end{assumption}

\begin{assumption}[Beta-Min Condition]
\label{ass:betamin}
For each regime $k$,
\begin{equation*}
\min_{j \in S_k} |\beta_{k,j}^*|
\ge C_k \lambda
\end{equation*}
for $C_k>\|\Sigma_{k,S_kS_k}^{-1}\|_\infty+4/\sqrt{\kappa_k}$, where $\lambda$ is the LASSO regularization hyperparameter.
\end{assumption}

These conditions are standard in high-dimensional sparse regression. The restricted eigenvalue condition rules out degenerate design directions, the irrepresentable condition limits dependence between on-support and off-support variables, and the beta-min condition ensures that non-zero coefficients are large enough to be detected. Under Gaussian noise, these assumptions and the penalty choice below yield the support-recovery bound in Theorem~\ref{thm:wainwright} (see, e.g.,~\cite{Wainwright2009Sharp,Buehlmann2011Statistics}).

Since the regime selector is estimated, we additionally quantify how accurately it recovers the regime assignments.

\begin{assumption}[Regime Learning Accuracy]
\label{ass:regime_learning}
Let $\hat g$ be a regime selector learned from $\mathcal D_n$. For the sequence of training inputs considered here, there exists a sequence $\eta_n\to 0$ such that
\begin{equation*}
\sup_{x\in\mathcal X}
\mathbb P_{\mathcal D_n}\big(\hat g(x)\neq g^*(x)\big)\le \eta_n.
\end{equation*}
\end{assumption}

\begin{theorem}[LASSO support recovery; adapted from Wainwright \cite{Wainwright2009Sharp}]
\label{thm:wainwright}

Consider the linear model within the true regime $k^*$:
\begin{equation*}
y = X\beta^* + \varepsilon,
\qquad
\varepsilon \sim \mathcal N(0,\sigma^2 I_n),
\end{equation*}
where $n=n_{k^*}$, $\beta^*=\beta_{k^*}^*$, $\sigma>0$, and $X=X_{k^*}\in\mathbb{R}^{n\times p}$ is fixed.
The LASSO estimator minimizes $\|y-Xb\|_2^2/(2n)+\lambda\|b\|_1$ over $b\in\mathbb R^p$.

Suppose Assumptions~\ref{ass:re}--\ref{ass:betamin} hold for this regime, with $\lambda=A_\lambda\sigma\sqrt{\log p/n}$ and $A_\lambda\ge4/\gamma_{k^*}$.

Then there exist constants $c_1,c_2>0$, depending only on the design constants, such that
\begin{equation}\label{eq:wainwright}
\mathbb{P}\big(\hat S_{k^*} \neq S_{k^*}\big)
\le
c_1 e^{-c_2 n_{k^*}\lambda^2/\sigma^2}.
\end{equation}

\end{theorem}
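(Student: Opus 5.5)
Following Wainwright's primal--dual witness (PDW) argument, I would show that, with the stated probability, the LASSO has a unique solution whose support equals $S=S_{k^*}$. Write $n=n_{k^*}$, $\Sigma=X^\top X/n$, and $w=X^\top\varepsilon/n$.

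\textbf{Step 1 (restricted problem).} Solve the LASSO restricted to coordinates in $S$, setting $\check b_{S^c}=0$. Since $\Sigma_{SS}$ is invertible, the restricted problem is strictly convex. This is guaranteed because Assumption~\ref{ass:re} applied to vectors supported on $S$ gives $\lambda_{\min}(\Sigma_{SS})\ge\kappa_{k^*}$. The restricted minimizer satisfies
\[
\check b_S-\beta^*_S=\Sigma_{SS}^{-1}\bigl(X_S^\top\varepsilon/n-\lambda\check z_S\bigr),
\]
where $\check z_S\in\partial\|\check b_S\|_1$.

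\textbf{Step 2 (dual candidate on $S^c$).} Define $\check z_{S^c}$ through the stationarity condition on $S^c$:
\[
\check z_{S^c}=\Sigma_{S^cS}\Sigma_{SS}^{-1}\check z_S+\frac{1}{\lambda}X_{S^c}^\top\Pi_{S^\perp}\varepsilon/n,
\]
where $\Pi_{S^\perp}$ is the projection onto the orthogonal complement of the column space of $X_S$. The standard PDW lemma says that if $\|\check z_{S^c}\|_\infty<1$, then every LASSO solution is supported on $S$ and the solution is unique. Assumption~\ref{ass:irrep} bounds the first term by $1-\gamma$. Strict dual feasibility therefore reduces to the event
\[
\max_{j\in S^c}\bigl|X_j^\top\Pi_{S^\perp}\varepsilon/n\bigr|<\frac{\gamma\lambda}{2}.
\]
Each of these is a centered Gaussian with variance at most $\sigma^2/n$, because the columns are normalized and the projection is a contraction. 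A union bound over at most $p$ coordinates then gives failure probability at most $2p\exp\bigl(-n\gamma^2\lambda^2/(8\sigma^2)\bigr)$.

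\textbf{Step 3 (sign consistency / no false exclusions).} Using the display in Step 1,
\[
\|\check b_S-\beta^*_S\|_\infty\le\bigl\|\Sigma_{SS}^{-1}X_S^\top\varepsilon/n\bigr\|_\infty+\lambda\|\Sigma_{SS}^{-1}\|_\infty.
\]
The first term is a Gaussian vector whose coordinates have variance at most $\sigma^2/(n\kappa)$, since $\Sigma_{SS}^{-1}$ has largest eigenvalue at most $1/\kappa$. A union bound over $s$ coordinates at threshold $4\lambda/\sqrt{\kappa}$ gives failure probability at most $2s\exp\bigl(-8n\lambda^2/\sigma^2\bigr)$. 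On the complementary event, Assumption~\ref{ass:betamin} with $C_{k^*}>\|\Sigma_{SS}^{-1}\|_\infty+4/\sqrt{\kappa}$ ensures that no coordinate of $\check b_S$ vanishes. Hence $\hat S=S$.

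\textbf{Step 4 (absorbing the polynomial prefactors).} The two failure probabilities carry prefactors $p$ and $s$, which must be absorbed into an exponential of the form $c_1e^{-c_2n\lambda^2/\sigma^2}$. The penalty choice gives $n\lambda^2/\sigma^2=A_\lambda^2\log p$, so
\[
p=\exp\!\Bigl(\frac{n\lambda^2}{\sigma^2A_\lambda^2}\Bigr).
\]
Combining, the prefactor $p$ costs $1/A_\lambda^2$ in the exponent, leaving
\[
\exp\!\Bigl(-\frac{n\lambda^2}{\sigma^2}\Bigl(\frac{\gamma^2}{8}-\frac{1}{A_\lambda^2}\Bigr)\Bigr).
\]
With $A_\lambda\ge4/\gamma$ we have $1/A_\lambda^2\le\gamma^2/16$, so the bracket is at least $\gamma^2/16>0$. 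This yields $c_2=\gamma^2/16$ (after handling the Step 3 term similarly) and $c_1=4$, both depending only on design constants.

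\textbf{Main obstacle.} I expect the most delicate step to be this constant bookkeeping. In particular, the projection-noise variance bound and the threshold $\gamma\lambda/2$ must match $A_\lambda\ge4/\gamma$ exactly. If they do not, I would enlarge $c_1$ or shrink $c_2$, which the statement permits.
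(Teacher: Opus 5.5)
Your proof is correct, but it takes a different route from the paper. The paper carries out no derivation. It obtains the bound by citing Theorem~1 of Wainwright (2009), applied to the rescaled problem $y/\sigma$ with penalty $\lambda/\sigma$ so that the noise has unit variance. It leaves implicit how the hypotheses match: the minimum-eigenvalue condition on $\Sigma_{SS}$, the lower bound on the penalty, and Wainwright's $\ell_\infty$ error threshold. After rescaling, that threshold becomes $\lambda(\|\Sigma_{SS}^{-1}\|_\infty+4/\sqrt{\kappa})$, which is the beta-min constant.

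You instead reproduce the primal--dual witness argument that underlies that theorem. The citation is shorter. Your version is self-contained and makes explicit what the citation hides:
\begin{itemize}
\item From the restricted eigenvalue assumption you use only two things: the consequence $\lambda_{\min}(\Sigma_{SS})\ge\kappa$, obtained by applying it to $S$-supported vectors, and the column normalization. The cone condition plays no further role.
\item The $4/\sqrt{\kappa}$ term in the beta-min condition is exactly your Gaussian threshold $4\lambda/\sqrt{\kappa}$, measured against a coordinate standard deviation of at most $\sigma/\sqrt{n\kappa}$. This is why no $\sigma$ appears there.
\item The penalty choice matters only for absorbing the union-bound factor $p=\exp\bigl(n\lambda^2/(\sigma^2A_\lambda^2)\bigr)$. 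Your computation shows this already works whenever $A_\lambda>2\sqrt{2}/\gamma$. The stated $A_\lambda\ge 4/\gamma$ is what lets you take $c_2=\gamma^2/16$, depending only on $\gamma$ and not on $A_\lambda$.
\end{itemize}
Your Step~4 bookkeeping is right and gives $c_1=4$. This includes absorbing the Step~3 term via $s\le p$ and $8-1/A_\lambda^2\ge\gamma^2/16$.
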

Here $\hat S_{k^*}$ is the support estimated from the $n_{k^*}$ oracle training samples in regime $k^*$. The bound follows from Theorem~1 of~\cite{Wainwright2009Sharp} applied to $y/\sigma$ with penalty $\lambda/\sigma$.

\begin{proposition}[Oracle End-to-End Support-Recovery Bound]
\label{prop:joint_full}
Fix a deterministic signal $x$ with true regime $k^* = g^*(x)$ and margin $m(x)>0$, and let $\hat k=\hat g(x+\delta)$ denote the predicted regime. Each expert is fitted on the oracle index set $\mathcal I_k^*$ defined in Section~\ref{sec:theory:joint} of the main paper. The training data $\mathcal D_n$ are independent of the test perturbation $\delta$.
Under Assumptions~\ref{ass:gating_setup}--\ref{ass:gate_norm} of the main paper, Assumptions~\ref{ass:re}--\ref{ass:regime_learning} above, and the stated penalty choice, Proposition~\ref{prop:sharp_stability} and Theorem~\ref{thm:wainwright} give
\begin{equation}\label{eq:joint_full}
\begin{aligned}
\mathbb{P}\big(\hat S_{\hat k} \neq S_{k^*}\big)
&\le
\underbrace{\eta_n}_{\text{regime learning}}
+
\underbrace{c_1 e^{-c_2 n_{k^*}\lambda^2/\sigma^2}}_{\text{support recovery}} \\
&\quad+
\underbrace{\exp\!\left(
-\frac{1}{2}
\Big(
\frac{m(x)}{2\sigma_g B_v}
-
\sqrt{2\log(K-1)}
\Big)_+^2
\right)}_{\text{perturbation stability}}.
\end{aligned}
\end{equation}
Here $c_1,c_2>0$ are the constants in Theorem~\ref{thm:wainwright}. The probability is over training noise, learner randomness, and the independent perturbation $\delta$, conditional on the training inputs and with $x$ fixed.

\end{proposition}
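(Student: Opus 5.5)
The argument is a union bound over three failure events. Any outcome with $\hat S_{\hat k}\neq S_{k^*}$ lies in at least one of the following:
\[
E_1=\{\hat g(x+\delta)\neq g^*(x+\delta)\},\quad
E_2=\{g^*(x+\delta)\neq k^*\},\quad
E_3=\{\hat S_{k^*}\neq S_{k^*}\}.
\]
To see this, suppose none of them occurs. Then $\hat k=\hat g(x+\delta)=g^*(x+\delta)=k^*$, so $\hat S_{\hat k}=\hat S_{k^*}=S_{k^*}$. This gives
\[
\mathbb P(\hat S_{\hat k}\neq S_{k^*})\le \mathbb P(E_1)+\mathbb P(E_2)+\mathbb P(E_3).
\]

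Each term is then bounded separately.

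\textbf{Term $E_2$.} This event depends only on $\delta$, with $x$ deterministic. Proposition~\ref{prop:sharp_stability} (Eq.~\eqref{eq:14}) therefore gives the perturbation-stability term directly.

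\textbf{Term $E_3$.} This event depends only on the training noise in regime $k^*$, because the experts are fit on the oracle index sets $\mathcal I_k^*$ and these sets are deterministic given the training inputs. Theorem~\ref{thm:wainwright}, together with Assumptions~\ref{ass:re}--\ref{ass:betamin} and the penalty $\lambda=A_\lambda\sigma\sqrt{\log p/n_{k^*}}$, bounds it by $c_1e^{-c_2n_{k^*}\lambda^2/\sigma^2}$.

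\textbf{Term $E_1$.} I condition on $\delta$ and use that $\mathcal D_n$ is independent of $\delta$. Then
\[
\mathbb P(E_1)=\mathbb E_\delta\bigl[\mathbb P_{\mathcal D_n}\bigl(\hat g(x')\neq g^*(x')\bigr)\big|_{x'=x+\delta}\bigr].
\]
The inner probability is at most $\sup_{x'\in\mathcal X}\mathbb P_{\mathcal D_n}(\hat g(x')\neq g^*(x'))\le\eta_n$ by Assumption~\ref{ass:regime_learning}, since $x+\delta\in\mathcal X=\mathbb R^p$. Hence $\mathbb P(E_1)\le\eta_n$.

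This step is where I expect the only real subtlety. The supremum in Assumption~\ref{ass:regime_learning} is taken pointwise over deterministic inputs, but here the evaluation point $x+\delta$ is random. Independence of $\mathcal D_n$ and $\delta$, via Fubini/Tonelli, lets me freeze $\delta$ first and apply the uniform bound; I need only check that the event is jointly measurable.

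A related point is that the learner randomness and the training noise may be dependent, since $\hat g$ is built from $\mathcal D_n$. This does not matter: the union bound requires no independence between $E_1$ and $E_3$, and each marginal bound holds conditional on the training inputs, as the statement specifies. Summing the three bounds yields Eq.~\eqref{eq:joint_full}.
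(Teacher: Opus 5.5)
Your proposal is correct and matches the paper's proof. It uses the same three-event decomposition: learned-selector error at $x+\delta$, a switch of the true selector under $\delta$, and LASSO failure in regime $k^*$. These are bounded respectively by Assumption~\ref{ass:regime_learning} (using independence of $\mathcal D_n$ and $\delta$), Proposition~\ref{prop:sharp_stability}, and Theorem~\ref{thm:wainwright}, and then combined with a union bound. The only differences are cosmetic: you take $E_3=\{\hat S_{k^*}\neq S_{k^*}\}$ directly rather than first intersecting with $\{\hat k=k^*\}$ and then enlarging, and you spell out the Fubini/conditioning step that the paper states in one line.
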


\section{Proofs}\label{sec:proofs}
\subsection{Proof of Proposition~\ref{prop:sharp_stability}}
\label{sec:proof_sharp_stability}
Since $k^*=g^*(x)$ and
\begin{equation*}
m(x)=\min_{j\neq k^*}(v_{k^*}^*-v_j^*)^\top x,
\end{equation*}
misassignment implies that some competitor $j\neq k^*$ satisfies

\begin{equation*}
v_j^{*\top}(x+\delta)\ge v_{k^*}^{*\top}(x+\delta).
\end{equation*}
With $a_j=v_j^*-v_{k^*}^*$, this implies

\begin{equation*}
a_j^\top \delta \ge m(x),
\end{equation*}
so
\begin{equation*}
\{g^*(x+\delta)\neq k^*\}
\subseteq
\{M_{k^*}(\delta)\ge m(x)\}.
\end{equation*}

The function $\delta\mapsto M_{k^*}(\delta)$ is $L$-Lipschitz with

\begin{equation*}
L = \max_{j\neq k^*}\|a_j\|_2,
\end{equation*}
since each map $\delta\mapsto a_j^\top\delta$ is linear and $\|a_j\|_2$-Lipschitz.

By the Borell--Tsirelson--Ibragimov--Sudakov inequality for Lipschitz functions of Gaussian vectors~\cite{Vershynin2018High},
\begin{equation*}
\mathbb{P}_\delta\!\left(
M_{k^*}(\delta)
-
\mathbb{E}_\delta M_{k^*}(\delta)
\ge t
\right)
\le
\exp\!\left(
-\frac{t^2}{2\sigma_g^2 L^2}
\right),
\end{equation*}
for all $t\geq 0$.

Set
\begin{equation*}
t = m(x)-\mathbb{E}_\delta M_{k^*}(\delta),
\end{equation*}
and apply the concentration inequality for $t\ge 0$ to obtain Eq.~\eqref{eq:13}. For $t<0$, the positive-part term is zero and the right-hand side of Eq.~\eqref{eq:13} equals $1$, so the bound follows from the trivial probability bound.

The stated bound on $\mathbb{E}_\delta M_{k^*}(\delta)$ follows from a standard maximal inequality for sub-Gaussian random variables~\cite{Vershynin2018High}.

Finally,
\begin{equation*}
\|a_j\|_2
=
\|v_j^*-v_{k^*}^*\|_2
\le
\|v_j^*\|_2+\|v_{k^*}^*\|_2
\le 2B_v.
\end{equation*}
Substituting these two bounds into Eq.~\eqref{eq:13} yields Eq.~\eqref{eq:14}.

\subsection{Proof of Proposition~\ref{prop:joint_full}}
\label{sec:proof_joint_full}
We decompose the event $\{\hat S_{\hat k} \neq S_{k^*}\}$ (the estimated support in the predicted regime differs from the true support in the true regime) into three error events:
\begin{align*}
\{\hat S_{\hat k} \neq S_{k^*}\}
&\subseteq
\underbrace{\{\hat g(x+\delta)\neq g^*(x+\delta)\}}_{\text{(i) learned regime selector errors}} \notag\\
&\quad\cup\;
\underbrace{\{g^*(x+\delta)\neq g^*(x)\}}_{\text{(ii) true regime selector switches}} \notag\\
&\quad\cup\;
\underbrace{\{\hat S_{k^*}\neq S_{k^*},\, \hat k=k^*\}}_{\text{(iii) LASSO fails in the correct regime}}.
\end{align*}

We bound the three probabilities as follows:

\begin{itemize}
    \item Assumption~\ref{ass:regime_learning} bounds the learned selector's error by $\eta_n$ uniformly over inputs. Independence of the training randomness and $\delta$ lets us apply this conditional bound at $x+\delta$.
    \item Proposition~\ref{prop:sharp_stability} bounds the probability that the true regime switches under $\delta$.
    \item The third event is contained in $\{\hat S_{k^*}\neq S_{k^*}\}$. For the oracle-trained expert in regime $k^*$, Theorem~\ref{thm:wainwright} bounds its probability by $c_1\exp(-c_2 n_{k^*}\lambda^2/\sigma^2)$.
\end{itemize}

The union bound yields Eq.~\eqref{eq:joint_full}.

\putbib[refs]

\clearpage
\end{bibunit}
\end{document}